\newtheorem{definition}{Definition}
\newtheorem{theorem}{Theorem}
\newtheorem{lemma}{Lemma}
\newtheorem{remark}{Remark}
 \newcommand{\abs}[1]{\left\vert #1 \right\vert}
  \DeclareMathOperator{\divo}{div}
  \def\be{\begin{equation}}
\def\ee{\end{equation}}
\newcommand {\R} {\mathbb{R}}
\begin{document}

\title{Adaptive diffusion constrained total variation scheme with application to `cartoon + texture + edge' image decomposition\thanks{An earlier version available as Preprint Number 13-54, Department of Mathematics, Universidade de Coimbra, Portugal. \href{http://www.mat.uc.pt/preprints/ps/p1354.pdf}{http://www.mat.uc.pt/preprints/ps/p1354.pdf}}}

\author{Juan C. Moreno\thanks{Corresponding author. IT, Department of Computer Science, University of Beira Interior, 6201--001, Covilh\~{a}, Portugal. E-mail: jmoreno@ubi.pt.} \and V. B. Surya Prasath\thanks{Department of Computer Science, University of Missouri-Columbia, MO 65211 USA. E-mail: prasaths@missouri.edu}\and D. Vorotnikov\thanks{Department of Mathematics, University of Coimbra, Portugal} \and  Hugo Proen\c{c}a\thanks{IT, Department of Computer Science, University of Beira Interior, 6201--001, Covilh\~{a}, Portugal.} \and K. Palaniappan\thanks{Department of Computer Science, University of Missouri-Columbia, MO 65211 USA.}} 

\date{}

\maketitle
%%%--------------------------------------------------------------------------------------
\begin{abstract}
We consider an image decomposition model involving a variational (minimization) problem and an evolutionary partial differential equation (PDE). We utilize a linear inhomogenuous diffusion constrained and weighted total variation (TV) scheme for image adaptive decomposition. An adaptive weight along with TV regularization splits a given image into three components representing the geometrical (cartoon), textural (small scale - microtextures), and edges (big scale - macrotextures). We study the wellposedness of the coupled variational-PDE scheme along with an efficient numerical scheme based on Chambolle's dual minimization~\cite{Ch04}.  We provide extensive experimental results in cartoon-texture-edges decomposition, and denoising as well compare with other related variational, coupled anisotropic diffusion PDE based methods.

\noindent\textbf{Keywords}: Image decomposition, total variation, linear diffusion, adaptive weights, multi-scale, denoising.
\end{abstract}

%%%--------------------------------------------------------------------------------------
%%%--------------------------------------------------------------------------------------
\section{Introduction}\label{sec:intro}

Decomposing an image into meaningful components is an important and challenging inverse problem in image processing. Image denoising is a very well known example of image decomposition. In such a decomposition, the given image is assumed to be under the influence of noise, and the main purpose is to remove noise without destroying edges. This denoising task can be regarded as a decomposition of the image into noise-free signal and noise part. There exist various methodologies for image restoration, where variational minimization and partial differential equation (PDE) are two of the most popular ones~\cite{AK06}. 

Another important example of image decomposition is based on its smooth and texture components using the total variation (TV) regularization method. This was first studied by Rudin et al~\cite{RO92} for image restoration. The TV regularization can be written as an unconstrained minimization,
\begin{eqnarray}\label{E:tvreg}
\min_u E_{TV}(u) = \int_{\Omega} \abs{\nabla u}\,dx +
\frac{1}{2\alpha}\,\int_{\Omega}\abs{u-f}^2\,dx.
\end{eqnarray}
The parameter $\alpha>0$ balances the fidelity term with respect to the TV regularization. Let the given image be written as $f = u+v$, where the function $u$ models well structured homogeneous regions (cartoon) and $v$ defines oscillating patterns such as noise and texture. 
~\cite{Meyer01} established the scale connection property of TV for image
decomposition, namely that the $\alpha$  parameter is related to the scale of objects in the image. In particular, Meyer proved that if $f$ is a characteristic function and is sufficiently small with respect to a norm ($\abs{f}_* \leq 1/2\alpha$), then the minimization of the TV regularization~\eqref{E:tvreg} gives $u=0$ and $f=0$, which is counter-intuitive since one expects $u=f$ and $v=0$. Thus, Meyer proposed to use dual of the closure in the bounded variation ($BV$) space of the Schwartz class for non-trivial solutions, we refer to~\cite{Meyer01} for more details. This crucial fact has been exploited by~\cite{VOsher03} to obtain numerical approximations to the Meyer decomposition model, see also related early works~\cite{AABChambolle05,AujolGilboa06}.  

Image denoising methods in general and variational PDE models in particular provide natural decomposition of images, see~\cite{AK06} for a review. Image smoothing and denoising literature is rich in terms of different methodologies utilized and current state of the art techniques~\cite{BuadesColl06,DEgiazarian07,PizarroMrazek10} have pushed the envelope of optimal improvements~\cite{ChatterjeeTIP2010}. Next we provide a brief overview different approaches utilized in image decomposition.

%%%%%------- Figure 0 --------------------------------------------------------------------------------------
\begin{figure*}
\centering
	\subfigure[Input]{\includegraphics[width=3.8cm, height=3.8cm]{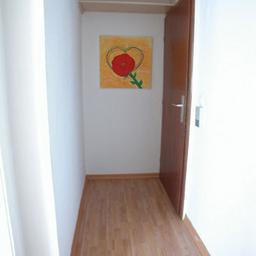}}
	\subfigure[Cartoon ($u$)]{\includegraphics[width=3.8cm, height=3.8cm]{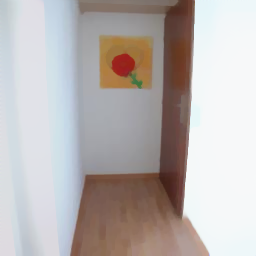}}
	\subfigure[Texture ($v$)]{\includegraphics[width=3.8cm, height=3.8cm]{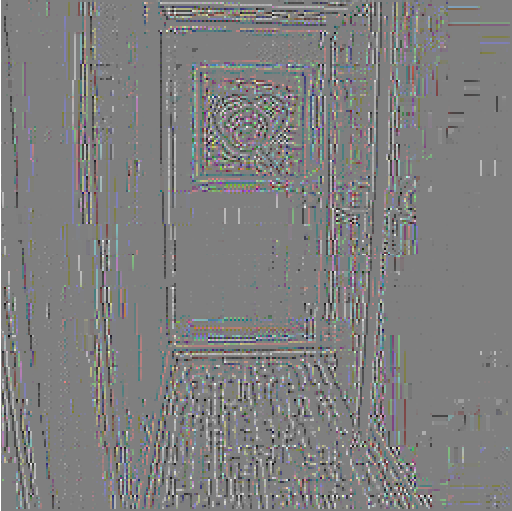}}
	\subfigure[(Pseudo) Edges ($w$)]{\includegraphics[width=3.8cm, height=3.8cm]{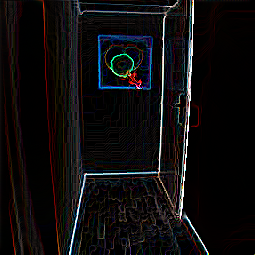}}

	\caption{MIT LabelMe decomposition result on an indoor image. Our scheme can obtain piecewise constant cartoon component along with texture and edges for better separation of basic shape elements present in a scene. Better viewed online and zoomed in.}\label{fig:label}
\end{figure*}
%%%%%------- Figure 0 --------------------------------------------------------------------------------------

\subsection{Literature}\label{ssec:lit}

While the literature on cartoon and texture decomposition is extensive by now, three lines of inquiries are closely related to the work presented here.
\begin{itemize}
\item \textbf{Different function spaces for modeling the textures and discrete approximations:}
Following Meyer's seminal work~\cite{Meyer01}, various authors have considered different function spaces to model textures accurately~\cite{AujolChambolle05,LVese05,AujolGilboa06,OSVese03,GLMVese07,Haddad07,LLVese08,LLVese09,THe13}. 

\item \textbf{$L^1$ fidelity based TV regularization models:}
In another related direction the fidelity term can be made $L^1$ and is proven to provide contrast preserving restorations. We refer to~\cite{CE05} for the geometric motivation and fine properties of $L^1$ fidelity term, see also~\cite{DAGousseau09}. Applications of $L^1$-TV models for cartoon plus texture decomposition are considered as well~\cite{Ni04b,YGOsher05,YGOsher06,Haddad07,DAVese10,PTadmor11}.

\item \textbf{Different Regularizers instead of TV:}
The well-known staircasing property of the TV regularization has been analyzed by many in the past~\cite{NI04,CasellesC10,CasellesC11} and various modifications have also been studied. To avoid staircasing and blocky restoration results from the classical TV model there have been a variety of methods studied previously. Weighted and adaptive~\cite{SC96,St97,PSc10,PSd10,PSe12}, nonlocal-TV~\cite{LMVese10}, and higher order~\cite{CMMulet00,BP10,HJacob12,PSchonlieb13}.
\end{itemize}

Out of other related decomposition models we mention multi-scale parameter based models~\cite{TadmorNezzar04,Gilles12,THe13} which progressively capture texture. Extension to multichannel images in general, RGB images in particular, is also an important area of research~\cite{BC08}. Note that all of the above mentioned methods obtain cartoon and texture decomposition and in this work we obtain edges as part of our scheme. In Figure~\ref{fig:label} we show an example decomposition for an indoor image taken from the MIT LabelMe~\cite{Labelme07} open annotation. Our advanced decomposition provides basic scene elements, since it is based on total variation (edge preserving) smoothing, the cartoon component reflects coarse shapes. This feature is useful in obtaining segmentations and annotations as the edges can provide guideline for separating salient objects~\cite{Scene_labelme01}. 

\subsection{Contributions}\label{ssec:contrib}

In this paper, we propose a new image decomposition scheme which splits a given image into its geometrical and textural part along with an edge capturing diffusion constraint. Following the success of weighted and adaptive TV models, our scheme is based on a weighted TV regularization where the edge-variable-based weight is computed in a data-adaptive way. The scheme is implemented using the splitting method of~\cite{BE07} along with dual minimization scheme for the weighted TV regularization~\cite{Ch04}. As a by-product of the implementation via dual minimization we obtain an auxiliary variable which is akin to textural component of the given image. Thus, the scheme studied here provides a cartoon, texture, edge (CTE) decomposition for digital images, see Figure~\ref{fig:teaser} for some examples\footnote{We use the words edges or pseudo-edges interchangeably in the text, since the edge component is computed with gradient maps without traditional edge refinement procedures such as maxima suppression~\cite{CA86}.}. 

We consider the color image decomposition using the dual minimization based color TV model. Multi-scale decomposition following the recent work of~\cite{THe13} is also given. Moreover, we provide theoretical analysis of the model with a priori estimates and prove its solvability. Extensive experimental results on synthetic, textured real images are given. Further, illustrative comparison with other state-of-the-art models are undertaken and the results indicate superior performance of our scheme with respect to cartoon, texture separation as well as denoising with edge preservation.

%%%%%------- Figure 1 --------------------------------------------------------------------------------------
\begin{figure*}
\centering
	\includegraphics[width=3.cm]{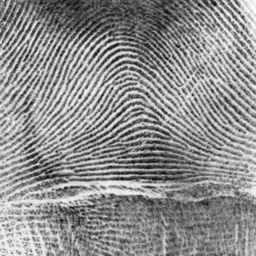}
	\includegraphics[width=3.cm]{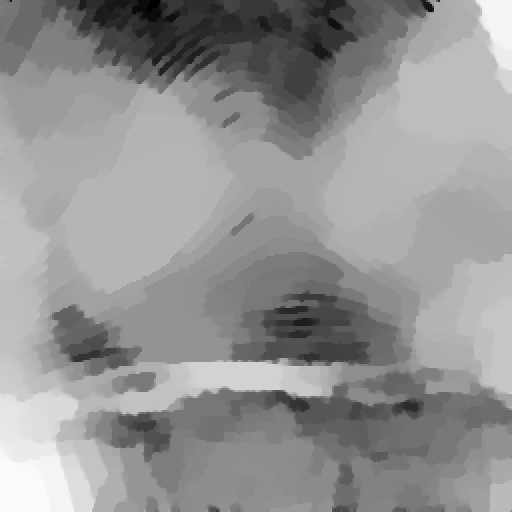}
	\includegraphics[width=3.cm]{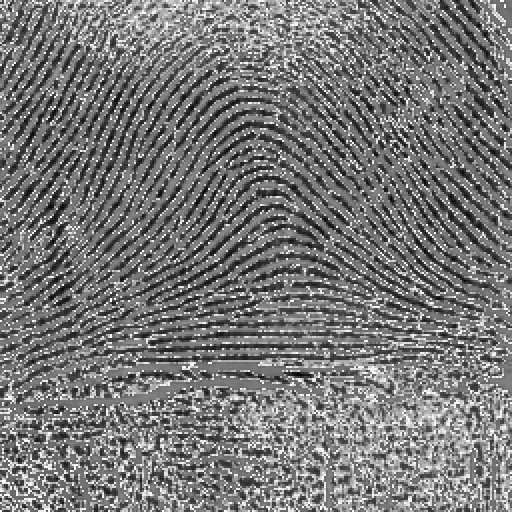}
	\includegraphics[width=3.cm]{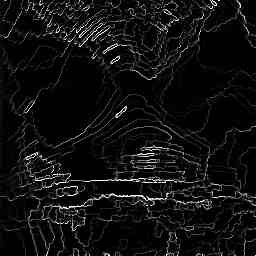}
	\includegraphics[width=3.cm]{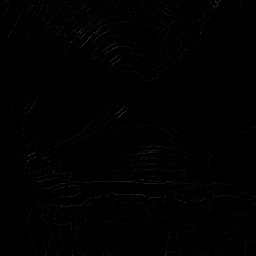}\\
	\includegraphics[width=3.cm, height=3.cm]{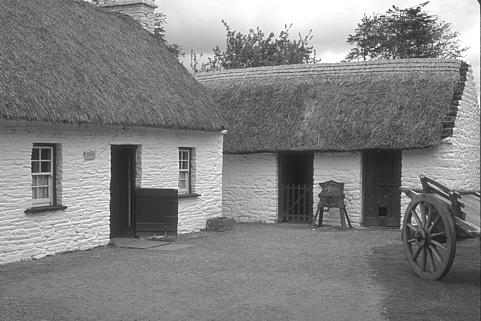}
	\includegraphics[width=3.cm, height=3.cm]{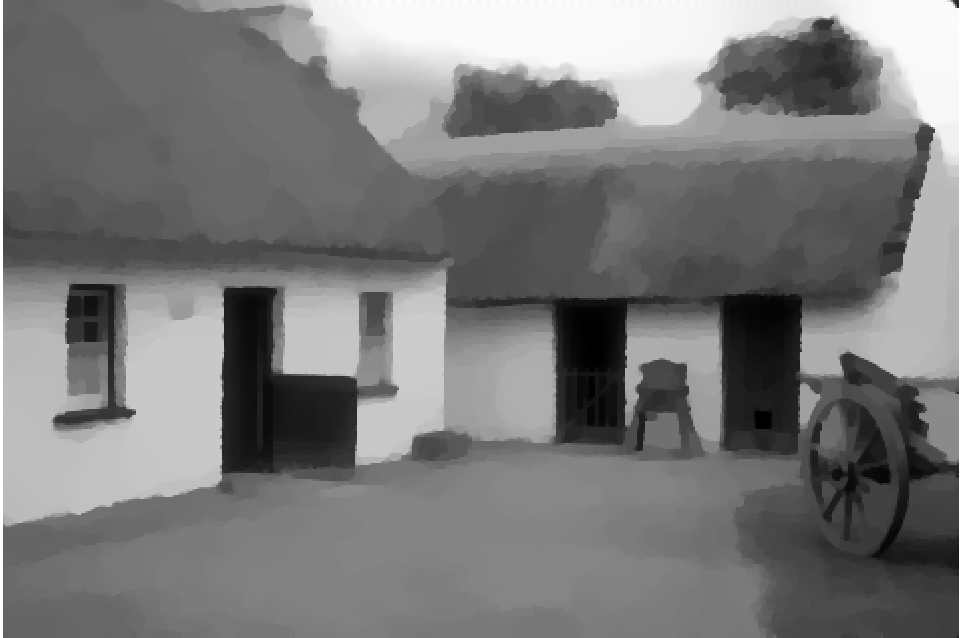}
	\includegraphics[width=3.cm, height=3.cm]{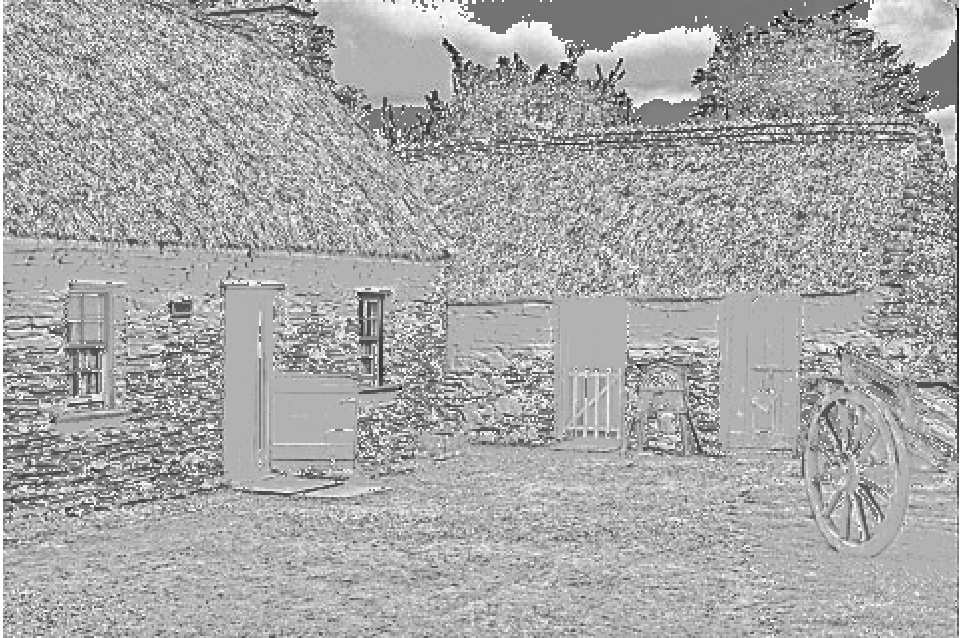}
	\includegraphics[width=3.cm, height=3.cm]{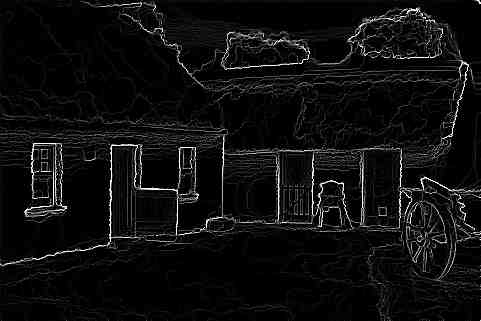}
	\includegraphics[width=3.cm,height=3.cm]{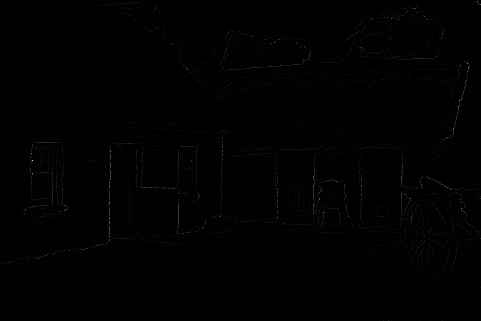}\\
	\includegraphics[width=3.cm]{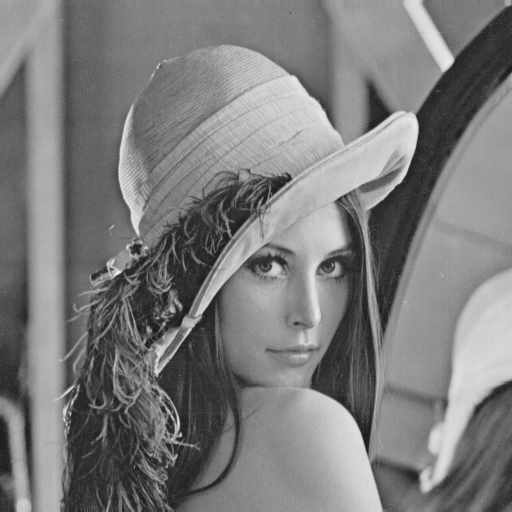}
	\includegraphics[width=3.cm]{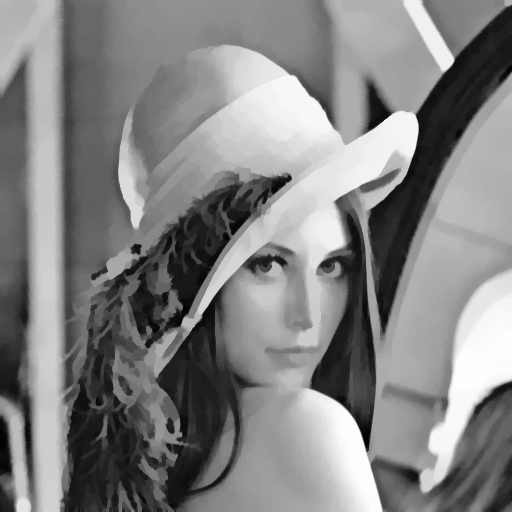}
	\includegraphics[width=3.cm]{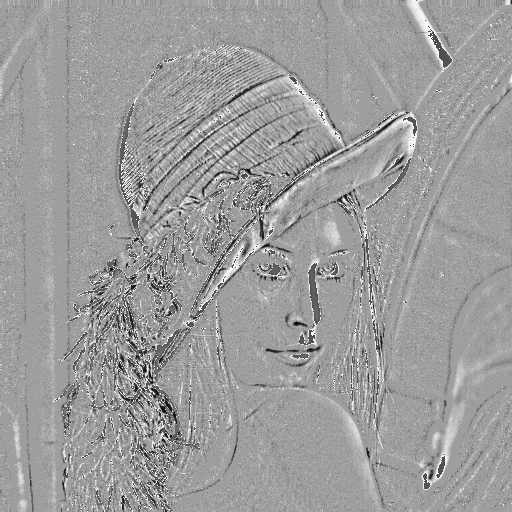}
	\includegraphics[width=3.cm]{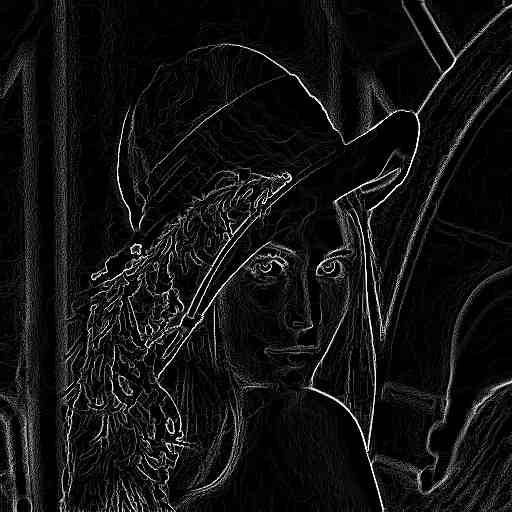}
	\includegraphics[width=3.cm]{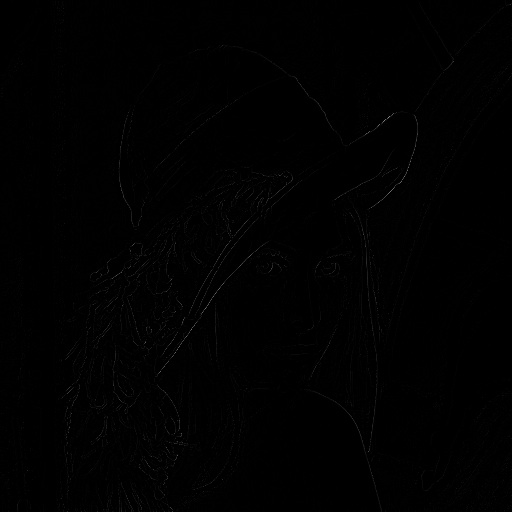}\\
	\includegraphics[width=3.cm]{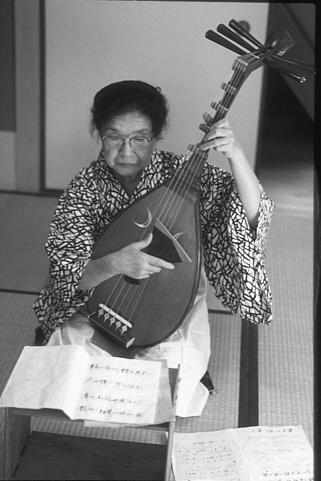}
	\includegraphics[width=3.cm]{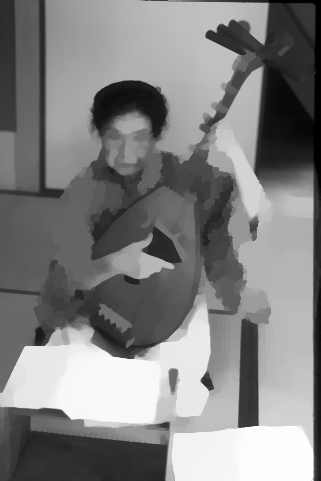}
	\includegraphics[width=3.cm]{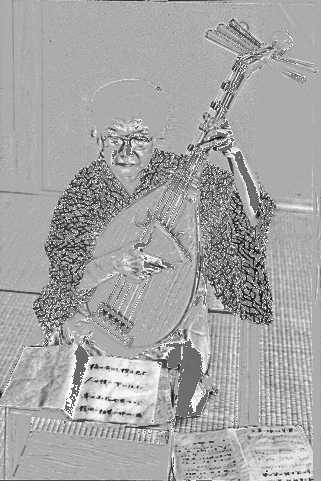}
	\includegraphics[width=3.cm]{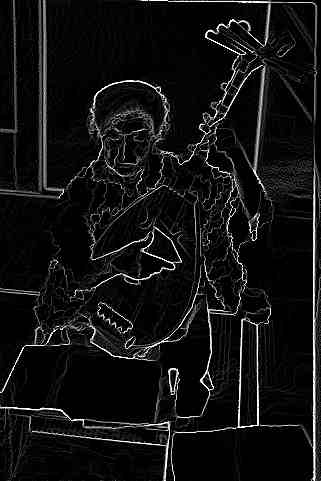}
	\includegraphics[width=3.cm]{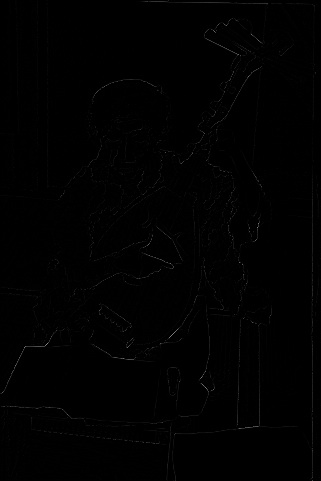}
	\subfigure[Input ($f$)]{\includegraphics[width=3.cm]{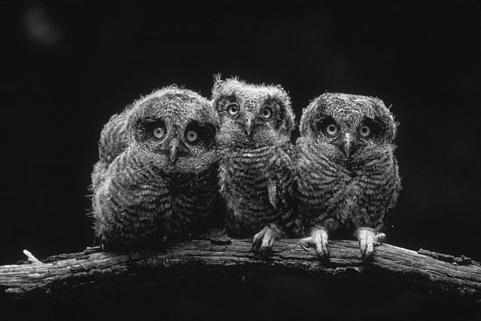}}
	\subfigure[Cartoon ($u$)]{\includegraphics[width=3.cm]{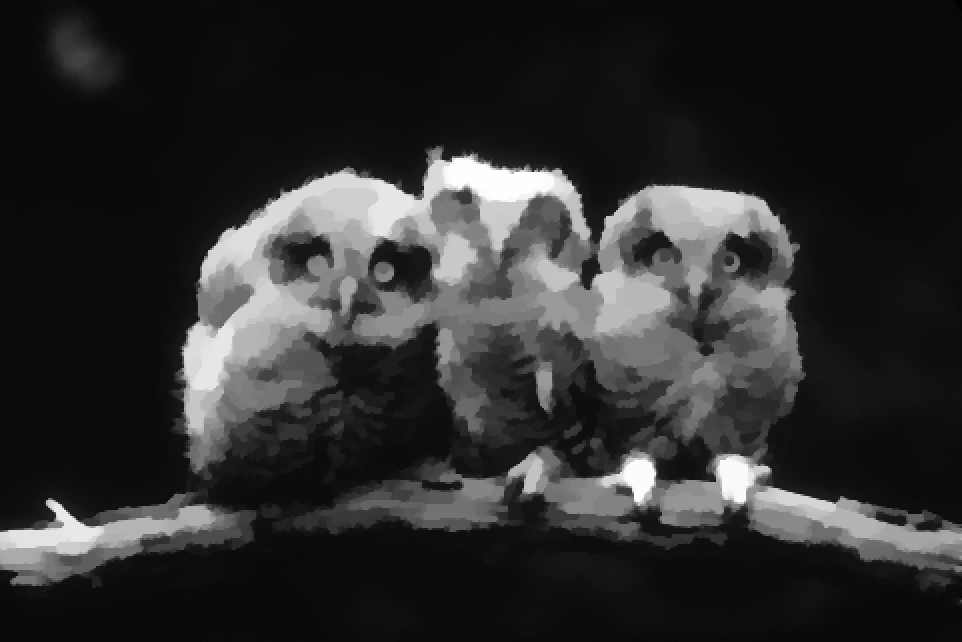}}
	\subfigure[Texture $v$]{\includegraphics[width=3.cm]{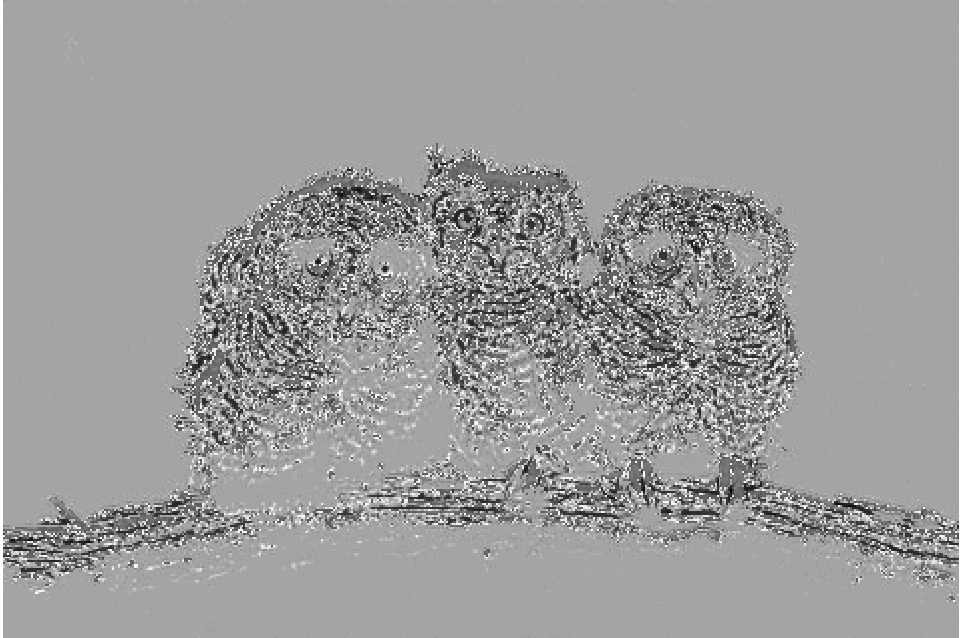}}
	\subfigure[(Pseudo) Edges ($w$)]{\includegraphics[width=3.cm]{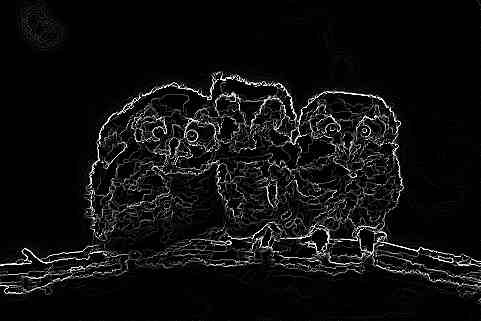}}
	\subfigure[$f-(u+v+w)$]{\includegraphics[width=3.cm]{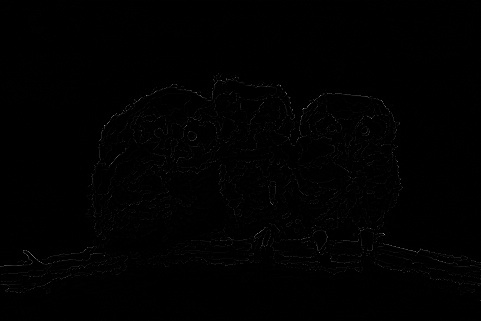}}
	\caption{The proposed coupled TV regularization with linear diffusion PDE model provides cartoon, texture and edge decomposition of images (stopping parameter $\epsilon=10^{-4}$). Better viewed online and zoomed in.}\label{fig:teaser}
\end{figure*}
%%%%%------- Figure 1 --------------------------------------------------------------------------------------

The rest of the paper is organized as follows. Section~\ref{sec:diff} introduces the adaptive TV regularization coupled with a diffusion PDE.
Section~\ref{sec:theo} provides wellposedness results for our coupled model. In Section~\ref{sec:exper} we provide the experimental results conducted on real and synthetic images and comparison with related schemes. Finally, Section~\ref{sec:disc} concludes the paper.

%%%%--------------------------------------------------------------------------------
\section{Diffusion constrained regularization}\label{sec:diff}

\subsection{Weighted total variation minimization}\label{ssec:wtv}

The total variation based regularization~\cite{RO92} given in Eqn.~\eqref{E:tvreg} is well-known in edge preserving image restoration, we rewrite it as follows,
\begin{eqnarray}\label{E:tv}
\min_u E^{\mu}_{TV}(u) = \int_{\Omega} \abs{\nabla u}\,dx + \mu\,\int_{\Omega}\abs{u-f}^2\,dx
\end{eqnarray}
where now $\mu>0$ is the image fidelity parameter which is important in obtaining results in denoising and cartoon+texture decomposition. A related approach is to consider a weighted total variation,
\begin{eqnarray}\label{E:wtvreg}
\min_u E^{\mu}_{gTV}(u)= \int_{\Omega} g(x,u,\nabla u)\abs{\nabla u}\,dx + \mu\,\int_{\Omega}\abs{u-f}^2\,dx
\end{eqnarray}
where $g(x,u,\nabla u)$ represents the generalized weight function. For example, 
~\cite{BE07} have considered a convex regularization
of the variational model
\begin{equation*}
\min\limits_{u} \left\{ \int_{\Omega}g(x)|\nabla u|\,dx + \mu\int_{\Omega}|u-f|\,dx \right\}
\end{equation*}
using a fast minimization based on a dual formulation to get a
partition in the geometric and texture information. Note that the image fidelity is changed to $L^1$ norm, we refer to~\cite{CE05} for more details. The convex regularization version is considered in~\cite{BE07},
\begin{eqnarray}\label{E:bresuv}
\min\limits_{u,v}\Bigg\{\int_{\Omega}g(x)|\nabla u|\,dx
+ \frac{1}{2\theta}\int_{\Omega}(u+v-f)^2\,dx+ \mu\int_{\Omega}|v|\,dx\Bigg\},
\end{eqnarray}
where the parameter $\theta>0$ is chosen to be small so that $f$ almost satisfies
$f\sim u+v$, with the function $u$ representing geometric information, {\it i.e.} the piecewise-smooth regions, and  function $v$ captures texture information lying in the given image. The function $g$ is an edge indicator function that vanishes at object boundaries, for example,
\begin{equation*}
g(x):= \displaystyle\frac{1}{1+\beta \left|\nabla f(x)\right|^{2}},
\end{equation*}
where $f$ is the original image and $\beta$ is an arbitrary
positive constant. Thus, we see that TV based minimization models naturally lead to cartoon and texture decomposition of digital images. The image fidelity parameter $\mu$ can be made data adaptive to obtain texture preserving restorations, see~\cite{GiS06}.
%%%%--------------------------------------------------------------------------------
\subsection{Description of the model}\label{sec:desc}
%%%%%------- Figure 2 --------------------------------------------------------------------------------------
\begin{figure*}[t]
\centering
\[\begin{array}{c}
\includegraphics[width=2.4cm]{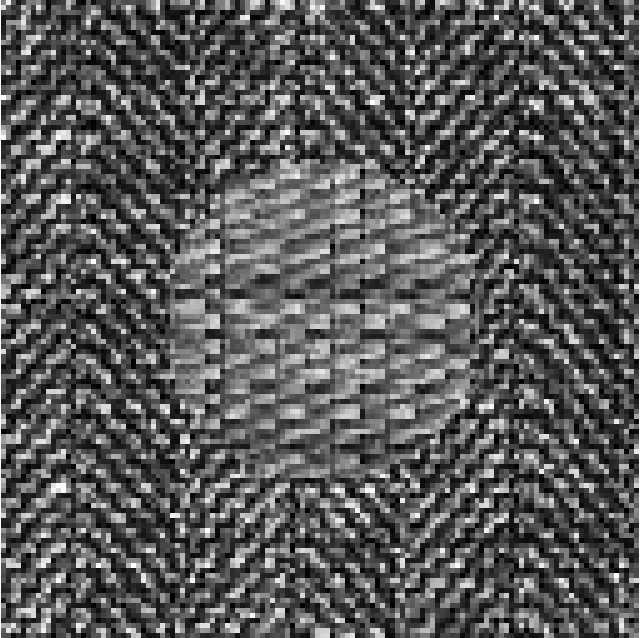}\quad
\includegraphics[width=3cm]{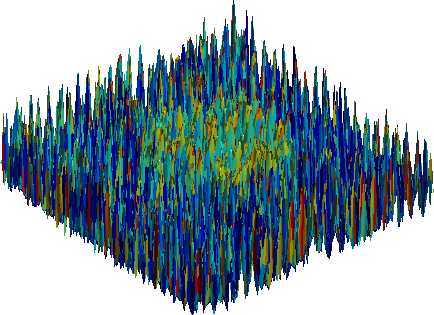}\\
\mbox{\scriptsize{(a) Original Image }}\\
\includegraphics[width=3cm]{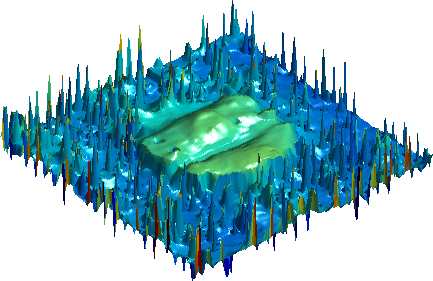}
\includegraphics[width=3cm]{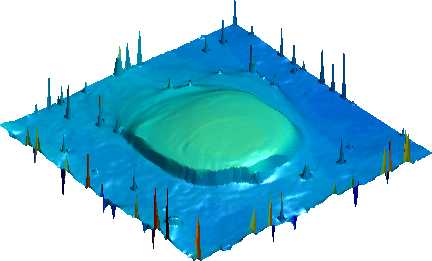}
\includegraphics[width=3cm]{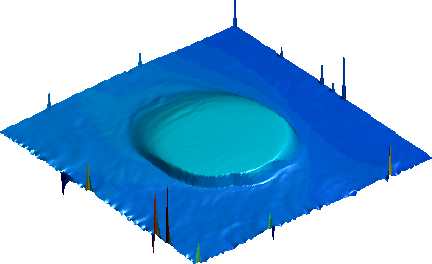}
\includegraphics[width=3cm]{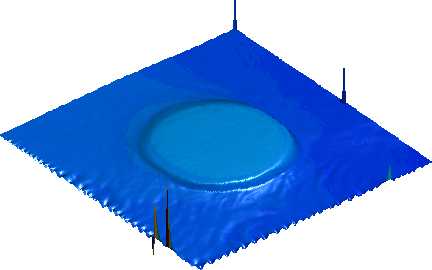}\\
\mbox{\scriptsize{(b) Cartoon component $u$ with stooping parameters $\epsilon=10^{-4}$, $\epsilon=10^{-5}$, $\epsilon=10^{-6}$ and $\epsilon=10^{-7}$ using the proposed model }}\\
\includegraphics[width=3cm]{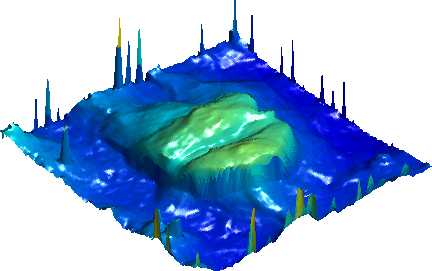}
\includegraphics[width=3cm]{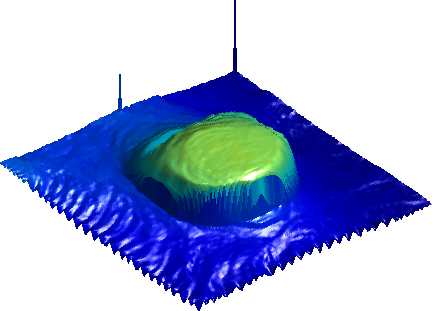}
\includegraphics[width=3cm]{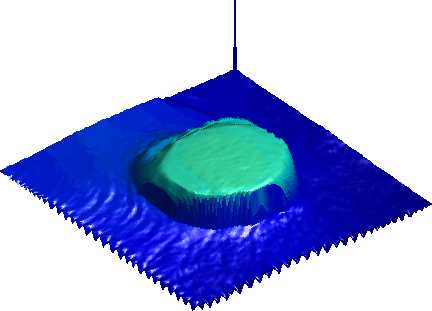}
\includegraphics[width=3cm]{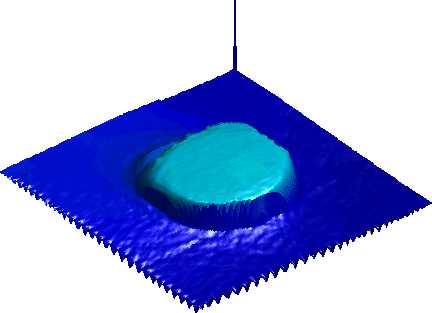}\\
\mbox{\scriptsize{(c) Cartoon component $u$ with stooping parameters $\epsilon=10^{-4}$, $\epsilon=10^{-5}$, $\epsilon=10^{-6}$ and $\epsilon=10^{-7}$ using the model in~\cite{BE07}}} 
\end{array}\]
\caption{(Color online) Our adaptive diffusion constrained total variation scheme (see Eqn.~\eqref{E:wTVsplit}) with constant $\mu$, $\lambda$ (second row) provides better edge preserving image decomposition when compared to the traditional TV regularization model (Eqn.~\eqref{E:bresuv}) of~\cite{BE07} (third row) as stoping parameters increase from $\epsilon=10^{-4}$ to $\epsilon=10^{-7}$. The proposed scheme keeps the structure without diffusing its boundary with the background.}\label{fig:CTE_Vs_Bresson_Synthetic}
\end{figure*}
%%%%%------- Figure 2 --------------------------------------------------------------------------------------

In our work, we consider the following regularization model which was motivated by a coupled PDE modeling done in~\cite{PVorotnikov12} for image restoration,
\begin{eqnarray}\label{E:wTVpfidelity}
\min\limits_{u}\left\{\int_{\Omega}\phi(x,u,|\nabla u|)\,dx + \mu\int_{\Omega}|u-f|\,dx\right\},
\end{eqnarray}
\begin{eqnarray}\label{E:constr}
\frac{\partial w}{\partial t} = \lambda div(\nabla w) +
(1-\lambda)(|\nabla u|-w).
\end{eqnarray}
The choice of regularizer $\phi$ depends on an application area and among a plethora of convex and non-convex functions available with the classical TV~\cite{RO92} and the non-local TV~\cite{LMVese10}.
Motivated from the above discussions in Section~\ref{ssec:wtv}, and success enjoyed by the weighted $L^1$-TV
regularization model in image denoising and segmentation, we use $L^1$-TV regularizer model as a prime example to illustrate our model here. The proposed CTE model thus consists of a minimization along with a non-homogeneous diffusion equation,
\begin{eqnarray}\label{E:wTV}
\min\limits_{u}\left\{\int_{\Omega}g(w)|\nabla u|\,dx + \int_{\Omega}\mu(x)\,|u-f|\,dx\right\},
\end{eqnarray}
\begin{eqnarray}\label{E:constraint}
\frac{\partial w}{\partial t} = \lambda(x) div(\nabla w) + (1-\lambda(x))(|\nabla u|-w),
\end{eqnarray}
where $g(w)=\frac{1}{1+w^{2}}$, or $g(w) = exp(-w^2)$ (Perona-Malik type
diffusion functions~\cite{PM90}), or $g(w)=\abs{w}^{-1}$, or $g(w) = \frac{1}{\sqrt{\epsilon^2+\abs{w}^2}}$ (total
variation diffusion function~\cite{RO92}). That is we solve adaptive data fidelity based weighted total
variation minimization for the smooth part $u$ using Eqn.~\eqref{E:wTV} along with a linear
non-homogenous diffusion constraint on $w$ by solving Eqn.~\eqref{E:constraint}. Note that the balancing parameter $\lambda$ and image fidelity $\mu$ taking values in $[0,1]$ are important in our experimental results. Adaptive ways of choosing these parameters are explained below in Section~\ref{ssec:adap}. 
Following~\cite{BE07} we use a splitting with an auxiliary variable $v$ to obtain
\begin{eqnarray}\label{E:wTVsplit}
\min\limits_{u,v}\Bigg\{\int_{\Omega}g(w)|\nabla u|\,dx +
\frac{1}{2\theta}\int_{\Omega}(u+v-f)^2\,dx + \int_{\Omega}\mu(x)|v|\,dx\Bigg\},
\end{eqnarray}
\begin{eqnarray*}
\frac{\partial w}{\partial t} = \lambda(x) div(\nabla w) + (1-\lambda(x))(|\nabla u|-w).
\end{eqnarray*}
Thus, the computed solution of these equations provides a  representation $(u, v, w)$, where the function $u$ represents the geometric information, the function $v$ captures the texture information, and the function $w$ represents the edges lying in the given image. Figure~\ref{fig:CTE_Vs_Bresson_Synthetic} shows a comparison of our scheme ($\mu=1$ and $\lambda=0.5$) and Bresson et al~\cite{BE07} scheme Eqn.~\eqref{E:bresuv} for a synthetic texture image which contains two different texture patterns. As can be seen, our scheme (Figure~\ref{fig:CTE_Vs_Bresson_Synthetic}(b)) retains the cartoon edges better without diffusing the boundary and the shape is preserved in contrast to Bresson et al's result (Figure~\ref{fig:CTE_Vs_Bresson_Synthetic}(c)). 

The above coupled system is solved in an alternating iterative way for all the variables ($u,v,w$) involved and Chambolle's dual minimization scheme~\cite{Ch04} is used for the weighted TV minimization step. We start with the initial conditions \[(u,v,w)|_{n=0} = (f,\mathbf{0},\mathbf{1})\] and use the following steps to compute CTE components:
\begin{enumerate}
\item Solving the linear diffusion PDE~\eqref{E:constraint} for $w$ with $(u,v)$ fixed:
\begin{eqnarray}\label{E:wieqn}
w^{n+1}
= w^{n} + \frac{\delta t}{(\delta x)^2}(\lambda(x)\tilde \Delta w^{n} + (1-\lambda(x))\left(\abs{\nabla u} - w^{n}\right)),
\end{eqnarray}
where $\delta x$ is spatial discretization step (natural pixel grid), $\tilde \Delta$ is the standard finite difference discretization for the Laplacian and $\delta t$ is the step size.

\item Solving for the cartoon component $u$ with ($v,w$) fixed:

The minimization problem in $u$ is given by (see Eqn.~\eqref{E:wTVsplit}),
\begin{equation}\label{E:uieqn}
\min_{u}\left\{\int_{\Omega}\,g(w)|\nabla u |\, dx+
\frac{1}{2\theta}\int_{\Omega}(u+v-f)^2\,dx\right\}.
\end{equation}
The solution of \eqref{E:uieqn} is given by 
\begin{equation}\label{E:uisoln}
u=f-v - \theta div\, \mathbf{p},
\end{equation} 
where $\mathbf{p}=(p_{1},p_{2})$ satisfies $g(w) \nabla(\theta\, div\, \mathbf{p} -(f- v))-
|\nabla(\theta div\, \mathbf{p} - (f- v)) |\mathbf{p}=0$, which is solved using a fixed point method: $\mathbf{p}^{0} = 0$ and
\begin{equation*}\label{E:psoln}
\mathbf{p}^{n+1} = \frac{\mathbf{p}^{n}+\delta t \,\nabla(div(\mathbf{p}^{n}) -
(f-v)/\theta) }{1+\displaystyle\frac{\delta t}{g(w)} \,|\nabla(div(\mathbf{p}^{n}) - (f-v)/\theta)|}.
\end{equation*}

\item Solving for the texture component $v$ with $(u,w)$ fixed:
\begin{equation}\label{E:vieqn}
\min_{v}\left\{\frac{1}{2\theta}\int_{\Omega}(u+v-f)^2\,dx+ \int_{\Omega}\mu(x)|v|\,dx  \right\},
\end{equation}
and the solution is found as 
\begin{equation}\label{E:visoln}
v=  
\begin{cases} 
f-u-\theta\mu(x) & \text{if $f-u\geq \theta\mu(x)$},\\
f-u+\theta\mu(x) & \text{if $f-u\leq -\theta\mu(x)$},\\
0 & \text{if $|f-u|\leq \theta\mu(x)$}.
\end{cases}
\end{equation}

\end{enumerate}
Next we describe a data adaptive way for choosing the fidelity parameter $\mu$ using the cartoon component at a previous iteration $u^{n}$.
\begin{remark}
We interchangeably use edges and pseudo-edges as the $w$ component provides an edge like features from a given image. The definition of edges in a digital image depends on the context and many traditional definitions depend on the magnitude of gradients (i.e., $\abs{\nabla I}$), hence a solution of the PDE~\eqref{E:constraint} provides a pseudo-edge map, see Figure~\ref{fig:teaser}(d).
\end{remark}
%%%%--------------------------------------------------------------------------------
\subsection{Adaptive fidelity parameter}\label{ssec:adap}
%%%%%------- Figure 3 --------------------------------------------------------------------------------------
\begin{figure*} 
	\centering

\subfigure[Original image]{\includegraphics[width=4.5cm, height=3.5cm]{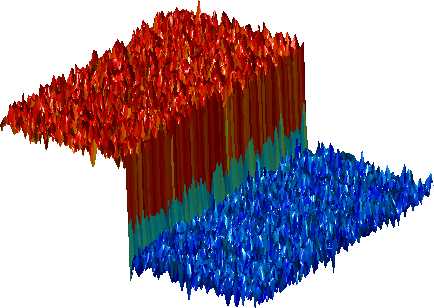}}
\subfigure[$\mu\gets$adaptive, $\lambda\gets$constant]{\includegraphics[width=4.5cm, height=3.5cm]{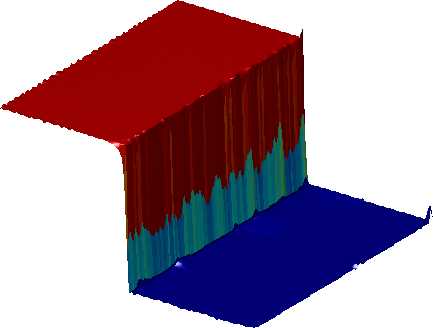}}
\subfigure[$\mu\gets$constant, $\lambda\gets$adaptive]{\includegraphics[width=4.5cm, height=3.5cm]{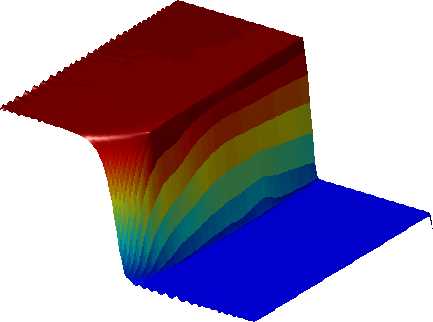}}

	\caption{(Color Online) The proposed model (stopping parameter $\epsilon=10^{-4}$) with adaptive $\mu_1$ (see Eqn.~\eqref{E:CTElambda}) and constant $\lambda$ ($=1$) provides better edge preservation in the cartoon component and captures small scale oscillations in the texture component against constant $\mu$ ($=1$) and adaptive $\lambda$ (using the definition of $\mu_1$ from Eqn.~\eqref{E:CTElambda}).}\label{fig:mulambdas}
\end{figure*}
%%%%%------- Figure 3 --------------------------------------------------------------------------------------

Here, we consider the data adaptive parameters selection strategies which can provide a balanced approach in obtaining better CTE decomposition results. For the image fidelity parameter $\mu$ in Eqn.~\eqref{E:wTV} we utilize a local histogram measure which was motivated from the image segmentation model of~\cite{NB09}. For a given gray-scale image $I :\overline{\Omega}\longrightarrow [0,L]$, let
$\mathcal{N}_{{x},r}$ be the local region centered at
${x}$ with radius $r$. We compute the local histogram of
the pixel $x\in\Omega $ and its corresponding cumulative
distribution function
\begin{eqnarray}
P_{x}(y) = \frac{|\{z\in\mathcal{N}_{{x},r}\cap\Omega\,|\,
I(z)=y\}|}{\mathcal{N}_{x,r}\cap\Omega}\\
F_{x}(y) = \frac{|\{z\in\mathcal{N}_{{x},r}\cap\Omega\,|\, I(z)\leq
y\}|}{\mathcal{N}_{{x},r}\cap\Omega}
\end{eqnarray}
for $0\leq y\leq L$, respectively. This allows us to define the
following measurable function
$\mu:\Omega\rightarrow\mathbb{R}$, such that for each
${x}\in\Omega$,
\begin{equation}\label{E:CTElambda}
\mu(x) = \mu_1(x) = \frac{\int_{0}^{L} F_{x}(y)\,dy}{\max\limits_{{x}\in\Omega}\int_{0}^{L} F_{x}(y)\,dy},
\end{equation}
allowing us to get a weight of how much nonhomogeneous intensity
is present in a local region $\mathcal{N}_{{x},r}$ of a
given pixel ${x}$. This new feature of the image does not
depend on the pixel properties instead provides regional
properties, see~\cite{NB09} for more details. Thus, we see that the $\mu$ is chosen according to local histogram information and is computed in an image adaptive way using the cartoon $u^{n}$ in the iterative scheme~\eqref{E:vieqn}. We compare our approach with two related adaptive functions:
\begin{enumerate} 

\item The adaptive formulation of~\cite{PVorotnikov12} which uses a summation of cartoon components up-to iteration $n$.  
\begin{eqnarray}\label{E:VBSlambda}
\mu_2(x) = \sum_{i=0}^{n} G_{\rho_i}\star u^{n} (x)
\end{eqnarray}
with $\rho_i = 1/i^2 $ and at $n=0$ the $\lambda_2(x) = 0.05$. 

\item
Relative reduction rate based parameter proposed in local TV based scheme~\cite{LMVese10}. 
\begin{eqnarray}\label{E:IPOLlambda}
\mu_3(x)  = \frac{G_\rho\star \abs{\nabla f(x)} - G_\rho\star \abs{L_\sigma\star\nabla f(x)}}{G_\rho\star \abs{\nabla f(x)}},
\end{eqnarray}
where $L_\sigma$ is a low pass filter. Note that this adaptive parameter uses only the initial input image $f$ whereas the previous choices use $u$ computed at a previous (Eqn.~\eqref{E:CTElambda}) or every (Eqn.~\eqref{E:VBSlambda}) iteration.

\end{enumerate}
\begin{remark}
Figure~\ref{fig:mulambdas} explains the choice of adaptiveness in our coupled model~(\ref{E:wTV}-\ref{E:constraint}) for a synthetic image with different texture patterns. The first case with $\mu$ adaptive, $\lambda$ constant provides persisting cartoon component whereas the second case with $\mu$ constant, $\lambda$ adaptive (same local histogram based measure, Eqn.~\eqref{E:CTElambda} is used for defining $\lambda(x)$) blurs the boundaries in the final result. Thus, in what follows, we use only $\mu$ adaptive parameter to illustrate our decomposition results.
\end{remark}
Figure~\ref{fig:lambdas} shows a comparison of different adaptive $\mu$ functions for a synthetic texture image. We see that the local histogram based $\mu_1$ captures the texture components from all the quadrants. Moreover,  Figure~\ref{fig:lambdas}(e) shows that the energy value decreases similarly for different $\mu$ functions as the iteration increases.  Figure~\ref{fig:lambda_CTE} shows the usage of different $\mu$ function when we apply our model~(\ref{E:wTV}-\ref{E:constraint}) for the same synthetic image to obtain cartoon ($u$) + texture ($v$) + pseudo-edges (w) decomposition. Note that the texture image $v$ is obtained by linearly transforming its range to $[0,255]$ for visualization purposes. Differences outside this range are saturated to $0$ and $255$ respectively. A similar transformation is applied to the edges ($w$) component as well.
Next, we study the wellposedness of the model~(\ref{E:wTV}-\ref{E:constraint}) using weak solutions concept and prove some a priori estimates and solvability of the proposed adaptive coupled model. 

%%%%%------- Figure 4 --------------------------------------------------------------------------------------
\begin{figure*}
\centering
	\subfigure[Input ($f$)]{\includegraphics[width=2.75cm, height=2.75cm]{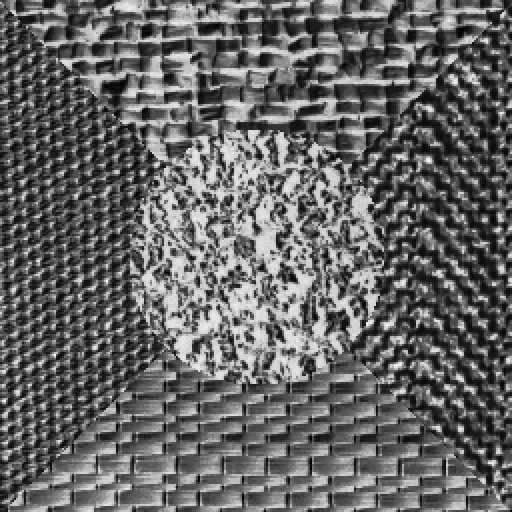}}
	\subfigure[$\mu_1$]{\includegraphics[width=2.75cm, height=2.75cm]{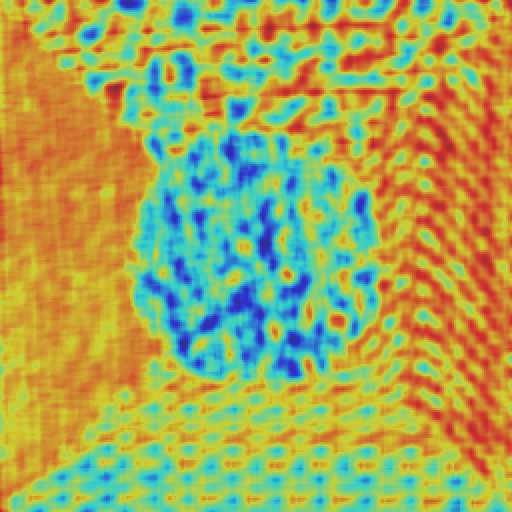}}
	\subfigure[$\mu_2$]{\includegraphics[width=2.75cm, height=2.75cm]{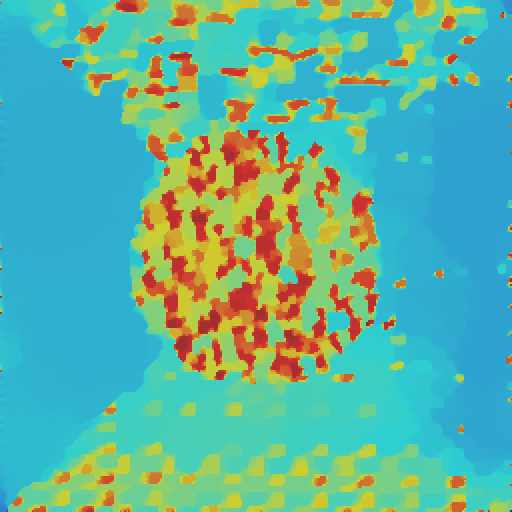}}
	\subfigure[$\mu_3$]{\includegraphics[width=2.75cm, height=2.75cm]{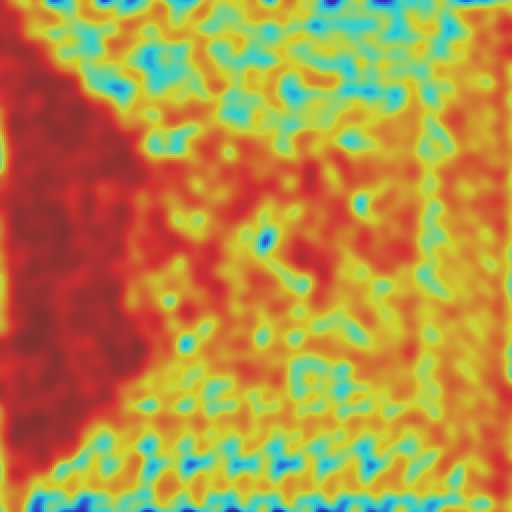}}
	\subfigure[Energy]{\includegraphics[width=4cm, height=2.75cm]{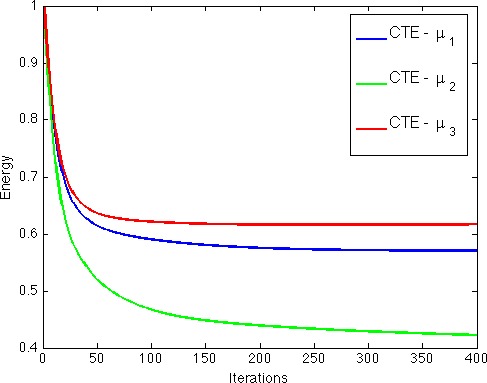}}

	\caption{(Color Online) Comparison of different $\mu$ functions computed using the given input image.
	(a) Original image.
	(b) $\mu_1$ based on local histograms  Eqn.~\eqref{E:CTElambda}.
	(c) $\mu_2$ based on the work of~\cite{PVorotnikov12} Eqn.~\eqref{E:VBSlambda}.
	(d) $\mu_3$ base on the work of~\cite{LMVese10} Eqn.~\eqref{E:IPOLlambda}.
	(e) Energy versus iteration for different adaptive $\mu$ functions based energy minimization scheme~(\ref{E:wTV}-\ref{E:constraint}) with stooping  parameter $\epsilon=10^{-4}$.}\label{fig:lambdas}
\end{figure*}
%%%%%------- Figure 4 --------------------------------------------------------------------------------------
%%%%%------- Figure 5 --------------------------------------------------------------------------------------
\begin{figure*}
\centering
% CTE
	\includegraphics[width=2.75cm, height=2.5cm]{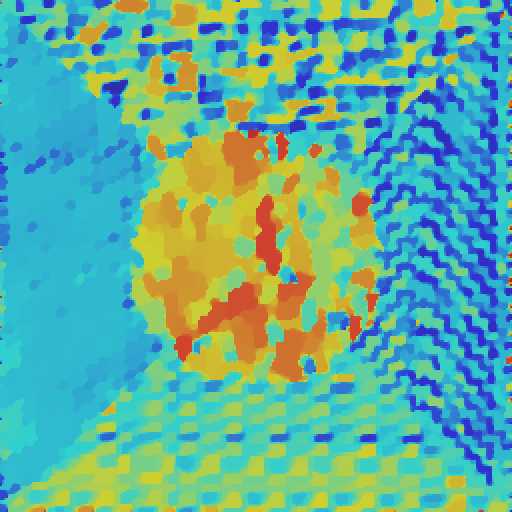}
	\includegraphics[width=2.75cm, height=2.5cm]{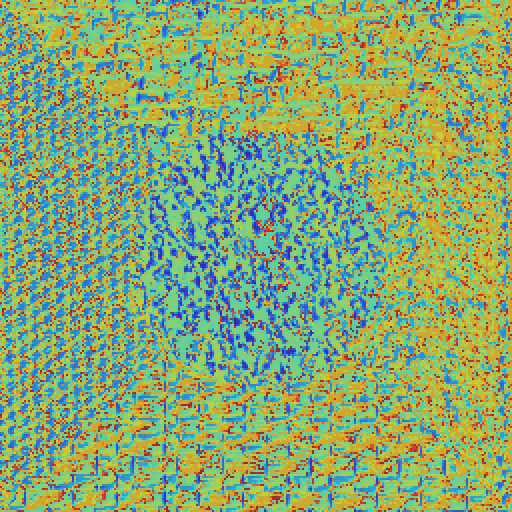}
	\includegraphics[width=2.75cm, height=2.5cm]{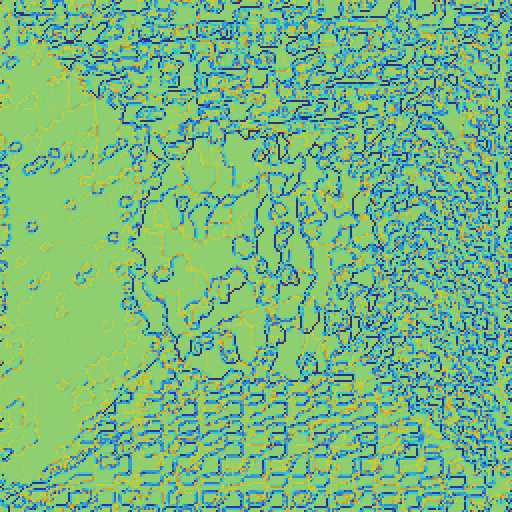}
	\includegraphics[width=2.75cm, height=2.5cm]{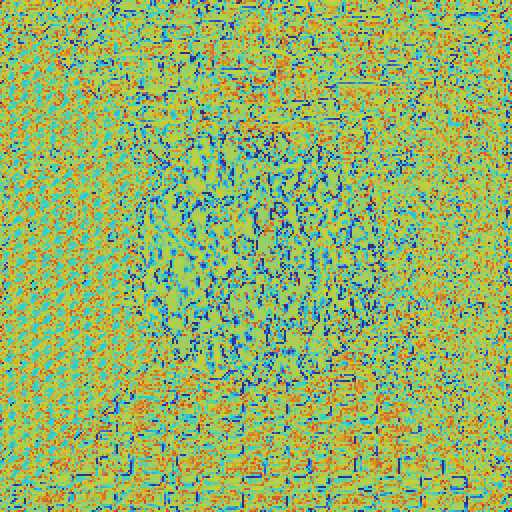}
% SURYA		

	\includegraphics[width=2.75cm, height=2.5cm]{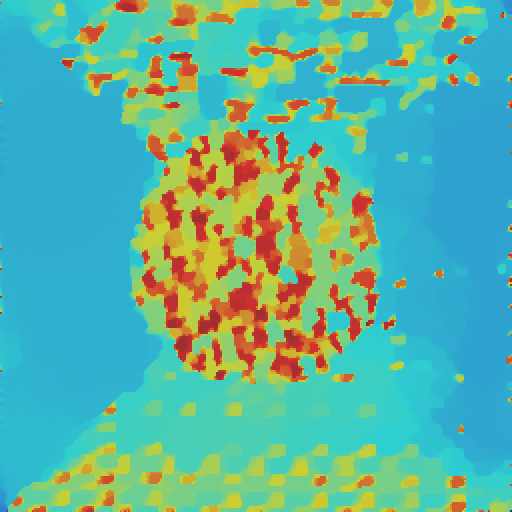}
	\includegraphics[width=2.75cm, height=2.5cm]{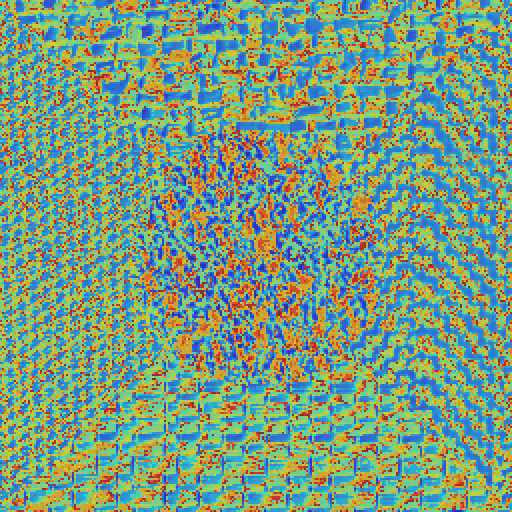}
	\includegraphics[width=2.75cm, height=2.5cm]{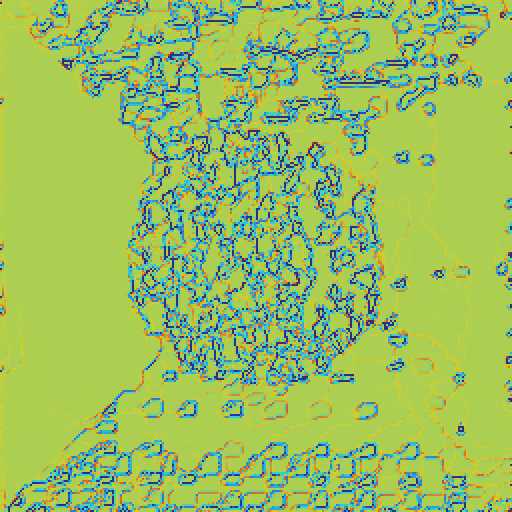}
	\includegraphics[width=2.75cm, height=2.5cm]{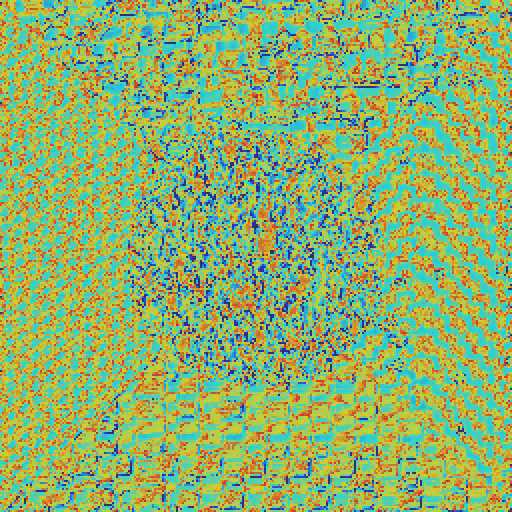}
% IPOL		

	\subfigure[$u$]{\includegraphics[width=2.75cm, height=2.5cm]{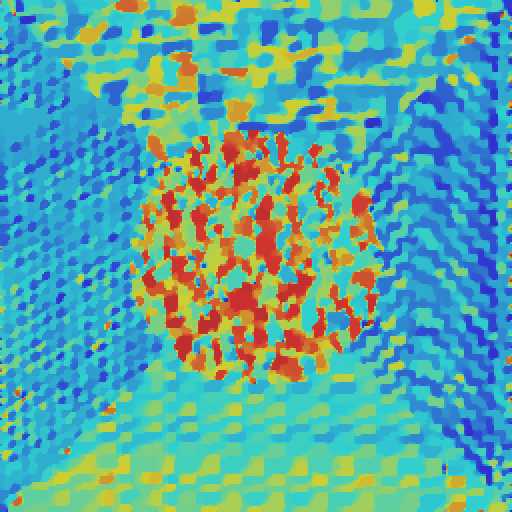}}
	\subfigure[$v$]{\includegraphics[width=2.75cm, height=2.5cm]{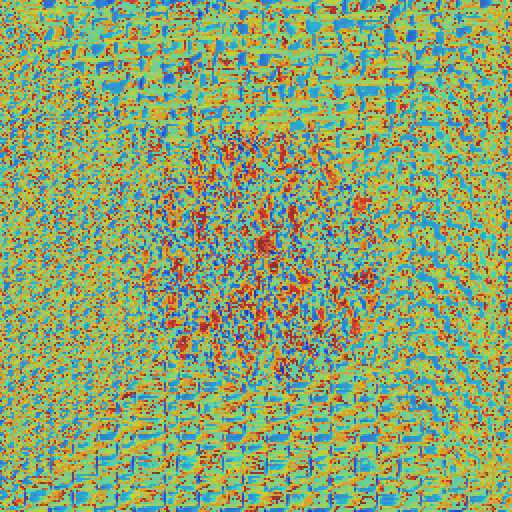}}
	\subfigure[$w$]{\includegraphics[width=2.75cm, height=2.5cm]{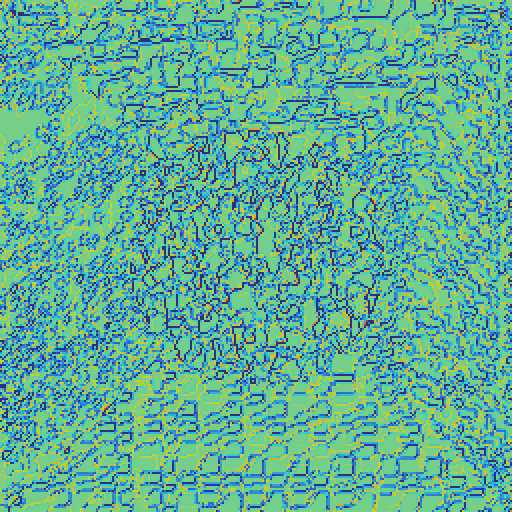}}
	\subfigure[$v+w$]{\includegraphics[width=2.75cm, height=2.5cm]{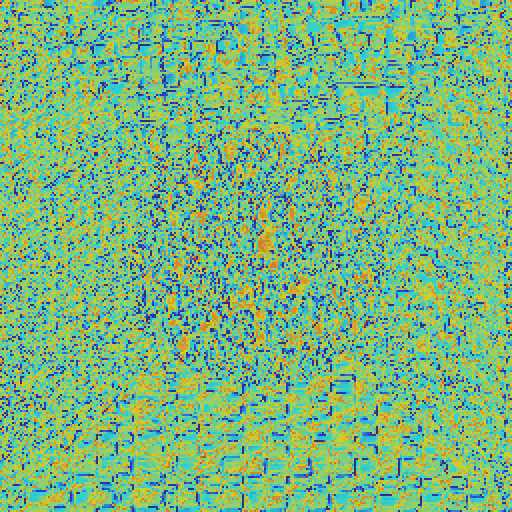}}

	\caption{(Color Online) Different $\mu$ functions based CTE scheme (with $\lambda$ constant) results (stopping parameter $\epsilon=10^{-4}$).
	Top: $\mu_1$ (local histogram) based result.
	Middle: $\mu_2$  result.
	Bottom: $\mu_3$  result. 
	(a) Cartoon $u$. 
	(b) Texture $v$.
	(c) Pseudo-edges $w$.
	(d) $v+w$. Best viewed electronically, zoomed in.}\label{fig:lambda_CTE}
\end{figure*}
%%%%%------- Figure 5 --------------------------------------------------------------------------------------

%%%%--------------------------------------------------------------------------------
%%%%----------------------------------------------------------------------------------
\section{Wellposedness}\label{sec:theo}

\subsection{Preliminaries}\label{ssec:pl}

In the section, $\Omega$ is considered to be a bounded domain (i.e. an open set in $\R^2$) possessing the \emph{cone property}. We recall that this means that each point $x\in\Omega$ is a vertex of a \emph{finite cone} $C_x$ contained in $\Omega$, and all these cones $C_x$ are congruent \cite{Ad75}.
Obviously, rectangular domains have this property. Fix also a time interval $[0,T]$, $T>0$. 

We study wellposedness of the weighted TV regularization,
\begin{eqnarray}\label{E:wTVx}
u(t,x)=\mathbf{u}(x):\,\min\limits_{\mathbf{u}:\Omega\to \R}\Bigg\{\int_{\Omega} g(w(t,x))|\nabla \mathbf{u}(x)|\,dx + \int_{\Omega}\mu(x)|\mathbf{u}(x)-f(x)|\,dx \Bigg\},
\end{eqnarray}
with the diffusion constraint,
\begin{align}\label{E:constraintp}
\frac{\partial w(t,x)}{\partial t} &= \Delta_{p,\lambda}  w(t,x) + (1-\lambda(x))(|\nabla u(t,x)|-w(t,x)),\\ \label{E:constraintp1}
w(t,x)&=0,\ x\in\partial\Omega,\\ \label{E:constraintp2}
w(0,x)&=F(x),
\end{align}
where $p\geq 2,$ and $f:\Omega\to \R,$ $F:\Omega\to [0,+\infty),$ $\lambda:\Omega\to (0,1],$ $\mu:\Omega\to (0,+\infty),$ $g:[0,+\infty)\to (0,+\infty)$ are prescribed functions. 
The operator $\Delta_{p,\lambda}$ is a weighted $p$-Laplacian: \be\Delta_{p,\lambda} v=\lambda\divo(|\nabla v|^{p-2}\nabla v)-(1-|\nabla v|^{p-2})\nabla v\cdot \nabla\lambda.\ee In particular, for $p=2$ we recover the linear diffusion case. In this section, for the sake of generality, we admit adaptive $\lambda$ and generic $g$. Note that $w$ is non-negative by the maximum principle.
We consider the Dirichlet boundary condition for $w$, but other boundary conditions can also be handled.

We use the brief notations $L_q$ ($q\geq 1$), $W_q^{m}$  ($m\in\R$), $W^{m,q}_{0}$ ($m>0$) for the Lebesgue
and Sobolev spaces on $\Omega$ with values in $\R \ or\ \R^2$. 
Parentheses denote the bilinear form
$$ (u, v) = \int\limits_\Omega u (x)\cdot v (x)\, dx.$$ The norm in $L_2$ is $\|u\|=\sqrt{(u,u)}$.

The symbols $S (\mathcal{J}; E) $, $C (\mathcal{J}; E) $, $L_1 (\mathcal{J}; E) $ denote the spaces of Bochner measurable, continuous, Bochner integrable functions on an interval
$\mathcal{J}\subset \mathbb {R} $ with values in a Banach space $E$, respectively.

Let $\mathcal M$ be the Banach space of finite Radon measures on $\Omega$. It is the dual of the space $C_0(\Omega)$ (the space of continuous functions on $\Omega$ that vanish at $\partial\Omega$, see e.g. \cite{fonleo}). 

Let $BV$ be the space of functions of bounded variation on $\Omega$. For $v\in BV$, and $\phi\in C(\overline\Omega)$, $\phi\geq 0$, the \emph{weighted total variation} of $v$ is 
\be\label{E:tvf}TV_\phi(v)=\sup\limits_{\psi\in C_0^\infty(\Omega):\, |\psi|\leq \phi}(v,\divo \psi).\ee
In particular, the \emph{total variation} of $u$ is \be\label{E:tvf1}TV(v)=TV_1(v).\ee

Due to lower semicontinuity of suprema, for every non-negative $\phi\in C(\overline\Omega)$ and a weakly-* converging sequence $\{v_m\}\subset BV$, we have \be\label{E:twin} TV_{\phi}(v)\leq\lim_{m\to +\infty} \inf TV_{\phi}(v_m).\ee
A more refined argument of the same nature proves
\begin{lemma} \label{L:twin} For any $\varphi\in S(0,T;C(\overline\Omega))$, $\varphi\geq 0$ for a.a. $t\in(0,T)$, and a weakly-* converging sequence $\{v_m\}\subset L_q(0,T;BV)$, $q>1$, one has \be \label{E:twinlem} TV_{\varphi(t)}(v(t))\leq \lim_{m\to +\infty} \inf TV_{\varphi(t)}(v_m(t))\ee for a.a. $t\in(0,T)$. \end{lemma}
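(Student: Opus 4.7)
My plan is first to derive an integrated form of the lower semicontinuity by the dual representation of $TV_\varphi$, and then to upgrade it to the pointwise statement \eqref{E:twinlem} via a subsequence extraction that reduces the question to the time-independent estimate \eqref{E:twin}.

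For the integrated form I would work by duality. The supremum in \eqref{E:tvf} defining $TV_\varphi$ is realized along a countable admissible family of $\psi\in C_0^\infty(\Omega)$ (by separability), so $t\mapsto TV_{\varphi(t)}(v(t))$ and $t\mapsto TV_{\varphi(t)}(v_m(t))$ are measurable. For any nonnegative $\eta\in L_{q'}(0,T)$ and any $\psi\in S(0,T;C_0^\infty(\Omega))$ with $|\psi(t,\cdot)|\leq\varphi(t)$ a.e., the map $(t,x)\mapsto\eta(t)\divo\psi(t,x)$ is an admissible test function for weak-$*$ convergence in $L_q(0,T;BV)$, so
\[\int_0^T \eta(t)(v(t),\divo\psi(t))\,dt=\lim_m\int_0^T\eta(t)(v_m(t),\divo\psi(t))\,dt\leq \liminf_m\int_0^T \eta(t)\,TV_{\varphi(t)}(v_m(t))\,dt.\]
A measurable-selection argument interchanges the supremum over admissible $\psi$ with the time integral, yielding
\[\int_0^T \eta(t)\,TV_{\varphi(t)}(v(t))\,dt\leq \liminf_m\int_0^T \eta(t)\,TV_{\varphi(t)}(v_m(t))\,dt\]
for every nonnegative $\eta\in L_{q'}(0,T)$.

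For the pointwise upgrade, Banach--Steinhaus gives $\sup_m\|v_m\|_{L_q(0,T;BV)}<\infty$, and Fatou's lemma then yields $\liminf_m\|v_m(t)\|_{BV}<\infty$ for a.a.\ $t$. At each such $t_0$ I would choose a subsequence $v_{m_k}(t_0)$ realizing $\liminf_m TV_{\varphi(t_0)}(v_m(t_0))$; the $BV$-norms stay bounded along $k$, so Banach--Alaoglu extracts a further subsequence that converges weakly-$*$ in $BV$ to some $\widetilde v$. Identifying $\widetilde v=v(t_0)$ for a.a.\ $t_0$ is done by pairing against a countable dense family $\{\psi_n\}\subset C_0^\infty(\Omega)$ and using Lebesgue differentiation of the scalar sequences $(v_m(\cdot),\psi_n)\rightharpoonup(v(\cdot),\psi_n)$, which converge weakly in $L_q(0,T)$. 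Invoking the time-independent estimate \eqref{E:twin} with $\phi=\varphi(t_0)\in C(\overline\Omega)$ along $v_{m_k}(t_0)\rightharpoonup^{\ast}v(t_0)$ then gives
\[TV_{\varphi(t_0)}(v(t_0))\leq\liminf_k TV_{\varphi(t_0)}(v_{m_k}(t_0))=\liminf_m TV_{\varphi(t_0)}(v_m(t_0)),\]
as claimed in \eqref{E:twinlem}.

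The hard part is the last step: constructing, for a.a.\ $t$, a single subsequence that simultaneously realizes $\liminf_m TV_{\varphi(t)}(v_m(t))$, preserves the $BV$-bound, and converges weakly-$*$ in $BV$ to $v(t)$. Synchronizing these three requirements across a countable family of test functions via a diagonal/Lebesgue-point argument is the most delicate piece of the proof; once in place, the time-independent semicontinuity \eqref{E:twin} closes the argument.
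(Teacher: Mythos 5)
The paper never actually proves Lemma~\ref{L:twin}: it is introduced only with the remark that ``a more refined argument of the same nature'' as the one behind \eqref{E:twin} establishes it, so your attempt must be judged on its own. The first half of your argument --- measurability of $t\mapsto TV_{\varphi(t)}(v_m(t))$ via a countable admissible family, and the \emph{integrated} lower semicontinuity $\int_0^T\eta(t)\,TV_{\varphi(t)}(v(t))\,dt\leq\liminf_m\int_0^T\eta(t)\,TV_{\varphi(t)}(v_m(t))\,dt$ obtained by duality and measurable selection --- is sound, and it is the honest time-dependent analogue of \eqref{E:twin}.

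The pointwise upgrade, however, has a genuine gap at exactly the step you flag as delicate, and it cannot be closed from the stated hypotheses alone. The problem is the identification $\widetilde v=v(t_0)$: the subsequence $m_k$ realizing $\liminf_m TV_{\varphi(t_0)}(v_m(t_0))$ depends on $t_0$, and weak convergence of the scalar functions $(v_m(\cdot),\psi_n)$ in $L_q(0,T)$ gives no control over the values $(v_{m_k}(t_0),\psi_n)$ at a fixed time; Lebesgue differentiation applies to a single integrable function, not to a weakly convergent sequence evaluated along a $t_0$-dependent subsequence. Concretely, let $w\in BV$ with $TV(w)=1$, set $v_m(t,x)=(1+\sin(mt))\,w(x)$ and $\varphi\equiv 1$. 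Then $v_m\rightharpoonup^{*} v$ in $L_q(0,T;BV)$ with $v(t)\equiv w$ by Riemann--Lebesgue, yet for a.a.\ $t$ the sequence $\{mt\bmod 2\pi\}$ is equidistributed, so $\liminf_m(1+\sin(mt))=0$ and hence $\liminf_m TV(v_m(t))=0<1=TV(v(t))$: along the liminf-realizing subsequence, $v_{m_k}(t_0)$ converges weakly-$*$ to $0$, not to $v(t_0)$. (A secondary issue: finiteness of $\liminf_m TV_{\varphi(t_0)}(v_m(t_0))$ does not bound $\|v_{m_k}(t_0)\|_{BV}$ along that particular subsequence when $\varphi(t_0)$ vanishes somewhere, so even the compactness extraction needs care.) Any proof of the pointwise inequality must therefore import information beyond weak-$*$ convergence in $L_q(0,T;BV)$ --- for instance that $v_m(t)\rightharpoonup^{*}v(t)$ in $\mathcal M$ for a.a.\ fixed $t$, which would have to be established separately in the application --- or else one should retreat to your integrated inequality, which your first paragraph does prove correctly.
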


For $v\in BV$, $|\nabla v|$ will denote the corresponding total variation measure. The operator \be|\nabla (\cdot)|: BV \to \mathcal M\ee is bounded. We recall the duality relation \be TV_{\phi}(v)=\langle |\nabla v|, \phi\rangle_{\mathcal M\times C_0(\Omega)}.\ee

The symbol $C$ will stand for a generic positive constant that can take different values in different lines. We sometimes write $C(\dots)$ to specify that the constant depends on a certain parameter or value.

We will use the embeddings \be\label{E:embbv} BV\subset L_{q},\ q\leq 2,\ee \be \label{E:w2} W^1_2\subset L_{q},\ q<+\infty,\ee \be \label{E:wp} W^1_p\subset C(\overline{\Omega}),\ p> 2,\ee and
\be \label{E:wp-}\mathcal{M} \subset W^{-1}_q,\ q< 2,\ee
and the Poincar\'{e} inequality  \be \label{E:ineqg}  \|v\|_{W^1_p}\leq C\|\nabla v\|_{L_p},\ p\geq 1,\ v\Big|_{\partial \Omega}=0.\ee 
Embeddings \eqref{E:embbv}--\eqref{E:wp-}  are compact (except for \eqref{E:embbv} with $q=2$).

We assume that $\lambda$, $\nabla \lambda$ and $g$ are Lipschitz functions, \[\lambda_0=\inf_{x\in\Omega}\lambda(x)>0,\quad\quad g_0=\sup_{y\geq 0} g(y)< +\infty,\]
and there exists a constant $C_g$ so that 
\be\label{E:cg}
\Big|\frac {d (\log g(y))}{dy}\Big|\leq \frac {C_g}{1+y}\ \,   \mathrm{for\ a.a.}\ y\geq 0. 
\ee
The last condition means that $g$ can have at most polynomial decay at infinity.

We assume that $\mu\in L_\infty(\Omega),$ and 
\[ 0<\mu_1=\mathrm{ess}\inf_{x\in\Omega} \mu(x)\leq \mu_2=\mathrm{ess}\sup_{x\in\Omega} \mu(x)<+\infty.\]

Finally, we assume that \be\label{E:inff} F\in L_2,\ee and \textbf{at least one} of the following three conditions holds: 
\be\label{E:cd1}
p>2,\, f\in BV,
\ee
\be\label{E:cd2}
p=2,\, \exists\, q>1:\, f\in W^1_{q}, 
\ee
\be\label{E:cd3}
p=2,\, f\in BV\cap L_\infty,\,\exists\, c_g>0:\, \frac 1{g(y)}\leq c_g(1+y),\ y\geq 0.
\ee

%%%%%--------------------------------------------------------------------------------

\subsection{A priori estimates}\label{ssec:ape}

Before specifying the underlying function spaces and defining the notion of solution, let us derive a formal a priori estimate for problem \eqref{E:wTVx}--\eqref{E:constraintp2}. 

The Euler-Lagrange equation for \eqref{E:wTVx} is 
\be\label{E:el}
-\divo \left(g(w)\frac{\nabla u}{|\nabla u|} \right)+\mu\frac {u-f}{|u-f|}=0,\ \frac {\partial u/\partial \nu}{|\nabla u|}\Bigg|_{\partial \Omega} = 0.
\ee
For each $t\in[0,T]$, multiplying \eqref{E:el} by $\frac{w (u-f)}{g(w)}$, and integrating over $\Omega$, we get 
\be\label{E:el1}
\left(g(w)\frac{\nabla u}{|\nabla u|},\nabla\left(\frac{w (u-f)}{g(w)}\right) \right)+ \left(\frac{\mu w}{g(w)},|u-f|\right)=0.
\ee
Thus,
\begin{multline}\label{E:el2}
\left(\frac{\nabla u}{|\nabla u|},\nabla w(u-f)\right)-\left(\frac{\nabla u}{|\nabla u|},\frac {g'(w)w}{g(w)}\nabla w(u-f)\right)\\+\left(w,|\nabla u|\right)-\left(w\frac{\nabla u}{|\nabla u|},\nabla f\right) + \left(\frac{\mu w}{g(w)},|u-f|\right)=0.
\end{multline}

Multiplying \eqref{E:el} by $\frac{u-f}{g(w)}$, and integrating over $\Omega$, we find
\begin{eqnarray}\label{E:el4}
-\left(\frac{\nabla u}{|\nabla u|},\frac {g'(w)}{g(w)}\nabla w(u-f)\right)
+\left(1,|\nabla u|\right)-\left(\frac{\nabla u}{|\nabla u|},\nabla f\right) + \left(\frac{\mu}{g(w)},|u-f|\right)=0.
\end{eqnarray}
Since the last term is non-negative, we conclude that \be\label{E:el5}
TV(u)\leq TV(f)+C_g\|\nabla w\|\|u-f\|.
\ee

Multiplying \eqref{E:el} by $u-f$, and integrating over $\Omega$, we derive
\be\label{E:tvi}
(g(w),|\nabla u|)+\|\mu(u-f)\|_{L_1}\leq g_0\, TV(f)
\ee

Multiplying \eqref{E:el} by $u|u|$, and integrating over $\Omega$, we deduce
\be\label{E:estu}
2(g(w)|u|,|\nabla u|)+\left(\frac {u-f}{|u-f|},\mu u|u|\right).
\ee
It is not difficult to obtain the following scalar inequality
\be\label{E:sc1} |a-b|^3\leq 2(a-b)a|a|+2b^2|a-b|,\ a,b\in\R,\ee
which enables to conclude from \eqref{E:estu} that 
\be\label{E:estu1}
\|\sqrt{\mu}(u-f)\|^2+4(g(w)|u|,|\nabla u|)\leq 2\|\sqrt{\mu}f\|^2.
\ee
Hence, due to \eqref{E:embbv}, 
\be\label{E:estu2}
\|u-f\|\leq C(\|f\|)\leq C(\|f\|_{BV}).
\ee
By \eqref{E:el5} and \eqref{E:estu2}, 
\be\label{E:el6}
TV(u)\leq C(1+\|\nabla w\|).
\ee

Multiplying \eqref{E:constraintp} by $w$, and integrating over $\Omega$, we get
\begin{eqnarray}\label{E:next0}
\frac 12 \frac {d \|w\|^2}{dt} +(\lambda, |\nabla w|^p)+((1-\lambda)w,w)
=((1-\lambda)w,|\nabla u|)	-(w\nabla\lambda,\nabla w).
\end{eqnarray}

Using H\"{o}lder's inequality and \eqref{E:cg}, we deduce from \eqref{E:el2} that
\be\label{E:el3}
\left(w,|\nabla u|\right)\leq (1+C_g)\|\nabla w\|_{L_p}\|u-f\|_{L_{p/p-1}} +(w,|\nabla f|).
\ee
From \eqref{E:next0}, \eqref{E:el3} and Young's inequality we infer 
\begin{eqnarray}\label{E:next1}
\frac 12 \frac {d \|w\|^2}{dt} +\frac {\lambda_0} 2\|\nabla w\|^p_{L_p}
\leq C(g,\lambda_0,p)\|u-f\|_{L_{p/p-1}}^{p/p-1} +C(\lambda,p)\|w\|_{L_{p/p-1}}^{p/p-1}+(w,|\nabla f|).
\end{eqnarray}

Provided \eqref{E:cd1} or \eqref{E:cd2}, 
%\eqref{E:el5}, \eqref{E:tvi}, 
estimate \eqref{E:estu2}, and embeddings \eqref{E:wp} or \eqref{E:w2}, resp., imply
\be\label{E:fin1} \frac {d \|w\|^2}{dt} +\lambda_0 \|\nabla w\|^p_{L_p}\leq C(1+\|w\|^{p/p-1}+\|w\|_{W_p^1}).\ee
By \eqref{E:ineqg},\eqref{E:inff} and usual arguments, \eqref{E:fin1} yields 
\be\label{E:fin2} \|w\|_{L_\infty(0,T; L_2)} +\| w\|_{L_p(0,T; W^{1,p}_{0})}\leq C.\ee

In case \eqref{E:cd3}, we multiply \eqref{E:el} by $\frac{w u}{g(w)}$, and integrate over $\Omega$, arriving at 
\begin{multline}\label{E:ele}
\left(\frac{\nabla u}{|\nabla u|},u\nabla w\right)-\left(\frac{\nabla u}{|\nabla u|},u\frac {g'(w)w}{g(w)}\nabla w\right)+\left(w,|\nabla u|\right)\\+ \left(\frac{\mu w}{g(w)},|u-f|\right)+\left(\frac {u-f}{|u-f|},\frac{\mu wf}{g(w)}\right)=0.
\end{multline}
Then, since the penultimate term is non-negative, 
\begin{multline}\label{E:ele1}
\left(w,|\nabla u|\right)
\leq (1+C_g)\|\nabla w\|\,\|u\| +\mu_2c_g(|f|w,1+w)\\
\leq C(1+\|\nabla w\|+\|w\|^2).
\end{multline}
Now, \eqref{E:next0} and \eqref{E:ele1} yield \be\label{E:fin3} \frac {d \|w\|^2}{dt} +\lambda_0 \|\nabla w\|^p_{L_p}\leq C(1+\|w\|^2),\ee which implies \eqref{E:fin2}.
In all the three cases, \eqref{E:tvi}, \eqref{E:el6} and \eqref{E:fin2} imply
\be\label{E:l1est} \|u\|_{L_\infty(0,T; L_1)}+\|u\|_{L_p(0,T; BV)} \leq C.\ee

The operator 
\[\Lambda:  BV\to \mathcal{M},\]
\[\Lambda(v)=(1-\lambda)|\nabla v|,\]
\[\langle\Lambda(v),\phi\rangle_{\mathcal M\times C_0(\Omega)}
=\langle|\nabla v|, (1-\lambda)\phi\rangle_{\mathcal M\times C_0(\Omega)},\]
is bounded, so \be\label{E:l2est} \|(1-\lambda)|\nabla u|\|_{L_p(0,T; \mathcal{M})} \leq C.\ee
The weighted $p$-Laplacian operator $$ \Delta_{p,\lambda}: W^{1,p}_0\to W^{-1}_{p/p-1}$$ is also bounded. 
Hence, \eqref{E:constraintp},\eqref{E:wp-},\eqref{E:fin2} and \eqref{E:l2est} yield an estimate for the time derivative of $w$:
\be\label{E:tder} \|w'\|_{L_p(0,T; W^{-1}_{q})}\leq C,\ q<2, \frac q {q-1}\geq p.\ee

%%%%%--------------------------------------------------------------------------------
\subsection{Solvability}\label{ssec:sss}

\begin{definition} 
Assume \eqref{E:inff} and \eqref{E:cd1}. A pair of functions $(u,w)$ from the class \[ u\in L_\infty(0,T; L_1)\cap L_p(0,T; BV),\] \[ w\in L_\infty(0,T; L_2)\cap L_p(0,T; W^{1,p}_{0})\cap W_p^1(0,T; W^{-1}_{p/p-1}),\] is called a weak solution to problem~\eqref{E:wTVx}--\eqref{E:constraintp2} if  
\begin{eqnarray}\label{E:wTVw}
TV_{g(w(t))}(u(t)) + \|\mu(u(t)-f)\|_{L_1}
\leq TV_{g(w(t))}(\mathbf{u}(t)) + \|\mu(\mathbf{u}(t)-f)\|_{L_1}, 
\end{eqnarray} 
for any $\mathbf{u}\in S(0,T; BV)$ and a.a. $t\in (0,T)$, 
\be\label{E:constraintpn}
w' -\Delta_{p,\lambda} +(1-\lambda)w=
(1-\lambda)|\nabla
u|\ee in the space $W^{-1}_{p/p-1}$ for a.a. $t\in (0,T)$, and \be \label{E:constraintp2n}
w(0)=F
\ee in $W^{-1}_{p/p-1}$.
\end{definition}

\begin{remark} 
This definition is correct since all members of~\eqref{E:constraintpn} belong to $W^{-1}_{p/p-1}$ for a.a. $t\in (0,T)$ (cf. the end of Subsection~\ref{ssec:ape}), and $w\in  W_p^1(0,T; W^{-1}_{p/p-1})\subset C([0,T]; W^{-1}_{p/p-1})$.\end{remark}

\begin{theorem} \label{T:t1}  Assume \eqref{E:inff} and \eqref{E:cd1}. Then there exists a weak solution to \eqref{E:wTVx}--\eqref{E:constraintp2}.
\end{theorem}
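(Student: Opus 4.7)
The plan is to obtain a weak solution via a Schauder-type fixed-point argument that decouples the minimization and the parabolic equation. For a candidate $w\in L_p(0,T;W^{1,p}_0)\cap L_\infty(0,T;L_2)$, define $\Phi(w)=\tilde w$ in two steps: (i)~for a.a.\ $t\in(0,T)$, solve the convex minimization \eqref{E:wTVx} to get $u[w](t)\in BV$; (ii)~solve the parabolic equation \eqref{E:constraintpn} with right-hand side $(1-\lambda)|\nabla u[w]|$ and initial datum $F$ to obtain $\tilde w$. A fixed point of $\Phi$ will be a weak solution in the sense of the Definition preceding Theorem~\ref{T:t1}.

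For step (i), since $p>2$, the embedding \eqref{E:wp} gives $w(t)\in C(\overline\Omega)$ so that $g(w(t))$ is a non-negative continuous weight, and the weighted total variation defined in \eqref{E:tvf} is meaningful. The functional is convex, weakly-* lower semicontinuous on $BV$ (via the dual representation~\eqref{E:tvf} and Lemma~\ref{L:twin}), and coercive thanks to the $L^1$ fidelity term with $\mu\geq\mu_1>0$ combined with the compact embedding \eqref{E:embbv}; hence a minimizer exists. A measurable-in-$t$ selection is obtained by a standard Aumann-type measurable selection argument applied to the Carath\'eodory minimization problem.  For step (ii), the a priori estimate \eqref{E:l2est}, which rests only on the bounds from Subsection~\ref{ssec:ape}, shows that the forcing $(1-\lambda)|\nabla u[w]|$ lies in $L_p(0,T;\mathcal M)\subset L_p(0,T;W^{-1}_q)$ for some $q<2$ with $q/(q-1)\geq p$. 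The weighted $p$-Laplacian operator $-\Delta_{p,\lambda}+(1-\lambda)\mathrm{Id}:W^{1,p}_0\to W^{-1}_{p/(p-1)}$ is monotone, hemicontinuous and coercive (cf.\ \eqref{E:ineqg}), so the classical Lions theory for monotone parabolic equations yields a unique $\tilde w\in L_p(0,T;W^{1,p}_0)\cap C([0,T];L_2)$ with $\tilde w'\in L_p(0,T;W^{-1}_{p/(p-1)})$ satisfying \eqref{E:constraintpn}--\eqref{E:constraintp2n}.

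Next I use the a priori estimates \eqref{E:fin2}, \eqref{E:el6}, \eqref{E:l2est} and \eqref{E:tder} to fix $R>0$ such that $\Phi$ maps the closed convex ball
\[ K=\bigl\{w\in L_p(0,T;W^{1,p}_0)\cap L_\infty(0,T;L_2):\ \|\nabla w\|_{L_p(0,T;L_p)}+\|w\|_{L_\infty(0,T;L_2)}\leq R\bigr\} \]
into itself. Because $\Phi(K)$ is additionally bounded in $W^1_p(0,T;W^{-1}_q)$, the Aubin-Lions lemma (combined with \eqref{E:w2}) shows that $\Phi(K)$ is relatively compact in $L_p(0,T;L_r)$ for every finite $r$. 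For continuity of $\Phi$, suppose $w_k\to w$ strongly in $L_p(0,T;L_r)$ and (up to a subsequence) a.e. in $(0,T)\times\Omega$. Lipschitz continuity of $g$ yields $g(w_k(t))\to g(w(t))$ in $C(\overline\Omega)$ for a.a.\ $t$; Lemma~\ref{L:twin} together with the duality $TV_\phi(v)=\langle|\nabla v|,\phi\rangle$ then allows me to pass to the $\liminf$ in the minimality inequality \eqref{E:wTVw} written for $u[w_k]$ against a fixed competitor $\mathbf u$, identifying any weak-* limit of $u[w_k]$ as a minimizer associated with $w$. Finally, the monotonicity of the $p$-Laplacian (Minty's trick) lets me pass to the limit in the parabolic equation, so $\Phi(w_k)\to\Phi(w)$. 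Schauder's fixed-point theorem then produces a fixed point of $\Phi$, which is the required weak solution.

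The main obstacle is the potential non-uniqueness of the minimizer in step (i): the weighted TV plus $L^1$-fidelity functional is convex but not strictly convex, so a priori $\Phi$ is set-valued rather than single-valued. I expect to handle this either by a vanishing regularization (add a strictly convex $\varepsilon\|u\|_{L_2}^2$ term, carry out the Schauder scheme, then let $\varepsilon\to 0$ using the same $\liminf$ argument on \eqref{E:wTVw}), or by applying the Kakutani-Fan-Glicksberg set-valued fixed-point theorem after verifying that the minimizer sets are nonempty, convex, and upper semicontinuous in $w$, which is exactly what Lemma~\ref{L:twin} delivers.
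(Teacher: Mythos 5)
Your fixed-point architecture is a legitimate alternative to the paper's route (the paper instead takes a sequence of approximate solutions of the full coupled system, obtained by regularization, and passes to the limit), and several of your ingredients --- measurable selection, Aubin--Lions compactness, Minty's trick for the monotone part --- are sound. The proof, however, hinges on the continuity (or closed-graph property) of $\Phi$, and there you have a genuine gap. When $w_k\to w$ and $u_k=u[w_k]\rightharpoonup^* u$ in $BV$, identifying $u$ as a minimizer for $w$ is not enough: to pass to the limit in the parabolic step you need the \emph{forcing terms} to converge, i.e. $|\nabla u_k|\rightharpoonup^*|\nabla u|$ in $\mathcal M$ for a.a. $t$. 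Weak-* convergence in $BV$ only gives the lower-semicontinuity half, $TV_\phi(u)\le\liminf_k TV_\phi(u_k)$; a priori the weak-* limit of the measures $|\nabla u_k|$ could be strictly larger than $|\nabla u|$ (concentration of gradients), in which case the limit of $\Phi(w_k)$ solves the equation with the wrong right-hand side. The paper closes this by a two-sided argument: minimality of $u_k$ for the weight $g(w_k)$ tested against the competitor $u$ gives the reverse inequality $\limsup_k TV_{g(w)}(u_k)\le TV_{g(w)}(u)$ (after exchanging $g(w_k)$ for $g(w)$ as in \eqref{E:cnva}--\eqref{E:cnva1}), hence $TV_{g(w(t))}(u_k(t))\to TV_{g(w(t))}(u(t))$; and then the localization trick with $\varphi(t)=\kappa(t)g(w(t))-\phi$ in \eqref{tvin1}--\eqref{tvin3} upgrades this single-weight convergence to $TV_\phi(u_k)\to TV_\phi(u)$ for every nonnegative $\phi\in C_0(\Omega)$, i.e. to weak-* convergence of the total-variation measures, after which \eqref{E:wp-} gives the strong $W^{-1}_{p/p-1}$ convergence needed before Minty's trick. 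Your proposal cites the right lemma but never performs the $\limsup$ half or the localization, and without them the claim $\Phi(w_k)\to\Phi(w)$ is unproved.

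A second, smaller problem is your fallback via Kakutani--Fan--Glicksberg: the value set $\Phi(w)$ is the image of the (convex) set of minimizers under the composition of $u\mapsto(1-\lambda)|\nabla u|$ with the solution operator of the parabolic problem, and since $u\mapsto|\nabla u|$ is not affine this set need not be convex, so the set-valued fixed-point theorem does not apply directly. The vanishing-regularization variant (adding $\varepsilon\|u\|_{L_2}^2$ to make the minimizer unique) is the viable repair --- but note that it essentially reproduces the paper's strategy of approximating by a more regular problem, and the limit $\varepsilon\to0$ again requires exactly the two-sided $TV$-convergence argument described above. You should also re-derive the self-map estimate for the decoupled map: the bound \eqref{E:el6} controls $TV(u[w])$ by the \emph{input} $\|\nabla w\|$, so invariance of the ball $K$ rests on the sublinear growth $\|\nabla\tilde w\|^p_{L_p(L_p)}\le C(1+\|\nabla w\|^{p/(p-1)}_{L_{p/(p-1)}(L_2)})$ rather than on \eqref{E:fin2} as stated, which was proved for the coupled system.
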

\begin{proof}
We can prove the existence of weak solutions via approximation of~\eqref{E:wTVx}--\eqref{E:constraintp2} by a more regular problem, and consequent passage to the limit (cf. \cite{PVorotnikov12,ZV08}). 
Let $(u_m,w_m)$ be a sequence of ``approximate'' solutions (possibly with ``approximate'' data $f_m$ and $F_m$). It essentially suffices to show that \eqref{E:wTVw}--\eqref{E:constraintp2n} is the limiting case of  \eqref{E:wTVx}--\eqref{E:constraintp2}, i.e., that it is possible to pass to the limit in all the members. 

Due to estimates \eqref{E:fin2}, \eqref{E:l1est}, \eqref{E:tder}, without loss of generality we may suppose that 
\be\label{E:cnv1} u_m\to u\ \mathrm{weakly-*\ in}\ L_\infty(0,T; \mathcal{M}),\ee 
\be\label{E:cnv2} u_m\to u\ \mathrm{weakly-*\ in}\ L_p(0,T; BV),\ee
\be\label{E:cnv3} w_m\to w\ \mathrm{weakly-*\ in}\ L_\infty(0,T; L_2), \ee
\be\label{E:cnv4} w_m\to w\ \mathrm{weakly\ in}\ L_p(0,T; W^{1,p}_{0}), \ee
\be\label{E:cnv5} w'_m\to w'\ \mathrm{weakly\ in}\ L_p(0,T; W^{-1}_{p/p-1}).\ee
Note that \be u\in L_\infty(0,T; L_1)\subset L_\infty(0,T; \mathcal{M})\cap L_p(0,T; BV).\ee
By \eqref{E:wp}, \eqref{E:wp-} and the Aubin-Lions-Simon theorem \cite{ZV08},
\be\label{E:cnv7} w_m\to w\ \mathrm{strongly\ in}\ L_p(0,T; C(\overline\Omega)), \ee 
\be\label{E:cnv6} w_m\to w\ \mathrm{strongly\ in}\ C([0,T]; W^{-1}_{p/p-1}), \ee so
\be\label{E:cnv8} w_m(0)\to w(0)\ \mathrm{in}\ W^{-1}_{p/p-1}, \ee and we can pass to the limit in \eqref{E:constraintp2n}.
%\be\label{E:cnv9} g(w_m)\to g(w)\ \mathrm{in}\ L_p(0,T; C(\overline\Omega)). \ee

Using the representation \be\|v\|_{L_1}=\sup\limits_{\varphi\in L_\infty,\,\|\varphi\|_{L_\infty}\leq 1}(\varphi,v),\ee and lower semicontinuity of suprema, we can check that 
\begin{align} 
\|\mu(u(t)-f)\|_{L_1}&\leq \lim_{m\to +\infty} \inf \|\mu(u_m(t)-f)\|_{L_1}\notag\\
&=\lim_{m\to +\infty} \inf \|\mu(u_m(t)-f_m)\|_{L_1}\label{E:cnv10} 
\end{align}
for a.a. $t\in (0,T)$.
By Lemma \ref{L:twin}, \be
\label{E:cnv9} TV_{g(w(t))}(u(t))\leq \lim_{m\to +\infty} \inf TV_{g(w(t))}(u_m(t)).\ee
But 
\begin {multline} \label{E:cnva} 
|TV_{g(w_m(t))}(u_m(t))- TV_{g(w(t))}(u_m(t))|\\
\leq \|g(w_m(t))-g(w(t))\|_{L_\infty}TV(u_m(t))
\leq C(g)\|w_m(t)-w(t)\|_{L_\infty}TV(u_m(t)),
\end{multline} 
so 
\begin{eqnarray}\label{E:cnva1} 
\|TV_{g(w_m)}(u_m)- TV_{g(w)}(u_m)\|_{L_{p/2}(0,T)}
\leq C\|w_m-w\|_{L_p(0,T;L_\infty)}\|u_m\|_{L_p(0,T; BV)}\to 0.
\end{eqnarray} 
Therefore, without loss of generality, \be \label{E:cnv11} TV_{g(w_m(t))}(u_m(t))- TV_{g(w(t))}(u_m(t))\to 0 \ee for a.a. $t\in (0,T)$. Due to  \eqref{E:cnv10}, \eqref{E:cnv9}, \eqref{E:cnv11}, we can pass to the limit in \eqref{E:wTVw}.

On the other hand, \eqref{E:wTVw} with $u=u_m$, $f=f_m$, $w=w_m$, $\mathbf{u}=u$ gives
\begin{eqnarray}\label{E:wTVwin}
TV_{g(w_m(t))}(u_m(t)) + \|\mu(u_m(t)-f_m)\|_{L_1}
\leq TV_{g(w_m(t))}(u(t)) + \|\mu(u(t)-f_m)\|_{L_1}.
\end{eqnarray}
Similarly to \eqref{E:cnva}--\eqref{E:cnv11}, we can check that
\be \label{E:cnva2} TV_{g(w_m(t))}(u(t))- TV_{g(w(t))}(u(t))\to 0 .\ee From \eqref{E:cnv10}, \eqref{E:cnv9},  \eqref{E:cnv11}--\eqref{E:cnva2} we conclude that 
\be \label{E:cnv12} TV_{g(w(t))}(u_m(t))\to TV_{g(w(t))}(u(t)) \ee for a.a. $t\in (0,T)$.

Fix any non-negative function $\phi\in C_0(\Omega)$. Let \be\kappa(t)=\left\|\frac\phi{g(w(t))}\right\|_{L\infty}\ee and \be\varphi(t)=\kappa(t)g(w(t))-\phi.\ee For a.a. $t\in (0,T)$, $\varphi(t)$ is a non-negative continuous function on $\overline\Omega$. By Lemma \ref{L:twin} and \eqref{E:cnv12}, we infer that 
\begin{multline}\label{tvin1} 
TV_{\phi}(u(t))=\kappa(t) TV_{g(w(t))}(u(t)) - TV_{\varphi(t)}(u(t)) \geq \lim_{m\to+\infty}\sup (\kappa(t) TV_{g(w(t))}(u_m(t)) - TV_{\varphi(t)}(u_m(t)))\\
=\lim_{m\to+\infty}\sup TV_{\phi}(u_m(t)).
\end{multline} 
But, due to \eqref{E:twinlem},
\be\label{tvin2} TV_{\phi}(u(t))\leq \lim_{m\to+\infty}\inf TV_{\phi}(u_m(t)).\ee
Thus, \be\label{tvin3} TV_{\phi}(u(t))= \lim_{m\to+\infty} TV_{\phi}(u_m(t)),\ee
for every non-negative $\phi\in C_0(\Omega)$, which yields \be \label{E:cnv13} |\nabla u_m(t)|\to |\nabla u(t)| \ee weakly-* in $\mathcal{M}$ for a.a. $t\in (0,T)$. Then \eqref{E:wp-} implies
\be \label{E:cnv14} (1-\lambda)|\nabla u_m(t)|\to (1-\lambda)|\nabla u(t)| \ee strongly in $W^{-1}_{p/p-1}$ for a.a. $t\in (0,T)$. Due to \eqref{E:cnv2} and \eqref{E:wp-}, 
\be \label{E:cnv16} \|(1-\lambda)|\nabla u_m|\|_{L_p(0,T; W^{-1}_{p/p-1})}\leq C. \ee
By \eqref{E:cnv14},  \eqref{E:cnv16} and \cite[Proposition 2.8, Remark 2.10]{mt5}, 
\begin{eqnarray}\label{E:cnv15}
(1-\lambda) |\nabla u_m|\to (1-\lambda)|\nabla u|  \mathrm{strongly\ in}\ L_q(0,T; W^{-1}_{p/p-1}),\ \forall q<p.
\end{eqnarray}

Rewrite \eqref{E:constraintpn} as \be\label{E:constraintpm}
w' +Aw=K(u,w),\ee
where 
\begin{eqnarray} 
A(w)=-\divo(\lambda|\nabla w|^{p-2}\nabla w) +(1-\lambda)w,\ K(u,w)
=-\nabla w\cdot \nabla\lambda+(1-\lambda)|\nabla u|.
\end{eqnarray}
It is easy to see that the operator $A:W^{1,p}_0\to W^{-1}_{p/p-1}$ is monotone, coercive and hemi-continuous (cf. \cite{oldlion}).  By~\eqref{E:cnv7} and~\eqref{E:cnv15},
\[K(u_m,w_m)\to K(u,w)\ \mathrm{strongly\ in}\ L_{p/p-1}(0,T; W^{-1}_{p/p-1}).\] 
Hence, we can successfully pass to the limit in \eqref{E:constraintpm} via Minty-Browder monotonicity technique (cf. \cite{oldlion}).
\end{proof}

\begin{definition} 
Assume \eqref{E:inff} and \eqref{E:cd2} or \eqref{E:cd3}. A pair of functions $(u,w)$ from the class \be u\in L_\infty(0,T; L_1)\cap L_2(0,T; BV),\ee \be w\in L_\infty(0,T; L_2)\cap L_2(0,T; W^{1,2}_{0})\cap W_2^1(0,T; W^{-1}_{q}),\ \forall q<2,\ee is called a pseudosolution to problem \eqref{E:wTVx}--\eqref{E:constraintp2} if there is a sequence $(u_m,w_m,p_m)$ such that each pair $(u_m,w_m)$ is a weak solution to \eqref{E:wTVx}--\eqref{E:constraintp2} with $p=p_m$, 
\[u_m\to u\ \mathrm{weakly-*\ in}\ L_\infty(0,T; \mathcal{M}),\]
\[u_m\to u\ \mathrm{weakly-*\ in}\ L_2(0,T; BV),\]
\[w_m\to w\ \mathrm{weakly-*\ in}\ L_\infty(0,T; L_2),\]
\[w_m\to w\ \mathrm{weakly\ in}\ L_2(0,T; W^{1,2}_{0}),\]
\[w_m\to w\ \mathrm{strongly\ in}\ L_2(0,T; L_q),\ \forall q<+\infty,\]
\[w_m\to w\ \mathrm{strongly\ in}\ C([0,T]; W^{-1}_{q}),\ \forall q<2,\] 
\[w'_m\to w'\ \mathrm{weakly\ in}\ L_2(0,T; W^{-1}_{q}),\ \forall q<2,\]
\[p_m\to 2.\]
\end{definition}

\begin{theorem}\label{T:t2} 
Assume \eqref{E:inff} and \eqref{E:cd2} or \eqref{E:cd3}. Then there exists a pseudosolution to \eqref{E:wTVx}--\eqref{E:constraintp2}.
\end{theorem}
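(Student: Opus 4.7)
The plan is to take any sequence $p_m\searrow 2$, apply Theorem~\ref{T:t1} for each $p=p_m$ to obtain weak solutions $(u_m,w_m)$, and then extract subsequences that converge in exactly the senses demanded by the definition of pseudosolution. Because that definition records only the limiting convergences and does not by itself require passing to the limit in the weighted total variation inequality or in the PDE, the task reduces to producing approximate weak solutions, deriving a priori bounds that are uniform in $m$, and harvesting compactness.

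First I would verify that Theorem~\ref{T:t1} applies for each $p_m>2$. The only nontrivial hypothesis is $f\in BV$: this holds directly under \eqref{E:cd3}, and under \eqref{E:cd2} it follows from the chain $W^1_q\subset W^1_1\subset BV$ on the bounded planar domain $\Omega$. Fix a sequence $p_m\in(2,3)$ with $p_m\to 2$ and let $(u_m,w_m)$ denote the corresponding weak solutions supplied by Theorem~\ref{T:t1}.

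Next I would revisit the a priori estimates of Subsection~\ref{ssec:ape} and check that the constants can be chosen independently of $p_m$ in a neighbourhood of $2$. The estimates \eqref{E:tvi}, \eqref{E:estu1}--\eqref{E:estu2}, and \eqref{E:el5}--\eqref{E:el6} are $p$-independent and yield uniform bounds on $\|u_m\|_{L_\infty(0,T;L_2)}$ and on $TV(u_m(t))$ in terms of $\|\nabla w_m(t)\|$. The delicate step is the differential inequality for $w_m$. Under \eqref{E:cd3}, estimate \eqref{E:ele1} avoids embedding \eqref{E:wp} altogether, so \eqref{E:fin3} holds with a constant depending only on $f$, $\mu_2$, $\lambda_0$, $c_g$, and $C_g$. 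Under \eqref{E:cd2}, the terms $\|u_m-f\|_{L_{p_m/(p_m-1)}}$ and $\|w_m\|_{L_{p_m/(p_m-1)}}$ in \eqref{E:next1} are uniformly controlled by the $L_2$ bounds on $u_m$ and $w_m$, since $p_m/(p_m-1)<2$ and $\Omega$ has finite measure, while the term $(w_m,|\nabla f|)$ is handled by H\"older using $\nabla f\in L_q$ together with the $p$-independent embedding $W^{1,2}_0\subset L_{q/(q-1)}$. Gronwall's inequality then yields
\begin{equation*}
\|w_m\|_{L_\infty(0,T;L_2)}+\|w_m\|_{L_{p_m}(0,T;W^{1,p_m}_0)}+\|u_m\|_{L_\infty(0,T;L_1)}+\|u_m\|_{L_2(0,T;BV)}\leq C,
\end{equation*}
and a H\"older argument in space and time upgrades this to $\|w_m\|_{L_2(0,T;W^{1,2}_0)}\leq C$. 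The PDE \eqref{E:constraintpn}, together with the boundedness of $\Delta_{p_m,\lambda}:W^{1,p_m}_0\to W^{-1}_{p_m/(p_m-1)}$ and the embedding \eqref{E:wp-}, then delivers $\|w_m'\|_{L_2(0,T;W^{-1}_q)}\leq C$ for every $q<2$.

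The last step is routine compactness: Banach--Alaoglu and reflexivity supply all the weak and weak-* limits appearing in the definition of pseudosolution, while the Aubin--Lions--Simon theorem applied to the compact chains $W^{1,2}_0\overset{c}{\hookrightarrow} L_q\hookrightarrow W^{-1}_q$ for $q<\infty$ and $L_2\overset{c}{\hookrightarrow} W^{-1}_q$ for $q<2$ produces the strong convergences of $w_m$ in $L_2(0,T;L_q)$ and in $C([0,T];W^{-1}_q)$, respectively. The convergence $p_m\to 2$ is built into the construction. The main obstacle in this program is precisely the uniformity in $p_m$ of the a priori bounds: the formal derivations of Subsection~\ref{ssec:ape} are valid for any $p\geq 2$, but the Morrey constant of $W^{1,p}_0\subset L_\infty$ diverges as $p\downarrow 2$, so one must replace that embedding either by an $L_2$-based estimate (under \eqref{E:cd2}) or by the sharper argument~\eqref{E:ele1} (under \eqref{E:cd3}) to obtain constants that do not deteriorate along the approximating sequence.
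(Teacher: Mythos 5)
Your proposal is correct and follows essentially the same route as the paper, whose proof of Theorem~\ref{T:t2} is a one-line appeal to estimates \eqref{E:fin2}, \eqref{E:l1est}, \eqref{E:tder} and the proof of Theorem~\ref{T:t1}: take weak solutions for $p_m\downarrow 2$ (legitimate since \eqref{E:cd2} or \eqref{E:cd3} implies $f\in BV$), establish bounds uniform in $p_m$, and extract the convergences demanded by the definition of pseudosolution via compactness. Your additional observation that the $(w,|\nabla f|)$ term must be controlled through \eqref{E:w2} or \eqref{E:ele1} rather than the Morrey embedding \eqref{E:wp} --- whose constant degenerates as $p\downarrow 2$ --- correctly identifies the one delicate point the paper leaves implicit.
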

\begin{proof}
The proof is based on estimates \eqref{E:fin2}, \eqref{E:l1est}, \eqref{E:tder} and the proof of Theorem \ref{T:t1}.
\end{proof}
%%%%%--------------------------------------------------------------------------------
%%%%%--------------------------------------------------------------------------------

\section{Experimental Results}\label{sec:exper}

\subsection{Implementation details}\label{ssec:impl}

The proposed scheme is implemented using the dual minimization~\cite{Ch04} for the weighted TV (Eqn.~\eqref{E:wTV}) and explicit Euler finite difference scheme for the non-homogenous linear diffusion (Eqn.~\eqref{E:constraint}). The edge indicator function $g(w) = 1/(1+w^2)$ is used for all the results reported here. We obtained similar results for other $g$ functions. The adaptive $\mu_1$ based results are reported here unless otherwise stated explicitly and $\mu_2$ provided similar results whereas $\mu_3$ provided slightly blurred cartoon components, see Section~\ref{ssec:adap} for details. The parameters $\delta x=1$, $\delta t=1/8$ and $\theta=10^{-2}$ are fixed, and the best results according to the $max$($\abs{u^{n+1}-u^n}$,$\abs{v^{n+1}-v^n}$$\leq\epsilon$) are shown. By constant $\lambda$ and $\mu$ in the results we mean they are taken as constant value $0.5$ and $1$ respectively. The implementation of the proposed scheme is done for the constant choice ($\mu\gets$constant, $\lambda\gets$constant), the adaptive choice ($\mu\gets$adaptive, $\mu\gets$constant) and the multiscale case ($\mu\gets$multiscale, $\lambda\gets$constant). 

The algorithm is visualized in MATLAB 7.8(R2009a) on a 64-bit Windows 7 laptop with 3GB RAM, 2.20GHz CPU.  It takes on average $<10$ \textit{sec} for $50$ iterations for $3$ channels image of size $481\times 321$. Implementation is done over the following databases: Brodatz texture collection\footnote{\href{http://multibandtexture.recherche.usherbrooke.ca}{http://multibandtexture.recherche.usherbrooke.ca}}, Mosaic art images\footnote{\href{http://www.cse.cuhk.edu.hk/leojia/projects/texturesep/}{http://www.cse.cuhk.edu.hk/leojia/projects/texturesep/}}, Kodak Lossless True Color Image Suite\footnote{\href{http://r0k.us/graphics/kodak}{http://r0k.us/graphics/kodak}}, Color Test Images Database\footnote{\href{http://www.hlevkin.com/TestImages/classic.htm}{http://www.hlevkin.com/TestImages/classic.htm}}, USC-SIPI Image Database\footnote{\href{http://sipi.usc.edu}{http://sipi.usc.edu}} and Simulated Brain Database\footnote{\href{http://brainweb.bic.mni.mcgill.ca/brainweb/}{http://brainweb.bic.mni.mcgill.ca/brainweb/}}, Berkeley segmentation dataset of $500$\footnote{\href{http://www.eecs.berkeley.edu/Research/Projects/CS/vision/bsds/}{http://www.eecs.berkeley.edu/Research/Projects/CS/vision/bsds/}}.

%%%%%--------------------------------------------------------------------------------
\subsection{Image decomposition results}\label{ssec:imag}

%%%%%--------------------------------------------------------------------------------
\subsubsection{Gray-scale images}\label{sssec:gray}
%%%%%------- Figure 6 --------------------------------------------------------------------------------------
\begin{figure*}
\centering
\[\begin{array}{cccc}
\begin{minipage}[l]{2.8cm}
\includegraphics[width=2.8cm,height=2.8cm]{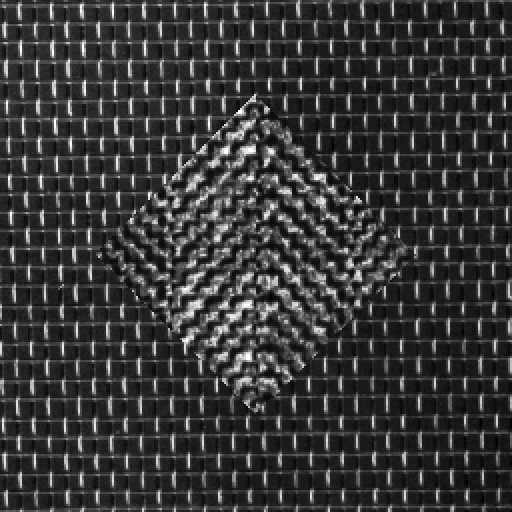}
\end{minipage}
&
\begin{minipage}[c]{2.8cm}
\includegraphics[width=2.8cm,height=2.8cm]{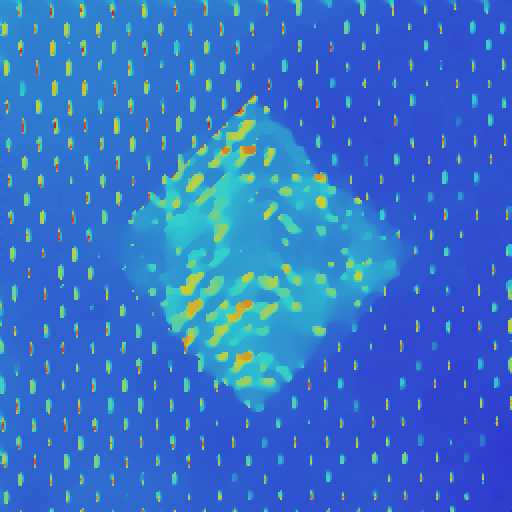}
\includegraphics[width=2.8cm,height=2.8cm]{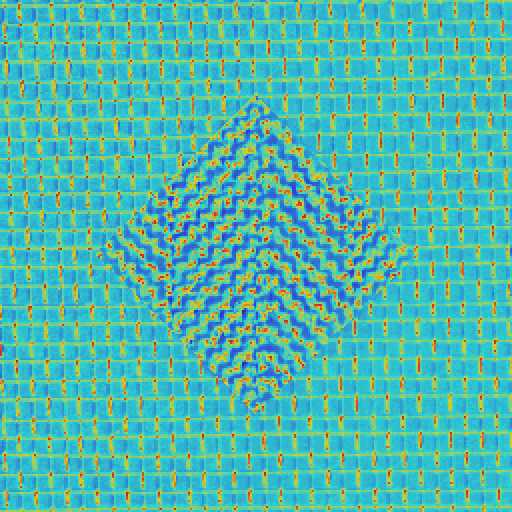}
\end{minipage}
&
\begin{minipage}[c]{2.8cm}
\includegraphics[width=2.8cm,height=2.8cm]{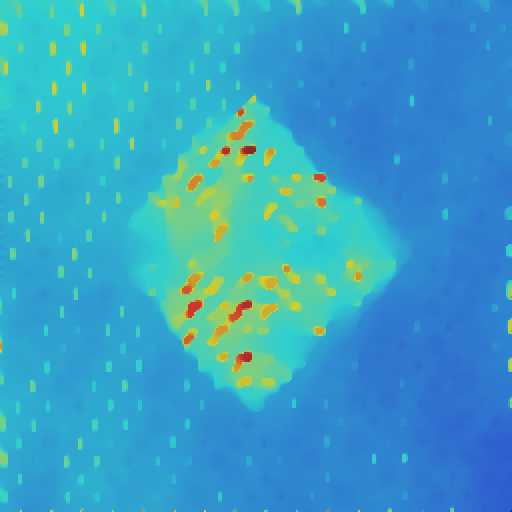}
\includegraphics[width=2.8cm,height=2.8cm]{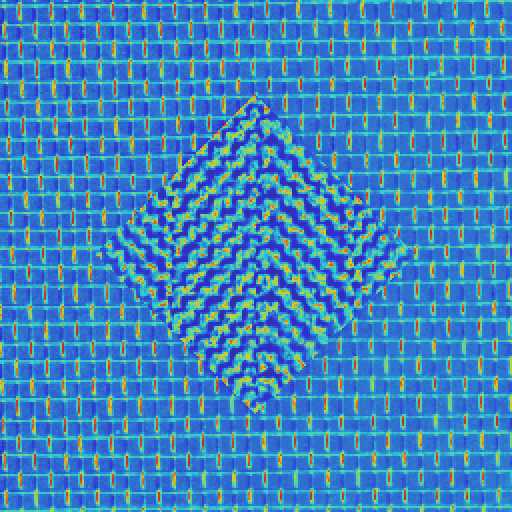}
\end{minipage}
&\quad
\begin{minipage}[r]{6.5cm}
\includegraphics[width=6.5cm,height=5cm]{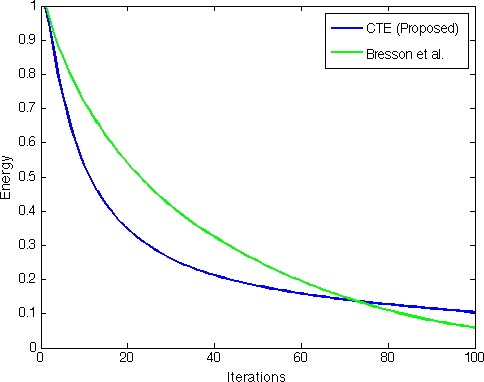}
\end{minipage}\\
\mbox{\scriptsize{(a) Original }} &
\mbox{\scriptsize{(b) u \& v (Our) }} &
\mbox{\scriptsize{(c) u \& v (\cite{BE07})}} &
\mbox{\scriptsize{(d) Energy Vs Iterations}} 
\end{array}\]
	\caption{(Color online) Comparison of our constant $\mu$, $\lambda$ proposed scheme (second column) with Bresson et al~\cite{BE07} (third column), shows that our scheme preserves large-scale textures and shape boundaries (stopping parameter $\epsilon=10^{-4}$).  Energy value comparison between our scheme CTE and Bresson et al.~\cite{BE07} shows similar convergence property. Best viewed electronically, zoomed in.}\label{fig:Bresson}
\end{figure*}
%%%%%------- Figure 6 --------------------------------------------------------------------------------------
We first show decomposition results of~\cite{BE07} with our model in Figure~\ref{fig:Bresson} for a synthetic image which consists of two different texture regions. Comparing the cartoon - texture decomposition of our scheme (Figure~\ref{fig:Bresson}(b)) with the results of  Bresson et al (Figure~\ref{fig:Bresson}(c)), we see that they behave different visually. For example, the shape of the diamond at the center is preserved well in our scheme whereas the~\cite{BE07} scheme blurs it in the final result. Figure~\ref{fig:Bresson}(c) shows the energy value against number of iterations for the same synthetic image, which indicates that our adaptive CTE scheme decreases the energy values comparable to~\cite{BE07} model.  More grayscale image decomposition results are given in Figure~\ref{fig:teaser}. We see that the cartoon component obtained are piecewise constant segments indicating the potential for image segmentation~\cite{Pr09}. The texture and edges component are complementary and it is clear that edges are based on the cartoon subregions, see for example, Figure~\ref{fig:teaser}(d) top row.  

%%%%%--------------------------------------------------------------------------------
\subsubsection{Color images}\label{sssec:color}

%%%%%------- Figure 7 --------------------------------------------------------------------------------------
\begin{figure*}[t]
\centering
\[\begin{array}{ccc}
\begin{minipage}[c]{4cm}
\includegraphics[width=4cm,height=4.5cm]{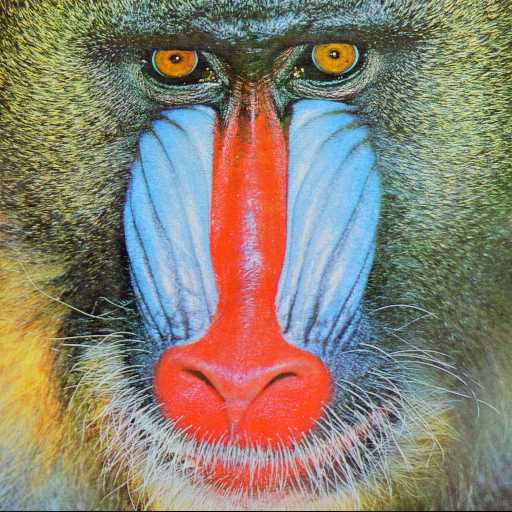}
\end{minipage}
&
\begin{minipage}[c]{4cm}
\includegraphics[width=4cm,height=4.5cm]{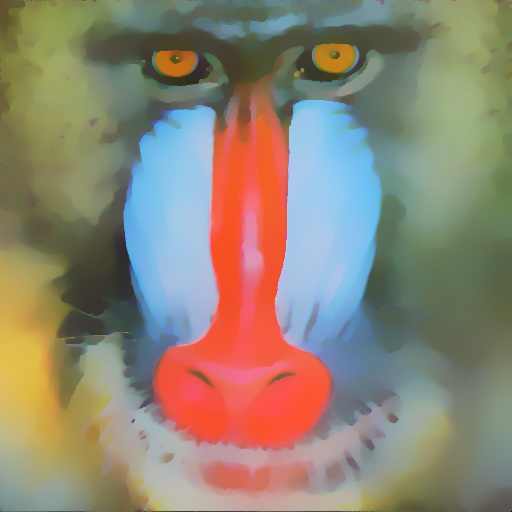}
\end{minipage}
\begin{minipage}[r]{2.3cm}
\includegraphics[width=2.25cm,height=2.23cm]{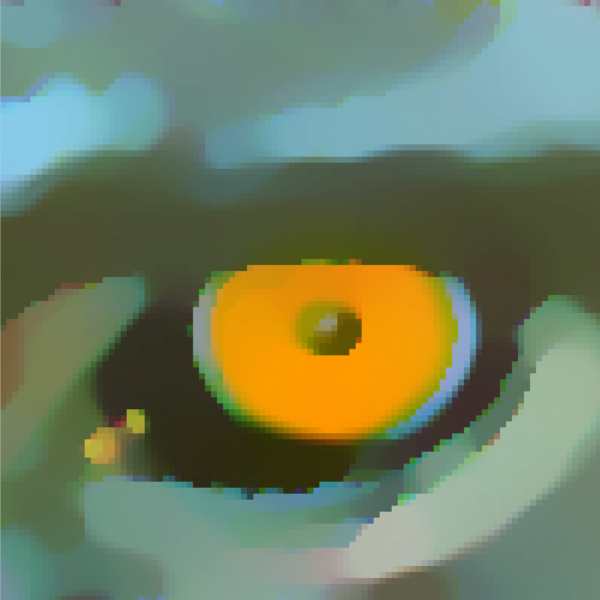}\\[.2ex]
\includegraphics[width=2.25cm,height=2.23cm]{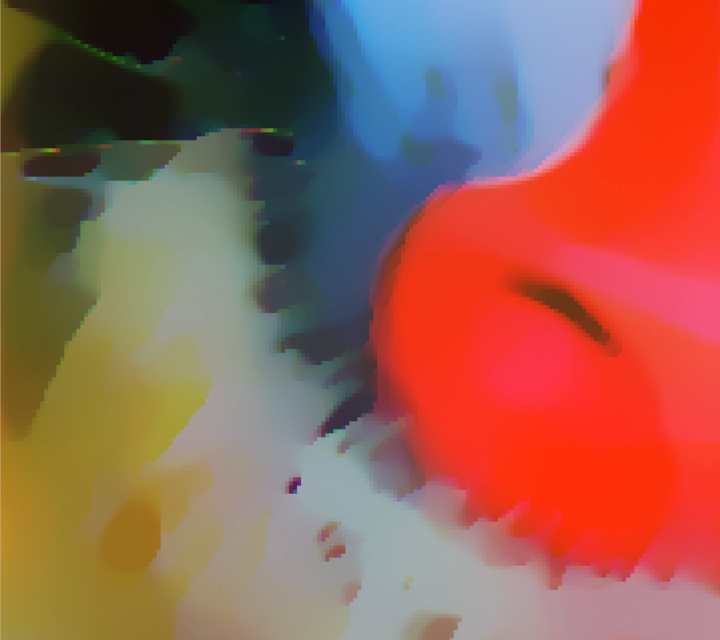}
\end{minipage}
&
\begin{minipage}[c]{4cm}
\includegraphics[width=4cm,height=4.5cm]{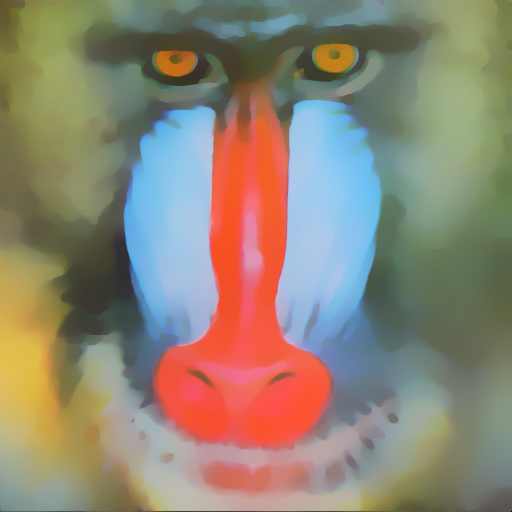}
\end{minipage}
\begin{minipage}[r]{2.3cm}
\includegraphics[width=2.25cm,height=2.23cm]{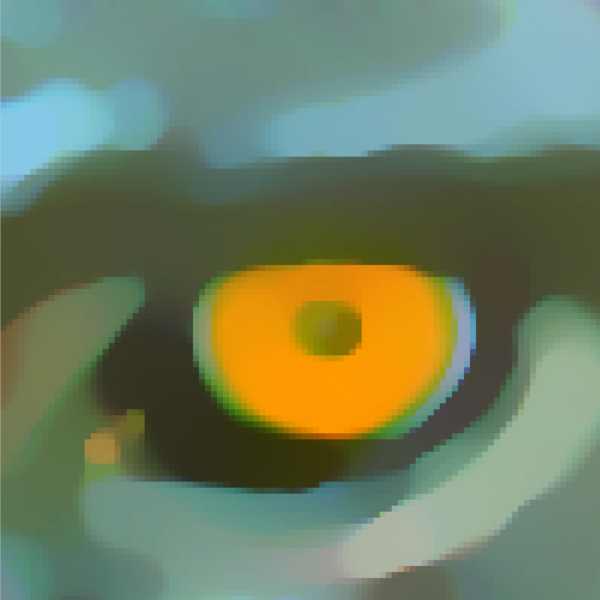}\\[.2ex]
\includegraphics[width=2.25cm,height=2.23cm]{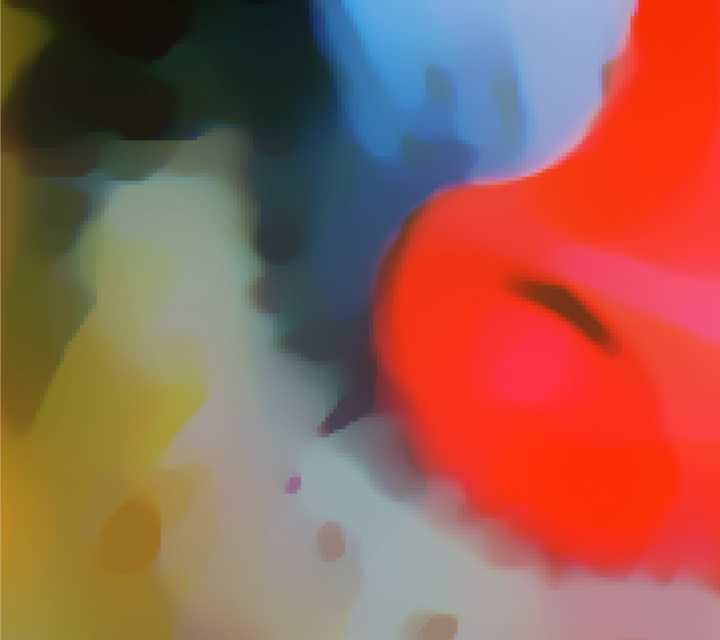}
\end{minipage}\\
\mbox{\scriptsize{(a) Original Image }} &
\mbox{\scriptsize{(b) Our constant $\mu$, $\lambda$ based scheme}} &
\mbox{\scriptsize{(c) Bresson et al~\cite{BE07}}} 
\end{array}\]

\[\begin{array}{ccc}
\begin{minipage}[c]{4cm}
\includegraphics[width=4cm,height=4.5cm]{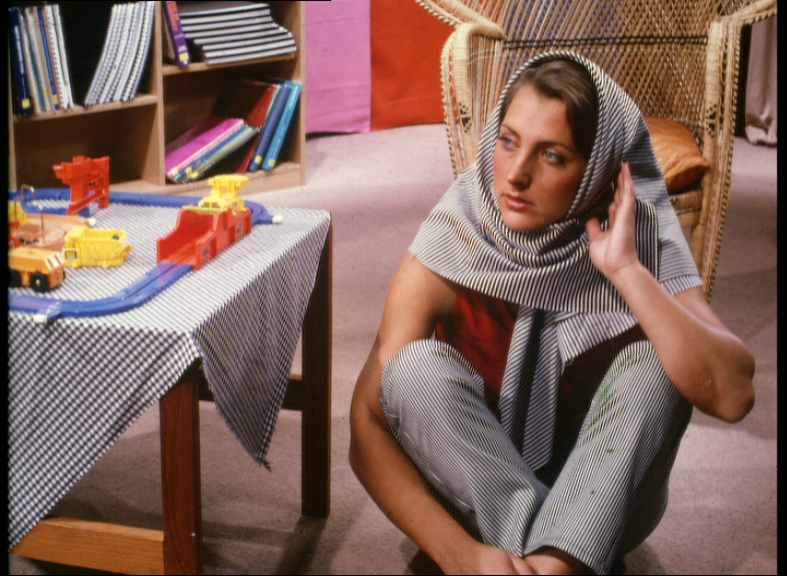}
\end{minipage}
&
\begin{minipage}[c]{4.cm}
\includegraphics[width=4cm,height=4.5cm]{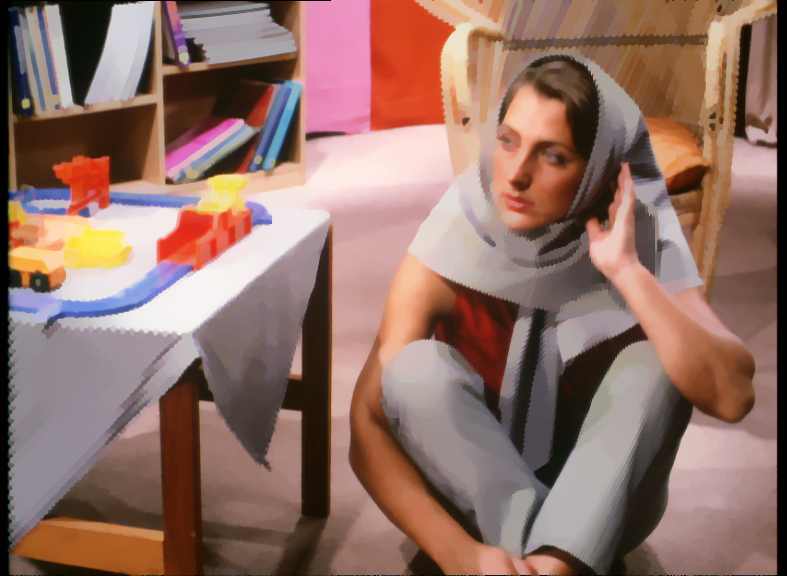}
\end{minipage}
\begin{minipage}[r]{2.3cm}
\includegraphics[width=2.25cm,height=2.23cm]{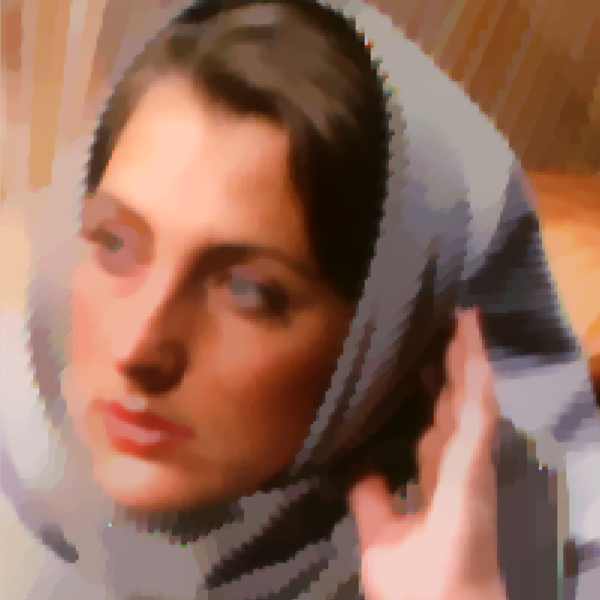}\\[.2ex]
\includegraphics[width=2.25cm,height=2.23cm]{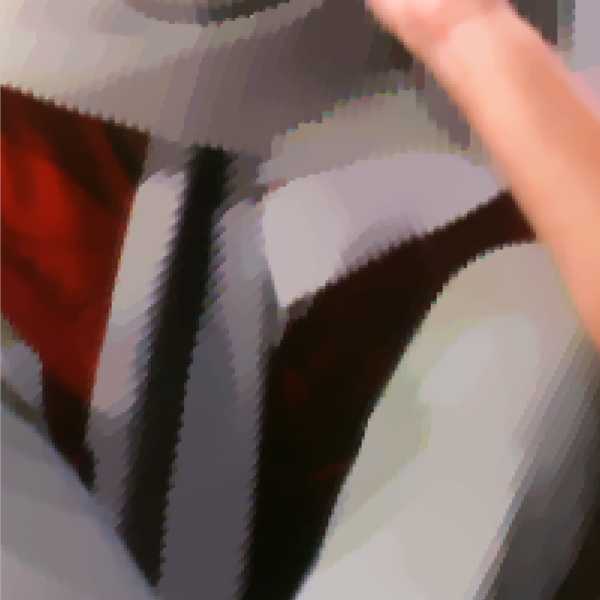}
\end{minipage}
&
\begin{minipage}[c]{4.cm}
\includegraphics[width=4cm,height=4.5cm]{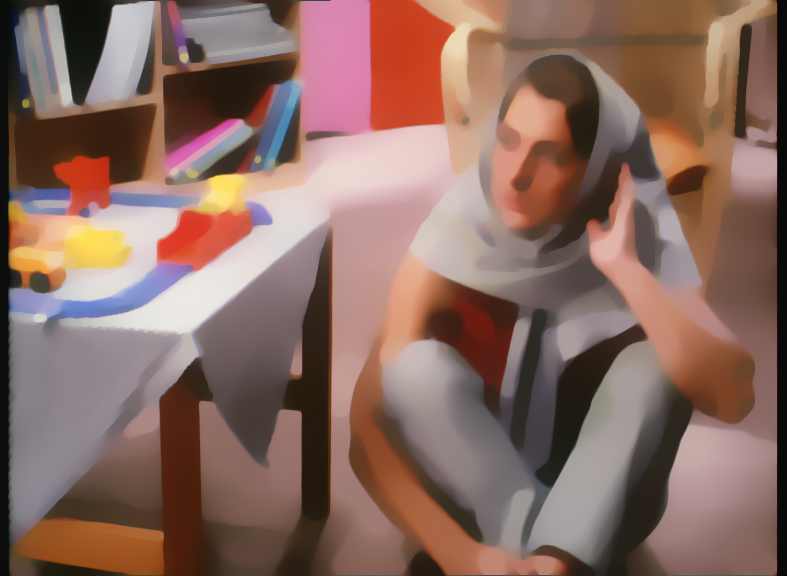}
\end{minipage}
\begin{minipage}[r]{2.3cm}
\includegraphics[width=2.25cm,height=2.23cm]{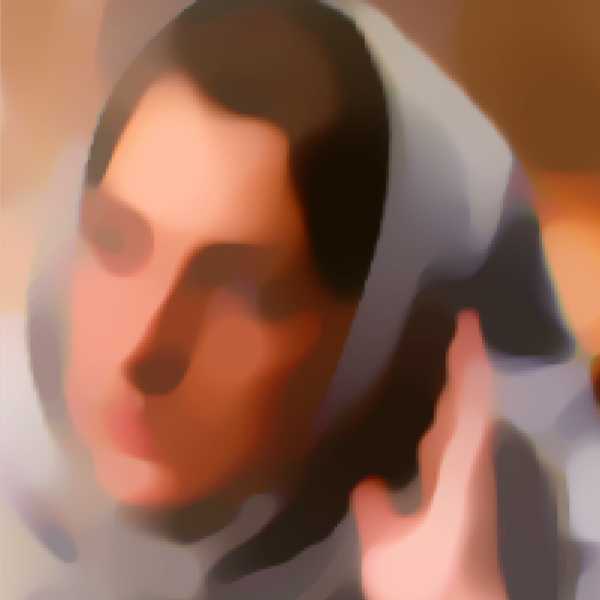}\\[.2ex]
\includegraphics[width=2.25cm,height=2.23cm]{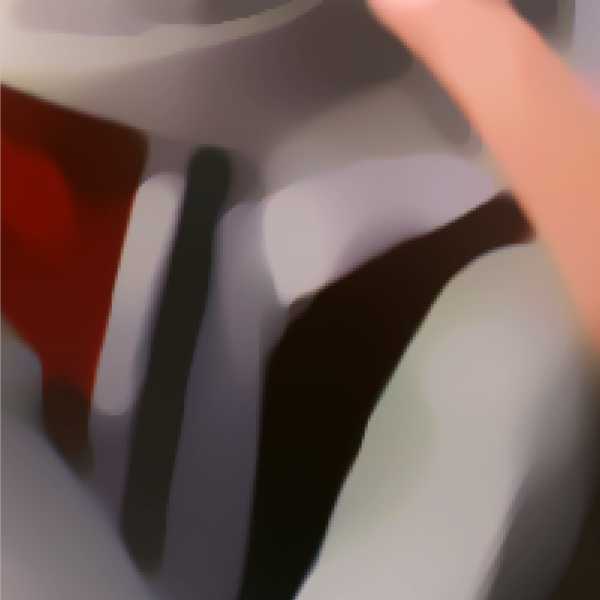}
\end{minipage}\\
\mbox{\scriptsize{(a) Original Image}} &
\mbox{\scriptsize{(b) Our adaptive $\mu_1$, constant $\lambda$ based scheme}} &
\mbox{\scriptsize{(c) Bresson et al~\cite{BE07}}} 
\end{array}\]
\caption{(Color online) Our diffusion constrained total variation scheme (stopping parameter $\epsilon=10^{-4}$) provides better edge preserving cartoon component $u$ when compared to the traditional TV regularization model~\cite{BE07}. Even with constant $\mu$, $\lambda$ the proposed scheme provides better results (see top row (b)). The crop regions highlight that the proposed scheme provides better preservation of large scale textures compared to~\cite{BE07} model. Best viewed electronically, zoomed in.}\label{fig:CTE_Vs_Bresson}
\end{figure*}
%%%%%------- Figure 7 --------------------------------------------------------------------------------------
%%%%%------- Figure 8 --------------------------------------------------------------------------------------
\begin{figure*}
\centering
	\includegraphics[width=3.cm, height=3cm]{CTE_images_barbara}
	\includegraphics[width=3.cm, height=3cm]{CTE_images_mandrill}
	\includegraphics[width=3.cm, height=3cm]{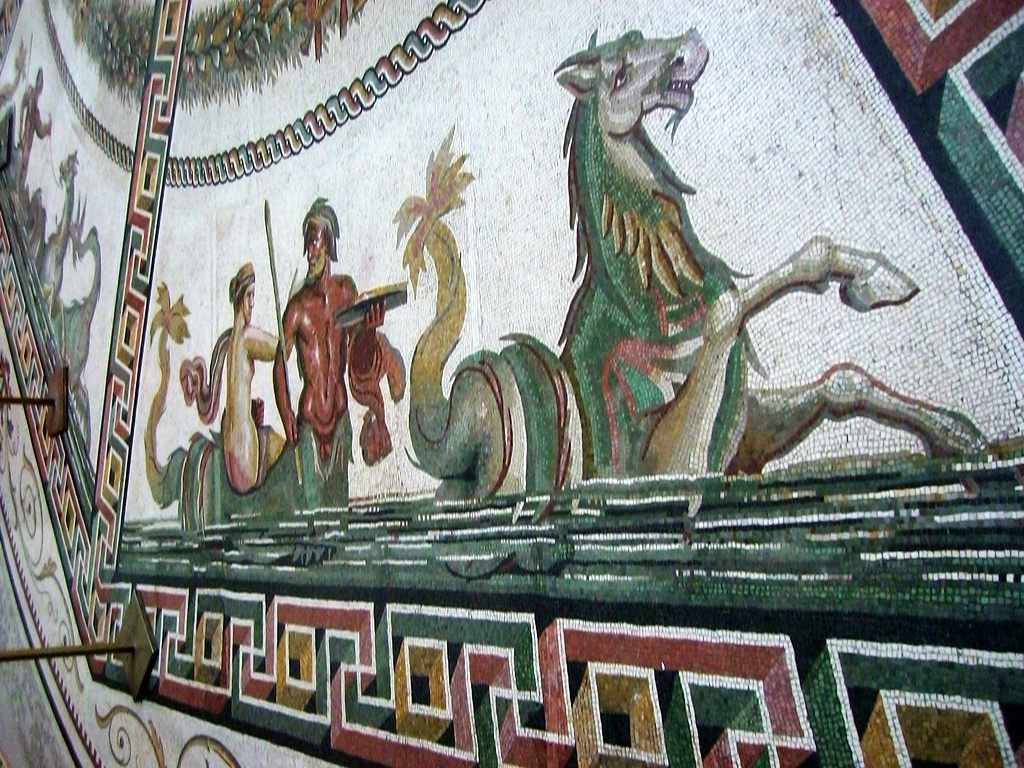}
	\includegraphics[width=3.cm, height=3cm]{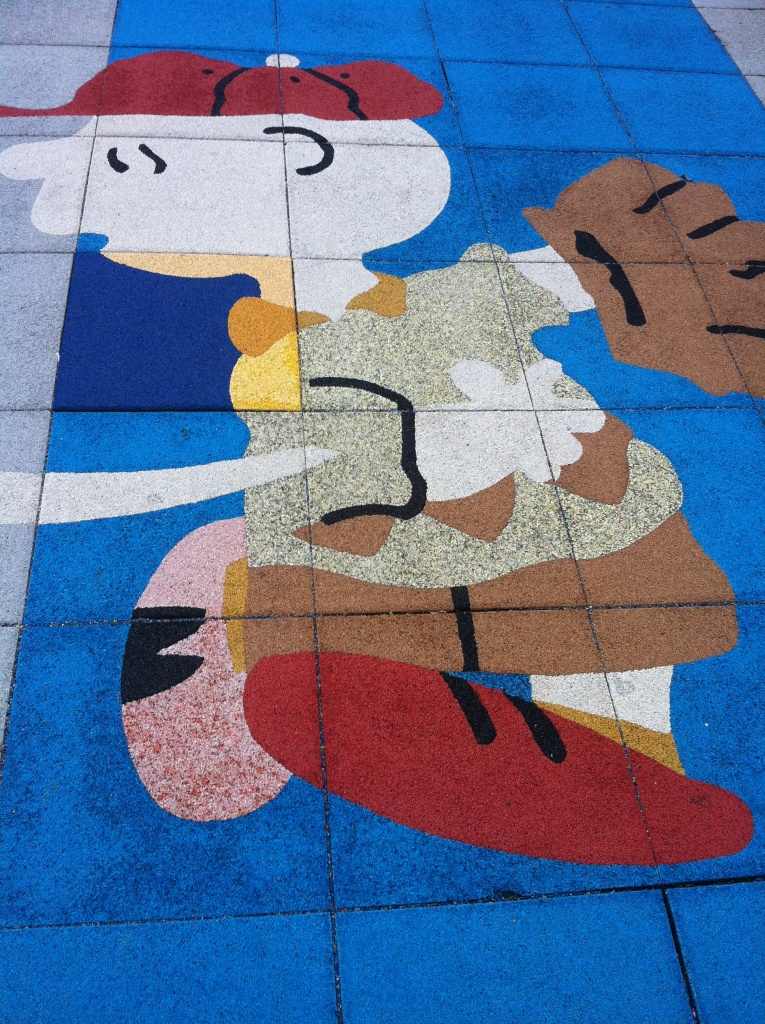}
	\includegraphics[width=3.cm, height=3cm]{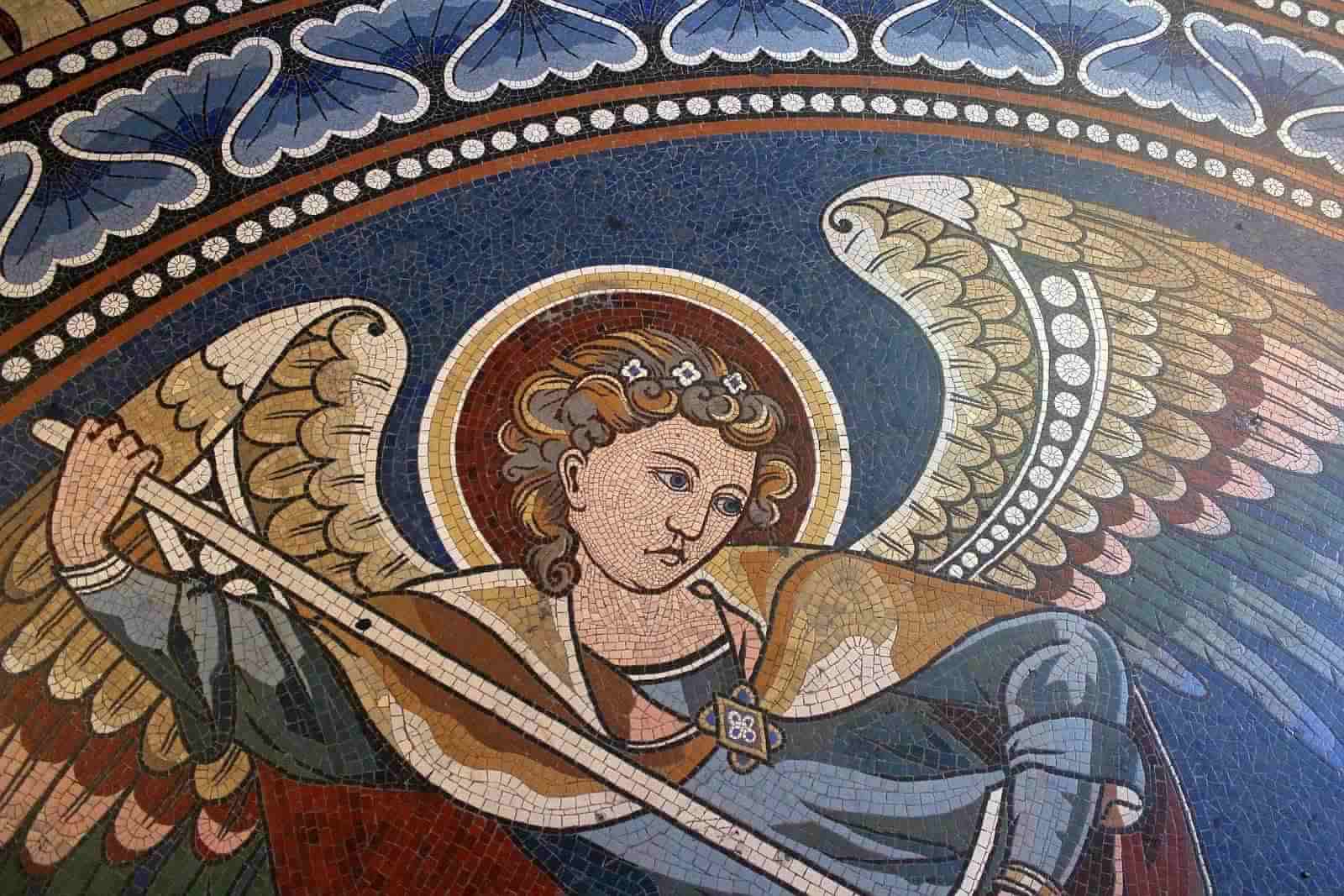}\\
	\includegraphics[width=3cm, height=3cm]{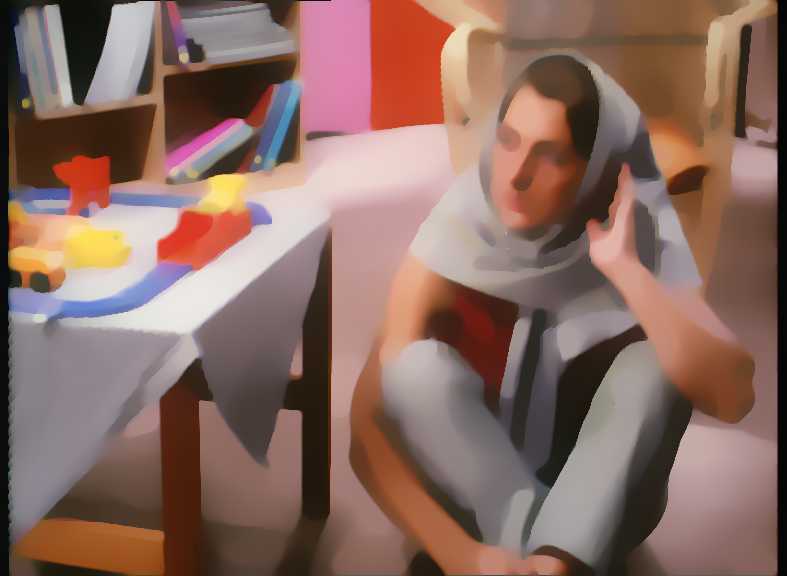}
	\includegraphics[width=3cm, height=3cm]{CTE_Results_Lambda_Constant100_mandril_u}
	\includegraphics[width=3cm, height=3cm]{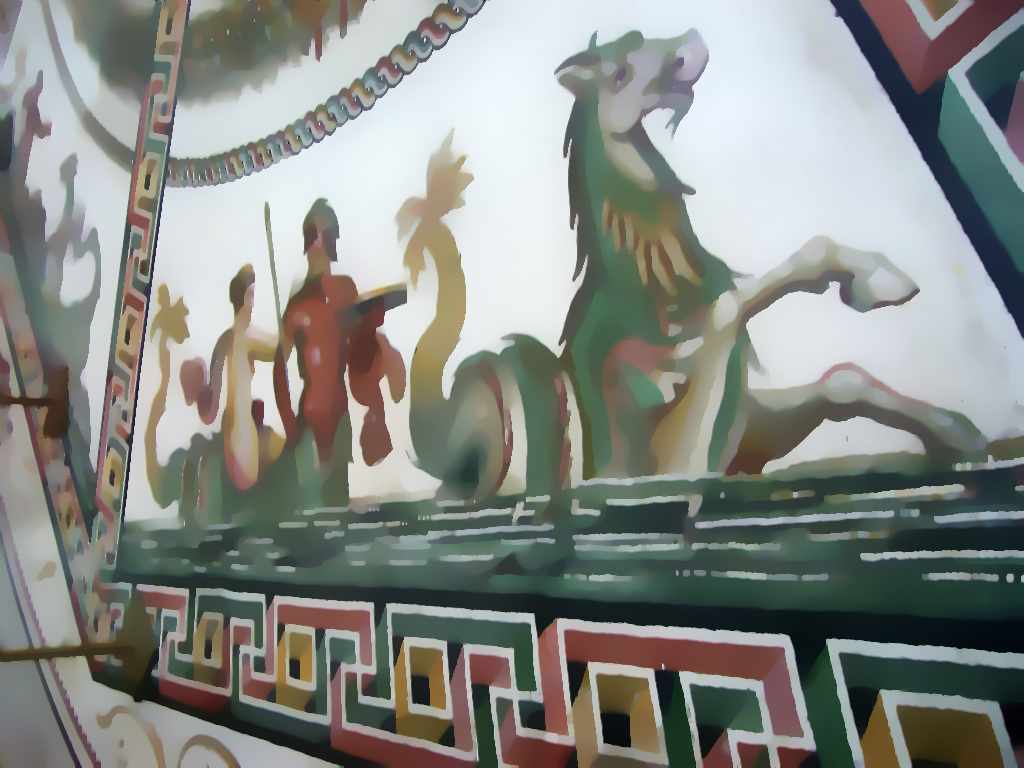}
	\includegraphics[width=3cm, height=3cm]{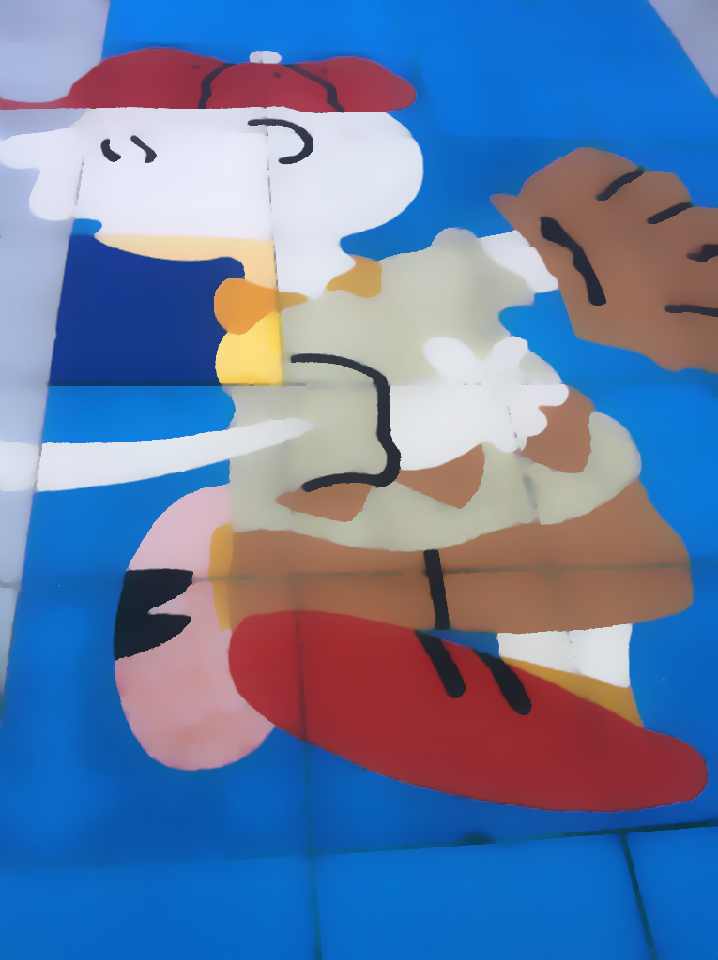}	
	\includegraphics[width=3cm, height=3cm]{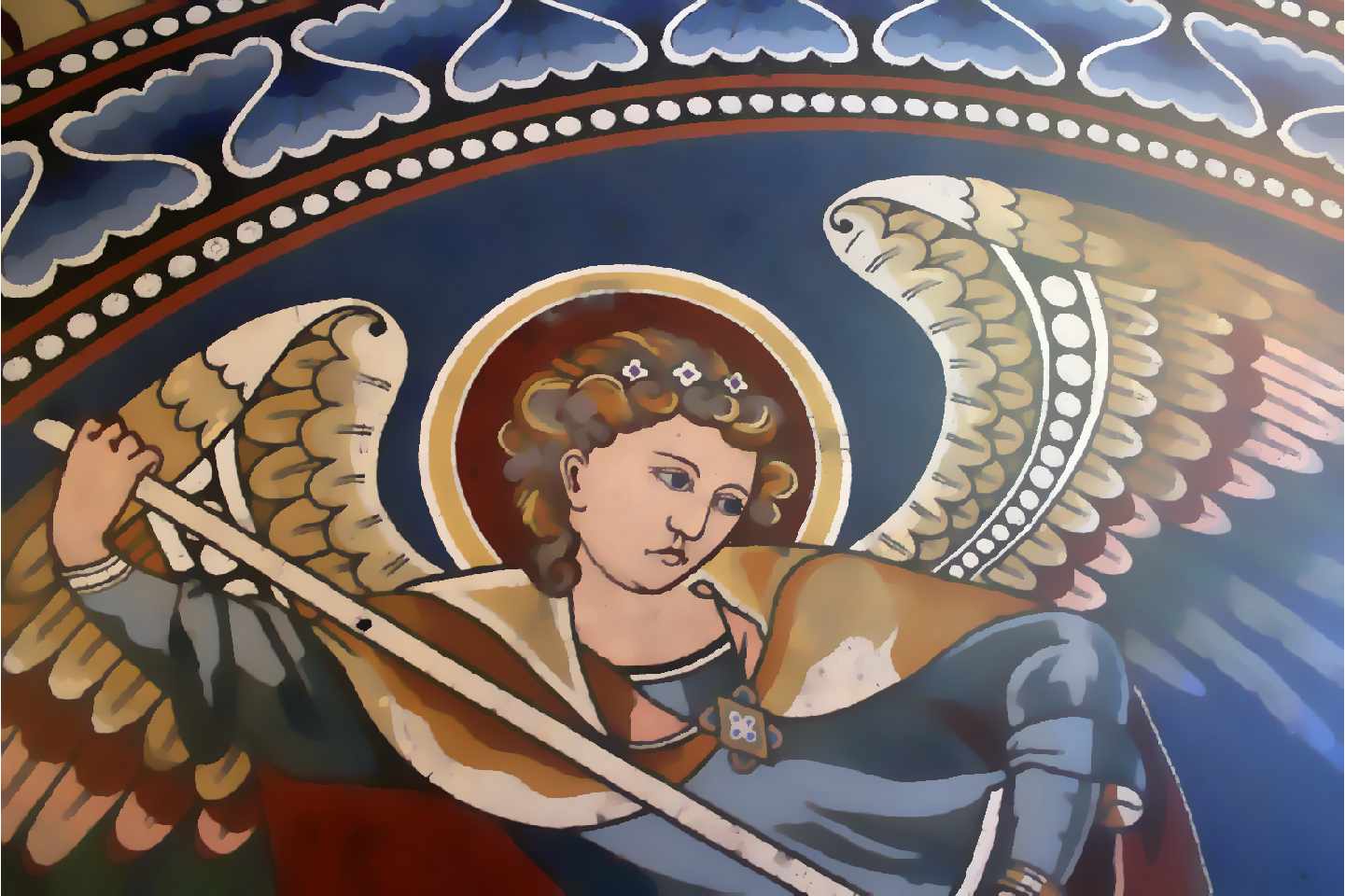}\\	
	\includegraphics[width=3cm, height=3cm]{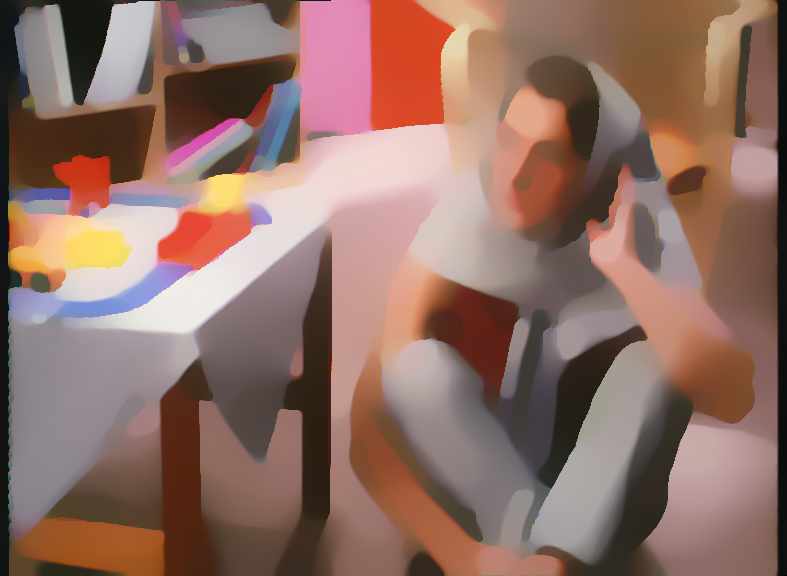}
	\includegraphics[width=3cm, height=3cm]{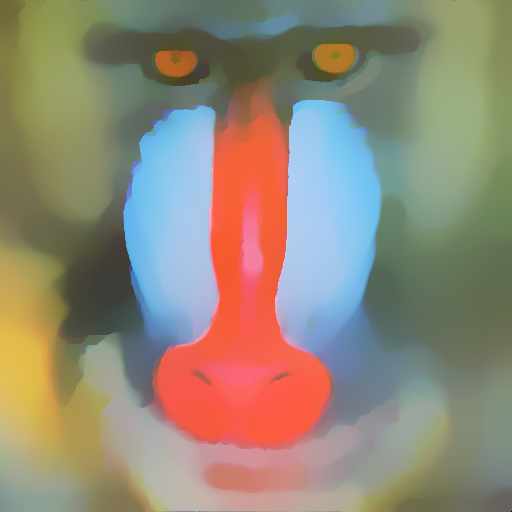}
	\includegraphics[width=3cm, height=3cm]{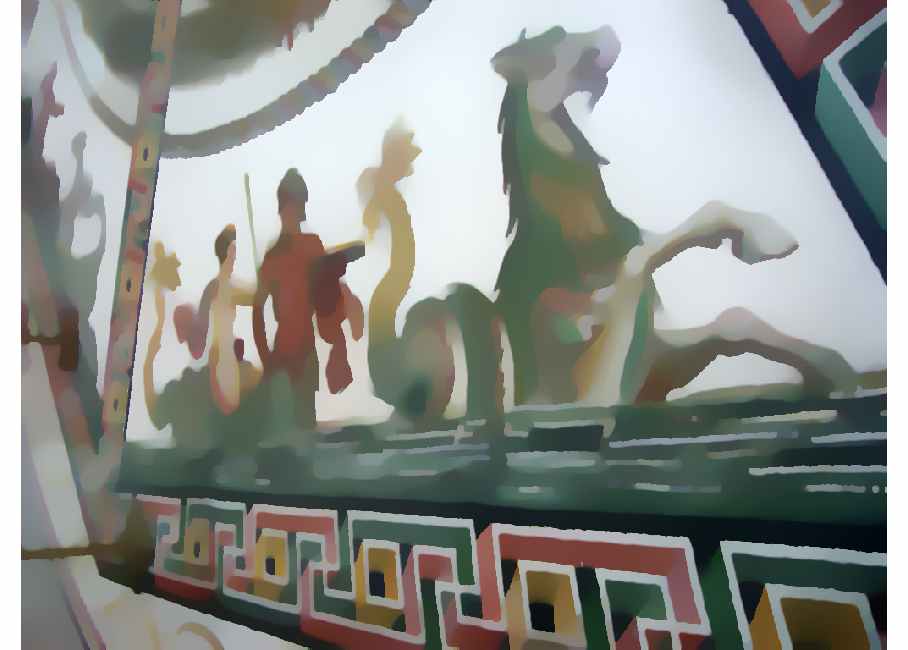}
	\includegraphics[width=3cm, height=3cm]{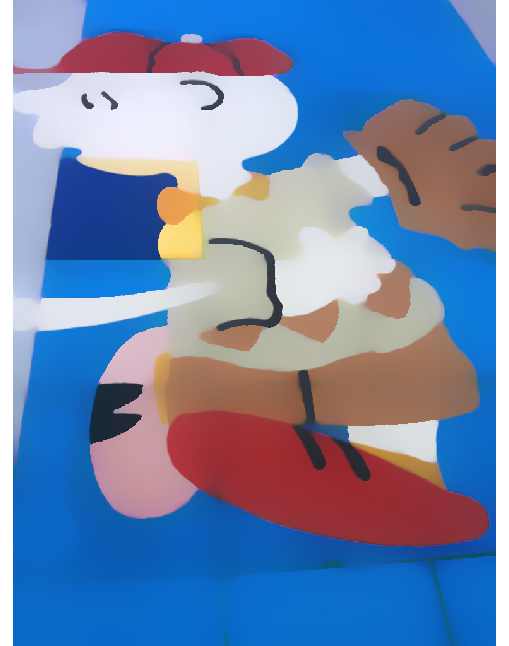}	
	\includegraphics[width=3cm, height=3cm]{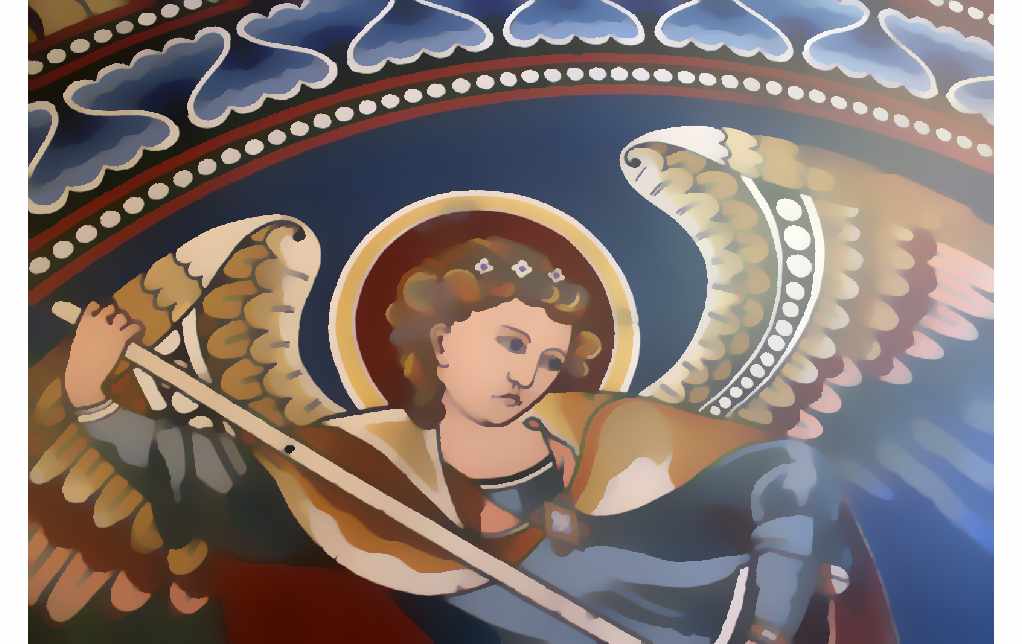}\\
	\includegraphics[width=3cm, height=3cm]{CTE_Results_Lambda_Variable400_barbara_u}
	\includegraphics[width=3cm, height=3cm]{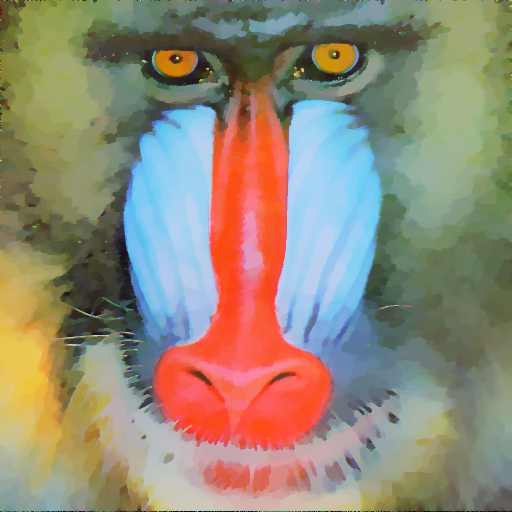}
	\includegraphics[width=3cm, height=3cm]{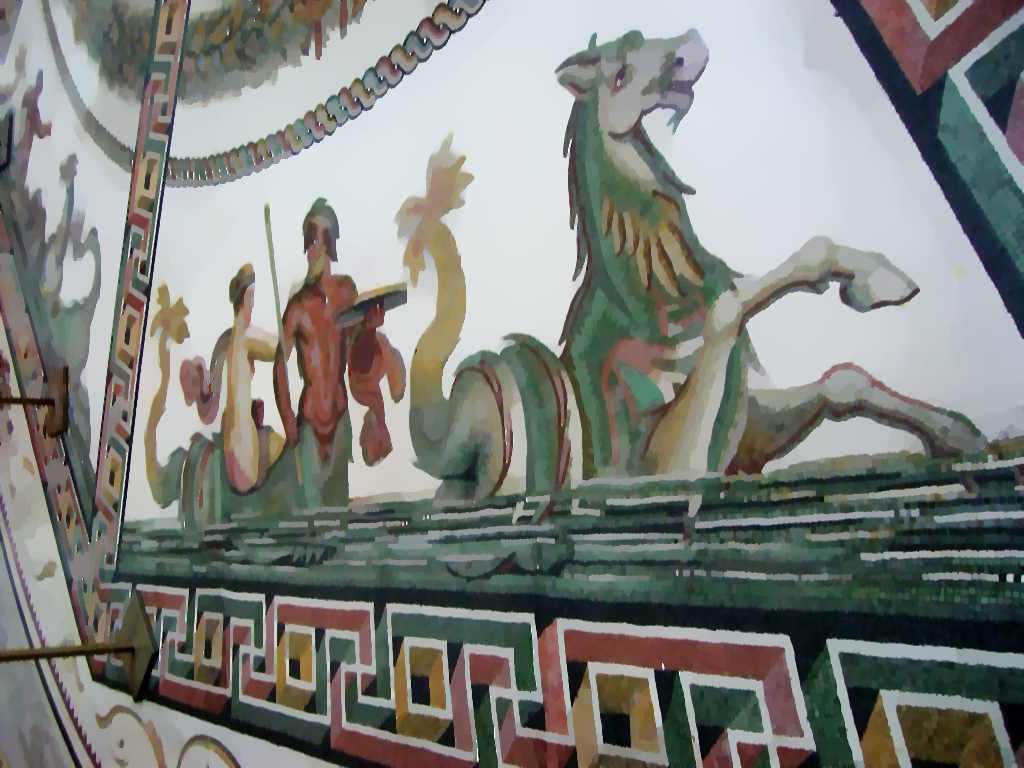}
	\includegraphics[width=3cm, height=3cm]{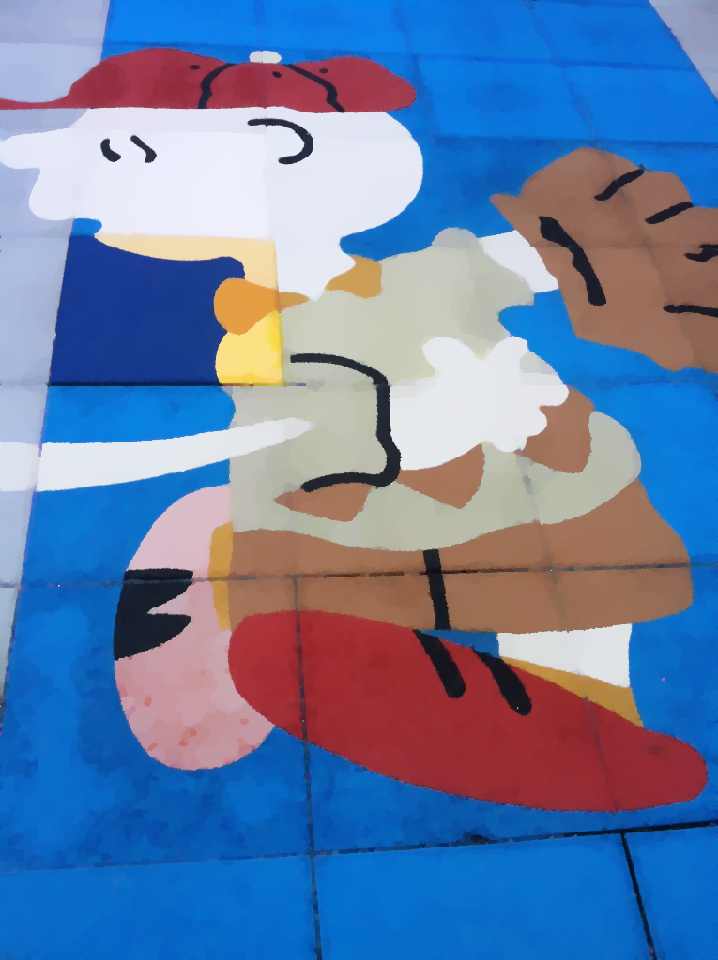}	
	\includegraphics[width=3cm, height=3cm]{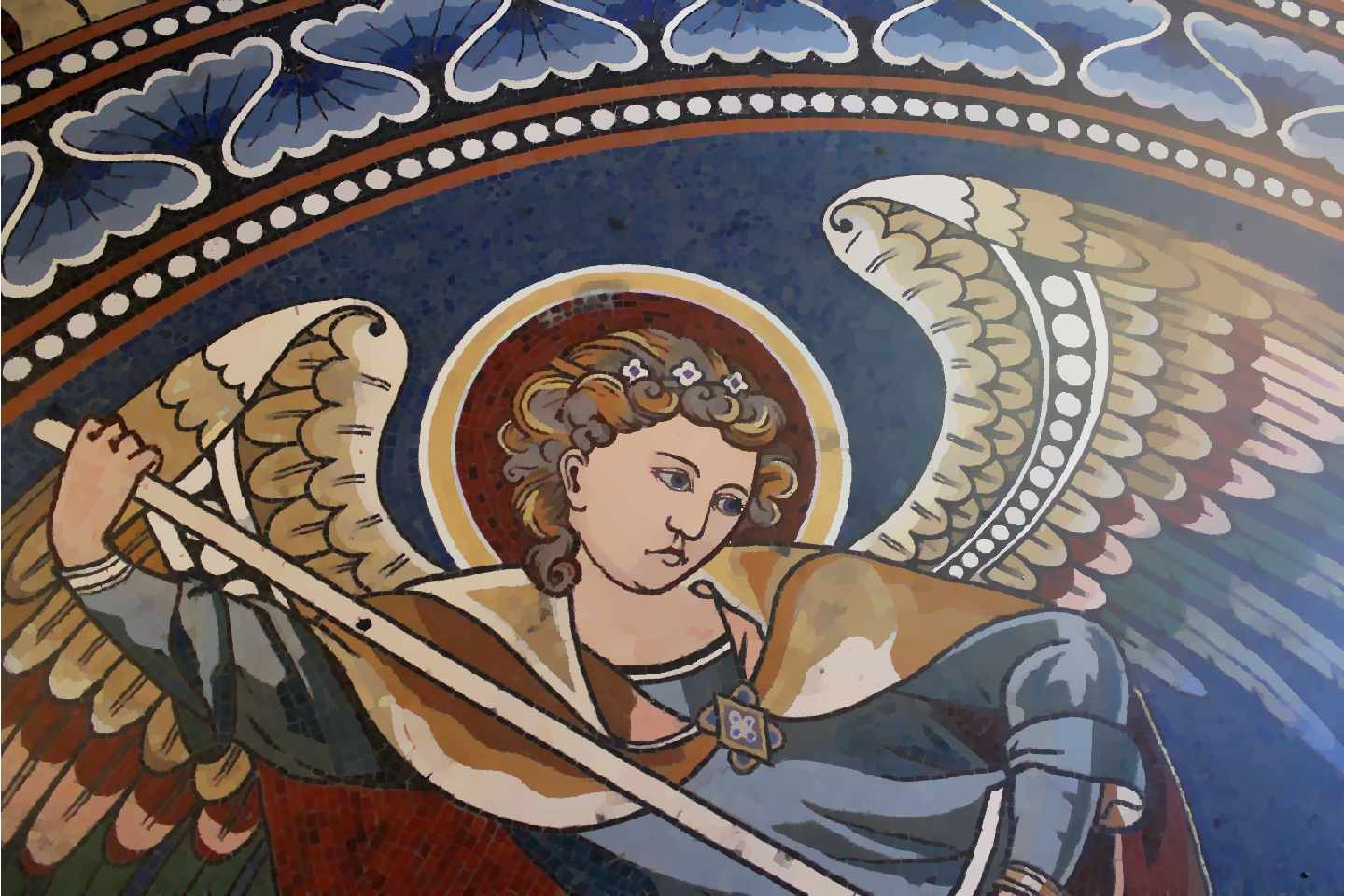}\\		

	\caption{(Color online) Effect of constant $\mu$ in color image decomposition using our coupled scheme on the cartoon ($u$) component.
	First row: Original color RGB images. Cartoon component result for constant $\mu$, $\lambda$ with stopping parameter,
	Second row:  $\epsilon=10^{-4}$, and Third row: $\epsilon=10^{-6}$. As can be seen decreasing the stopping parameter removes more texture details and provides piecewise constant cartoon image. Last row: Shows the proposed scheme results with adaptive $\mu_1$, see Eqn.~\eqref{E:CTElambda} and constant $\lambda$ (stopping parameter $\epsilon=10^{-4}$) . Best viewed electronically, zoomed in.}\label{fig:cartoons}
\end{figure*}
%%%%%------- Figure 8 --------------------------------------------------------------------------------------
%%%%%------- Figure 9 --------------------------------------------------------------------------------------
\begin{figure*}
\centering
	\includegraphics[width=3cm, height=3cm]{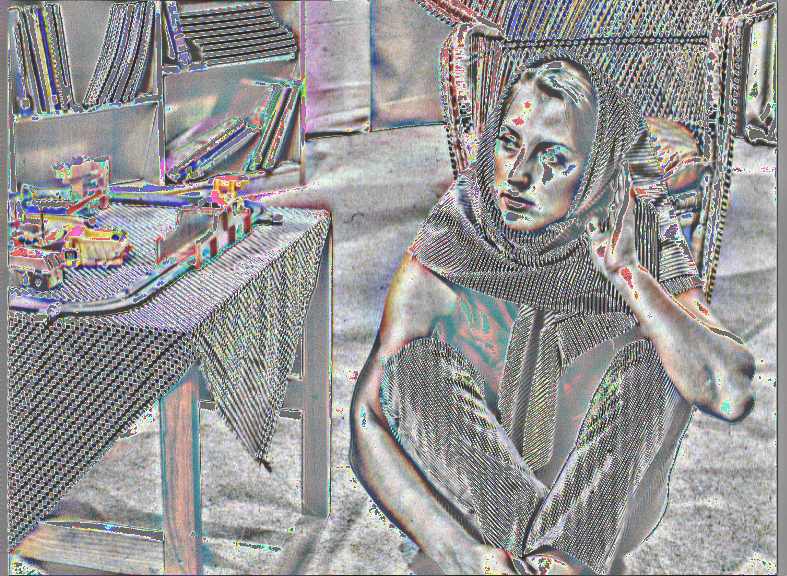}
	\includegraphics[width=3cm, height=3cm]{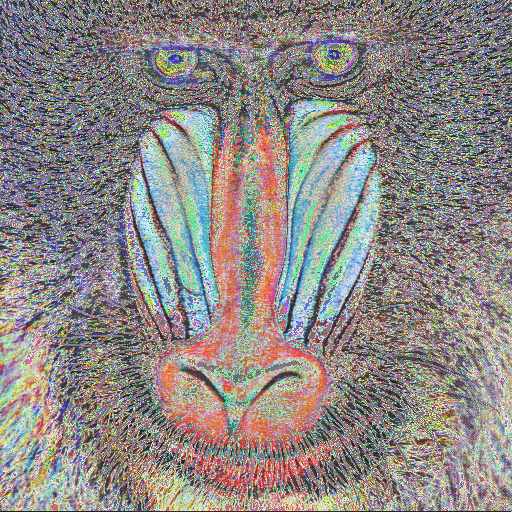}
	\includegraphics[width=3cm, height=3cm]{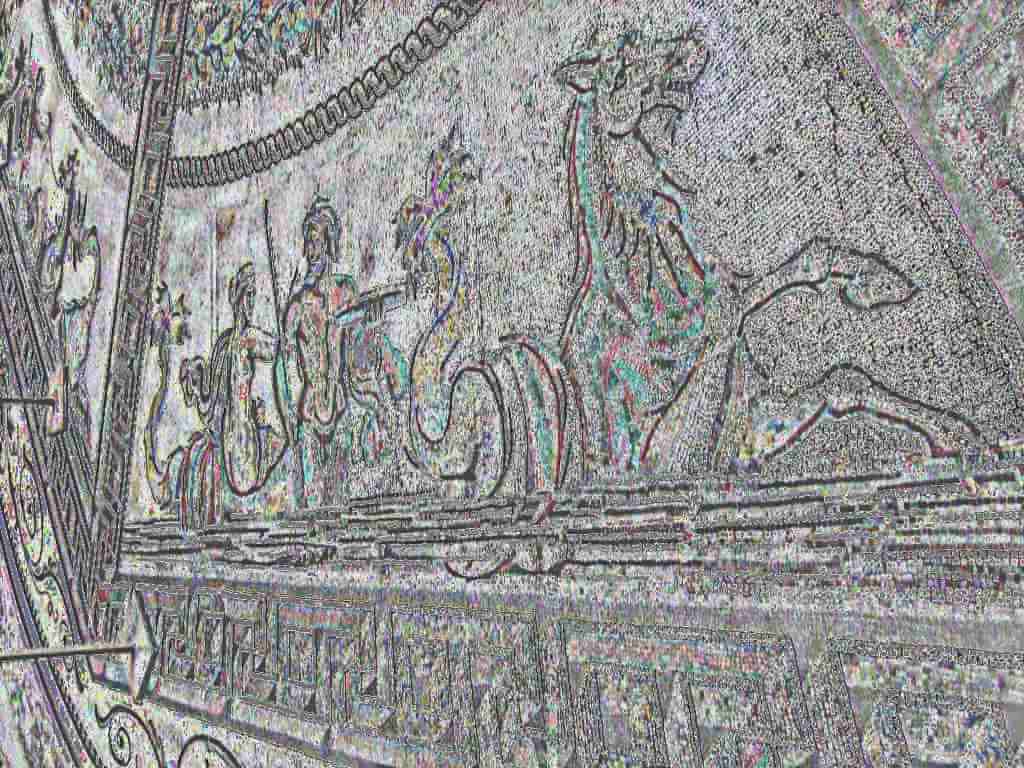}
	\includegraphics[width=3cm, height=3cm]{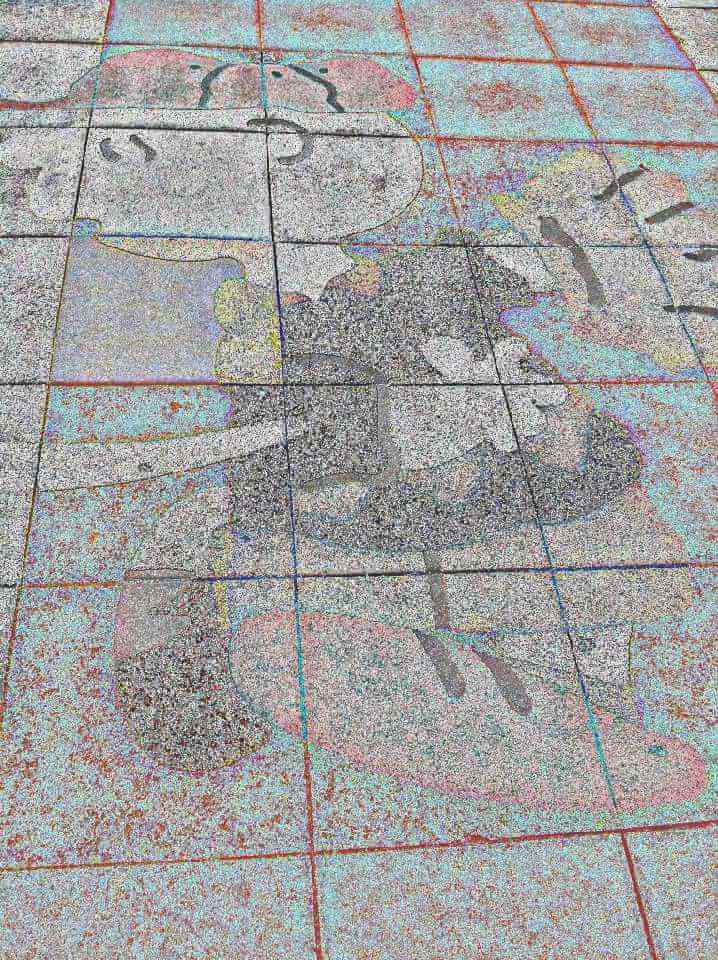}	
	\includegraphics[width=3cm, height=3cm]{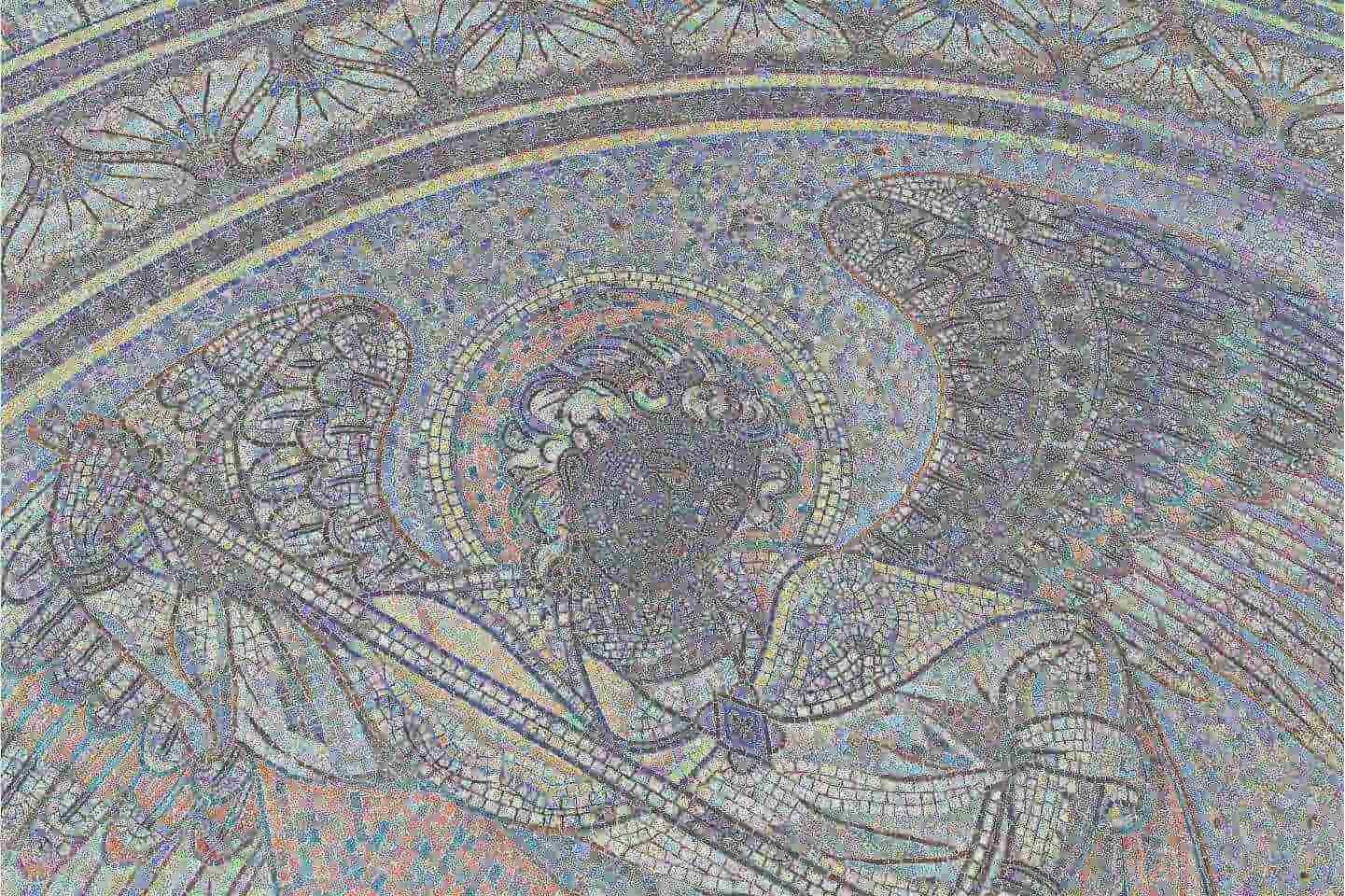}\\	
	\includegraphics[width=3cm, height=3cm]{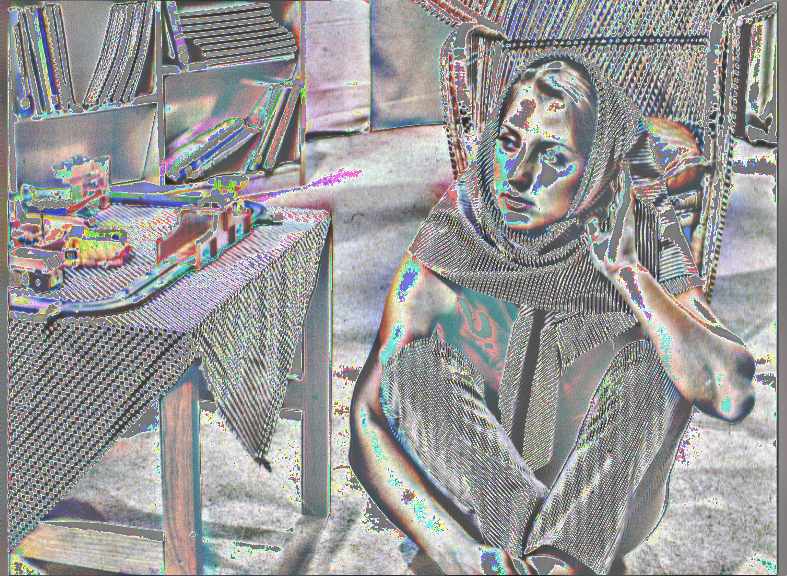}
	\includegraphics[width=3cm, height=3cm]{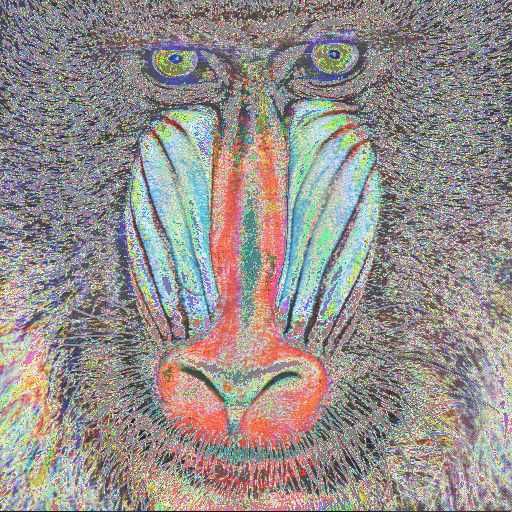}
	\includegraphics[width=3cm, height=3cm]{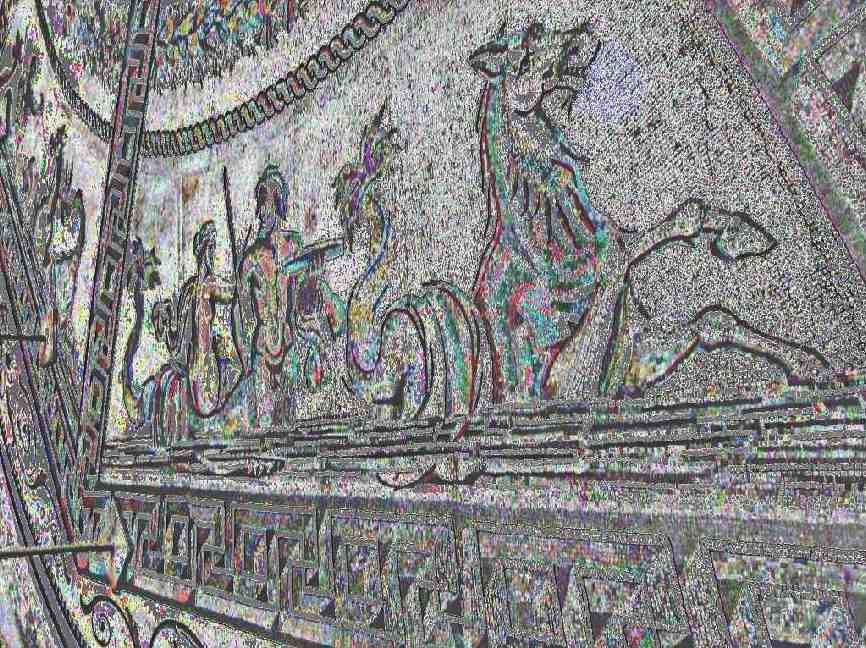}
	\includegraphics[width=3cm, height=3cm]{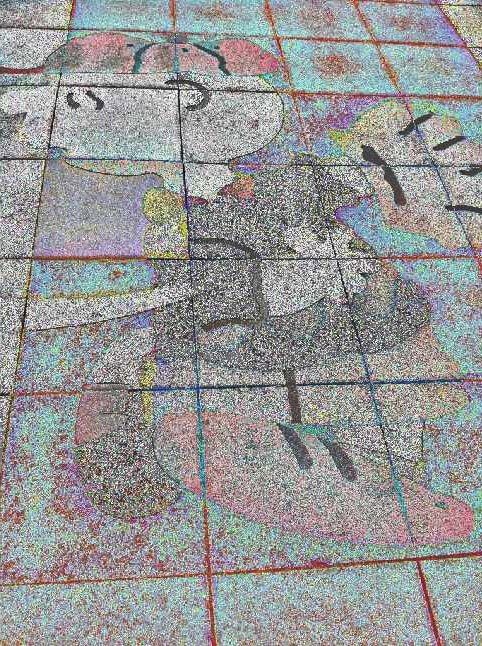}	
	\includegraphics[width=3cm, height=3cm]{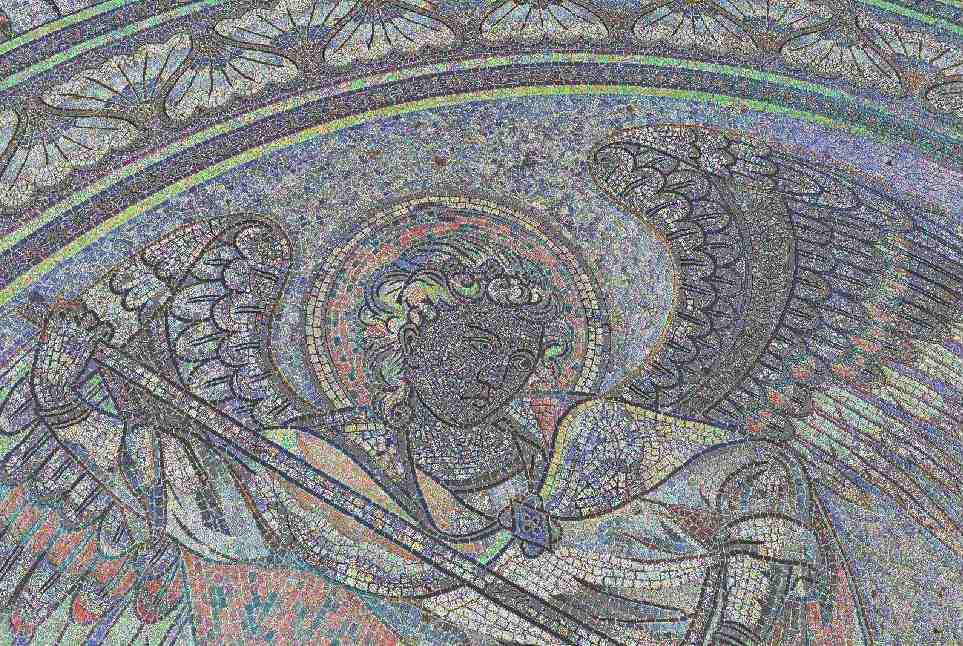}\\
	\includegraphics[width=3cm, height=3cm]{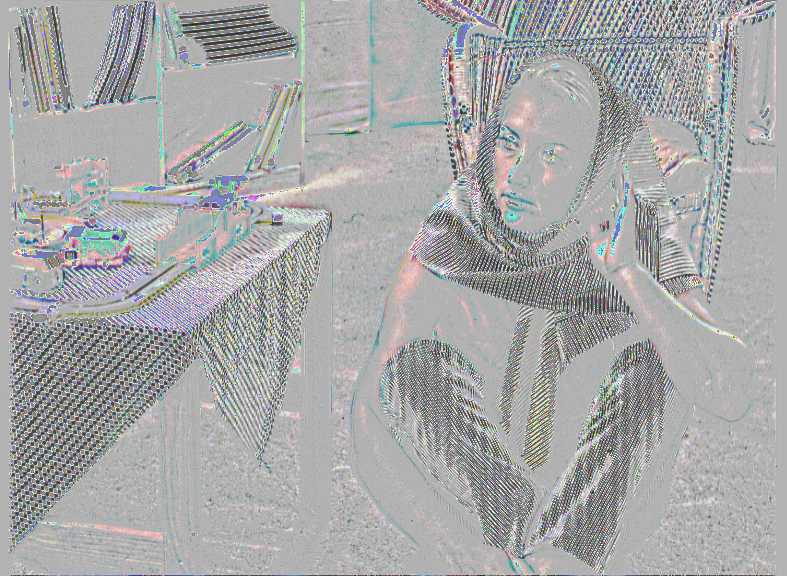}
	\includegraphics[width=3cm, height=3cm]{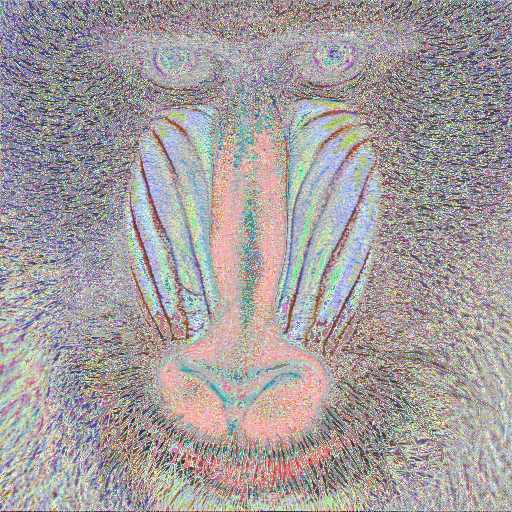}
	\includegraphics[width=3cm, height=3cm]{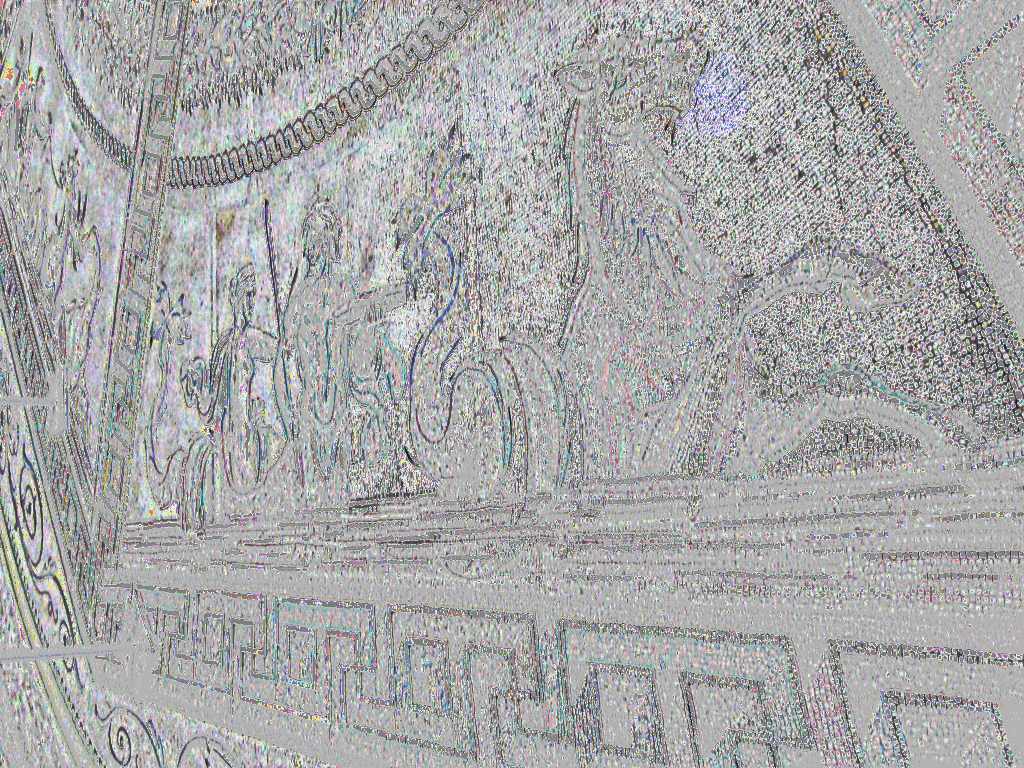}
	\includegraphics[width=3cm, height=3cm]{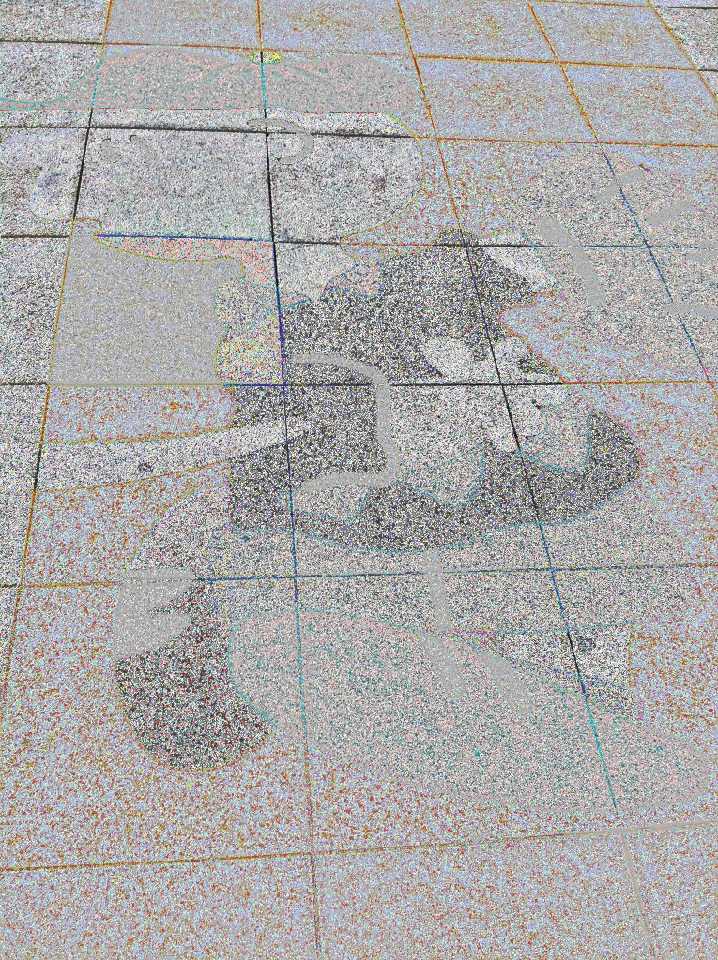}	
	\includegraphics[width=3cm, height=3cm]{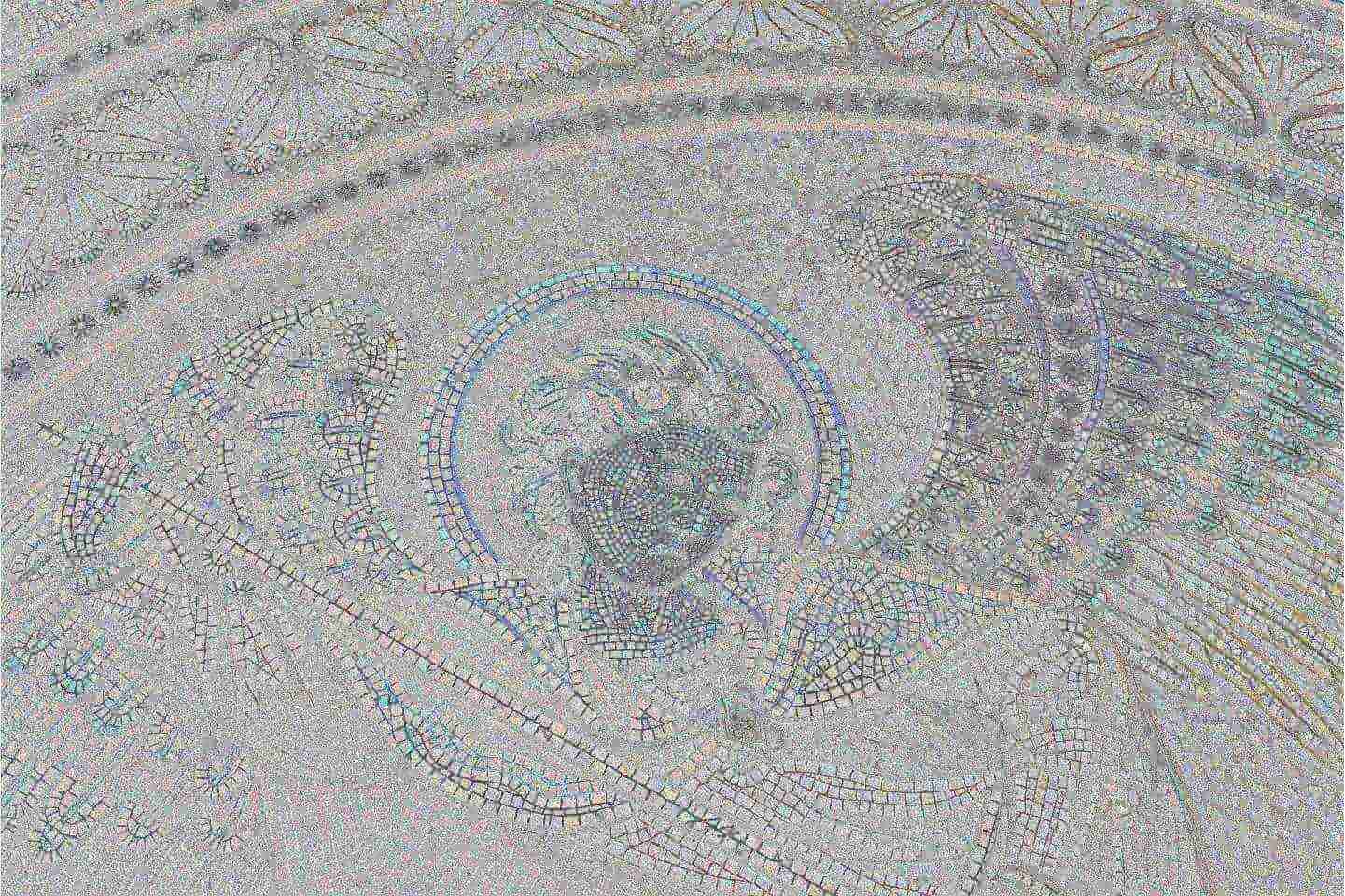}\\		

	\caption{(Color online) Corresponding texture components  $v$. Arrangement is as in Figure~\ref{fig:cartoons}. Best viewed electronically, zoomed in.}\label{fig:textures}
\end{figure*}
%%%%%------- Figure 9 --------------------------------------------------------------------------------------
%%%%%------- Figure 10 --------------------------------------------------------------------------------------
\begin{figure*}
\centering
	\includegraphics[width=3cm, height=3cm]{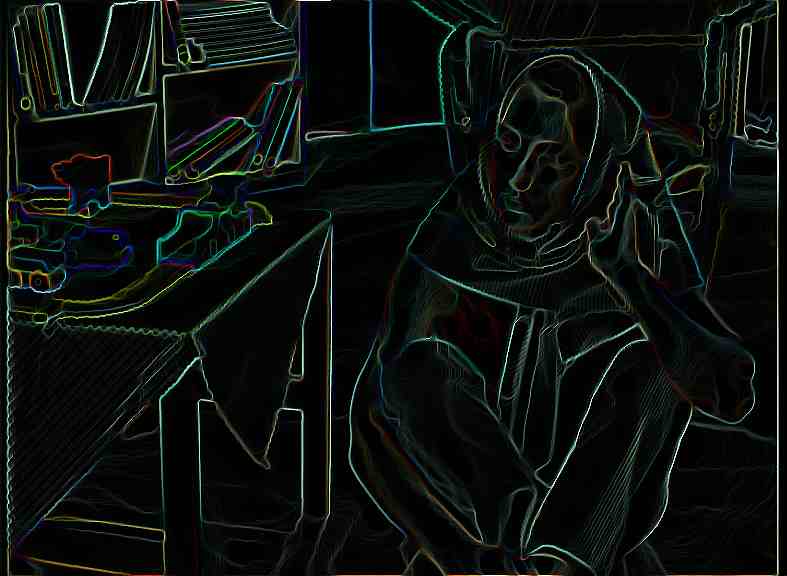}
	\includegraphics[width=3cm, height=3cm]{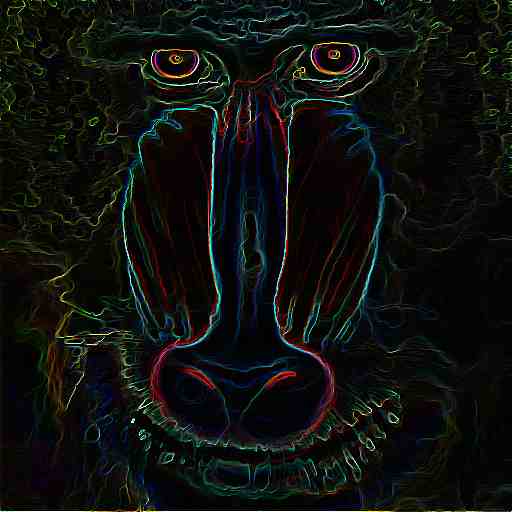}
	\includegraphics[width=3cm, height=3cm]{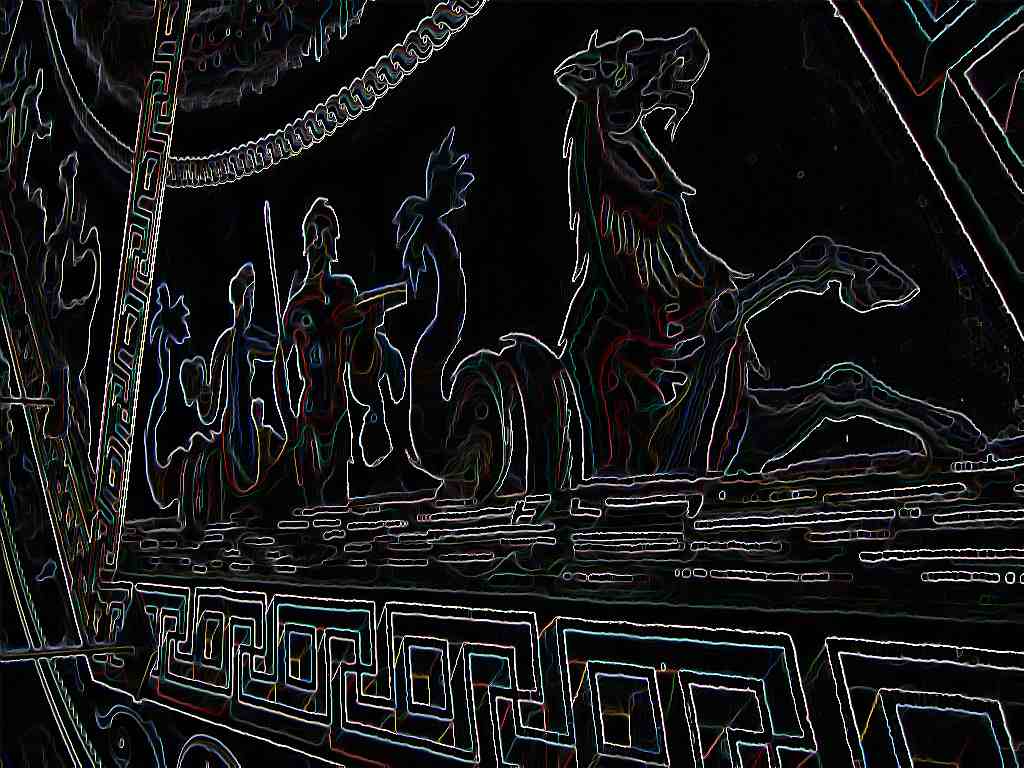}
	\includegraphics[width=3cm, height=3cm]{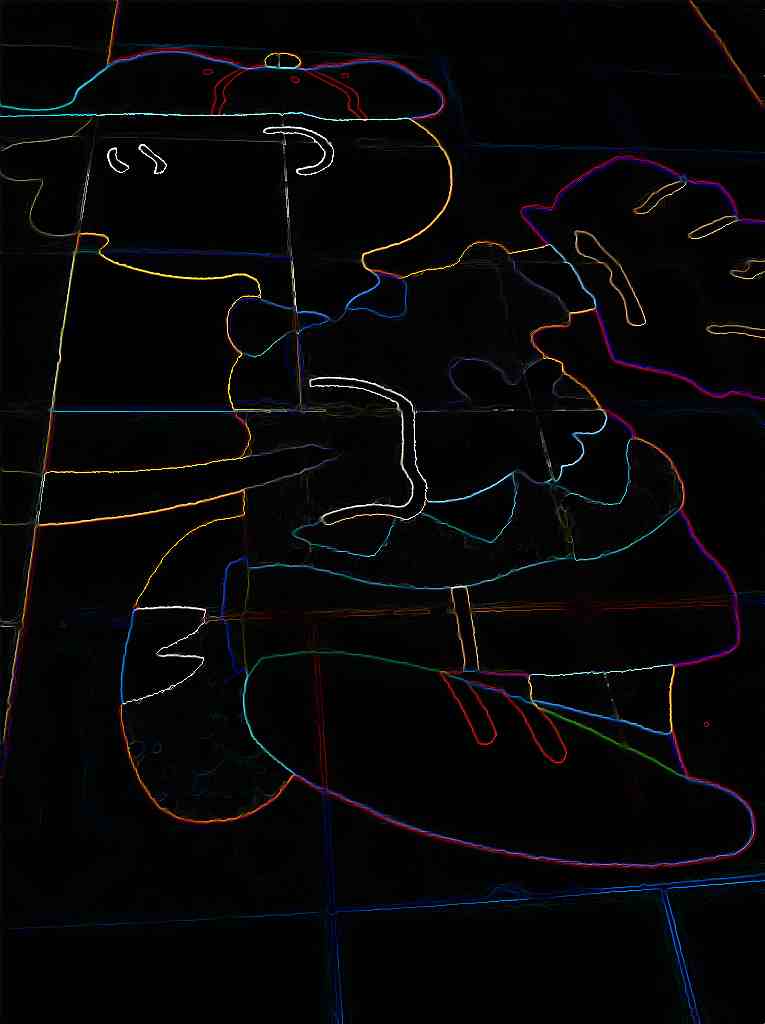}	
	\includegraphics[width=3cm, height=3cm]{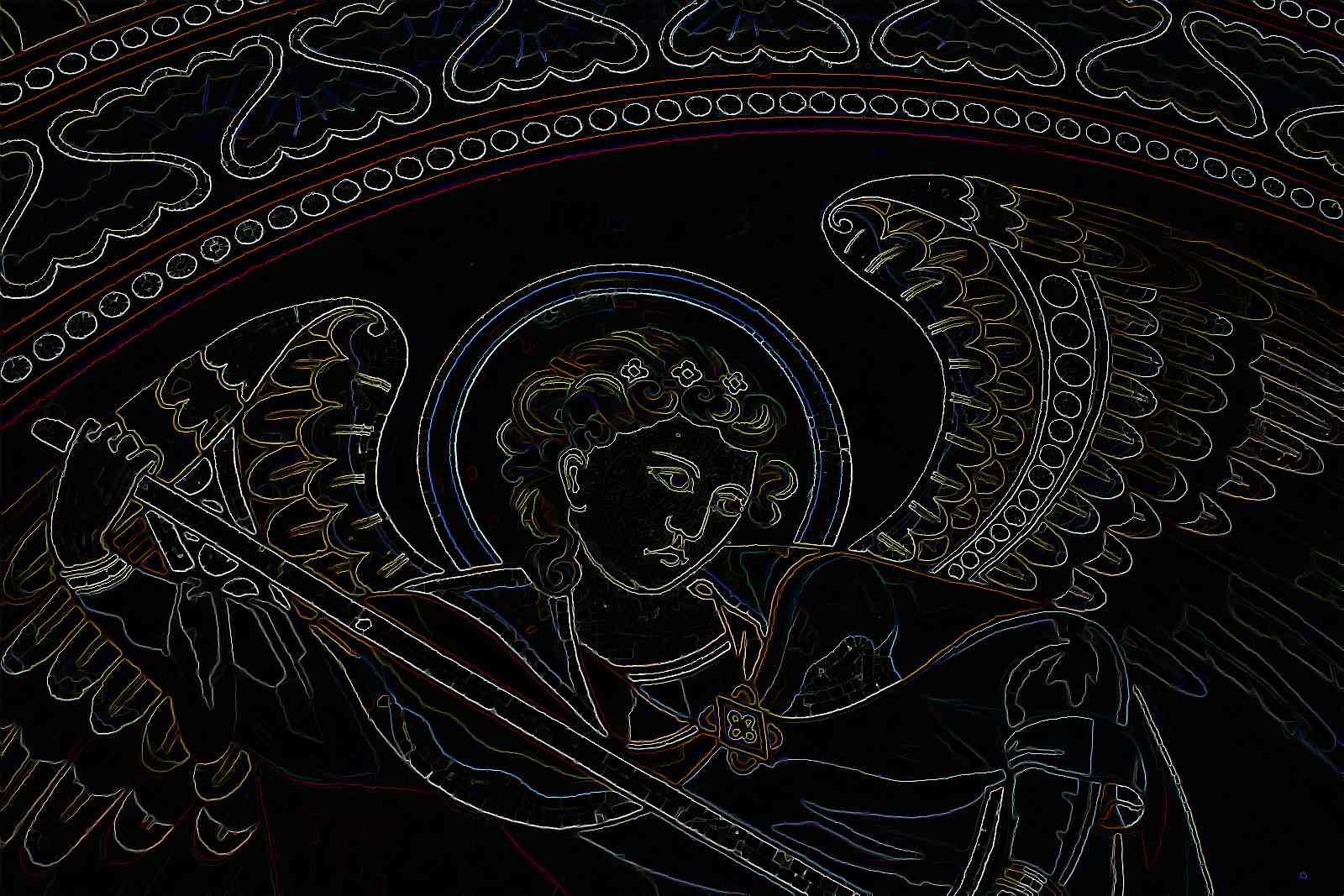}\\	
	\includegraphics[width=3cm, height=3cm]{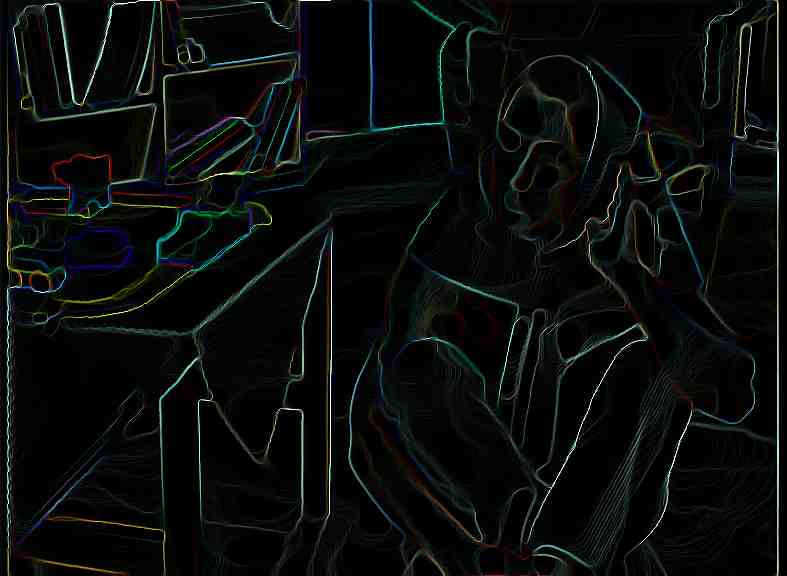}
	\includegraphics[width=3cm, height=3cm]{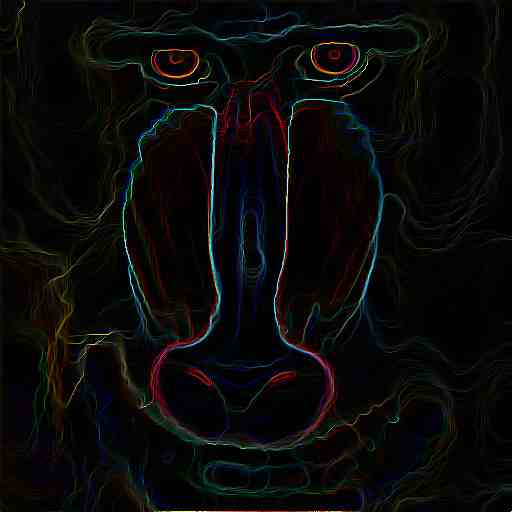}
	\includegraphics[width=3cm, height=3cm]{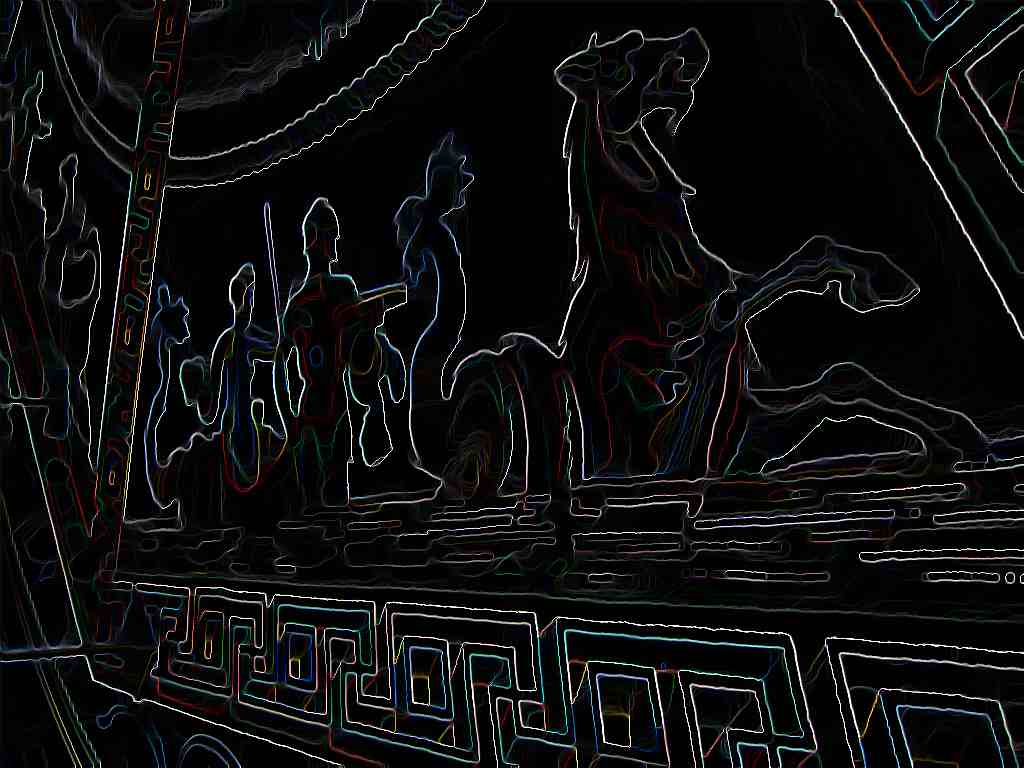}
	\includegraphics[width=3cm, height=3cm]{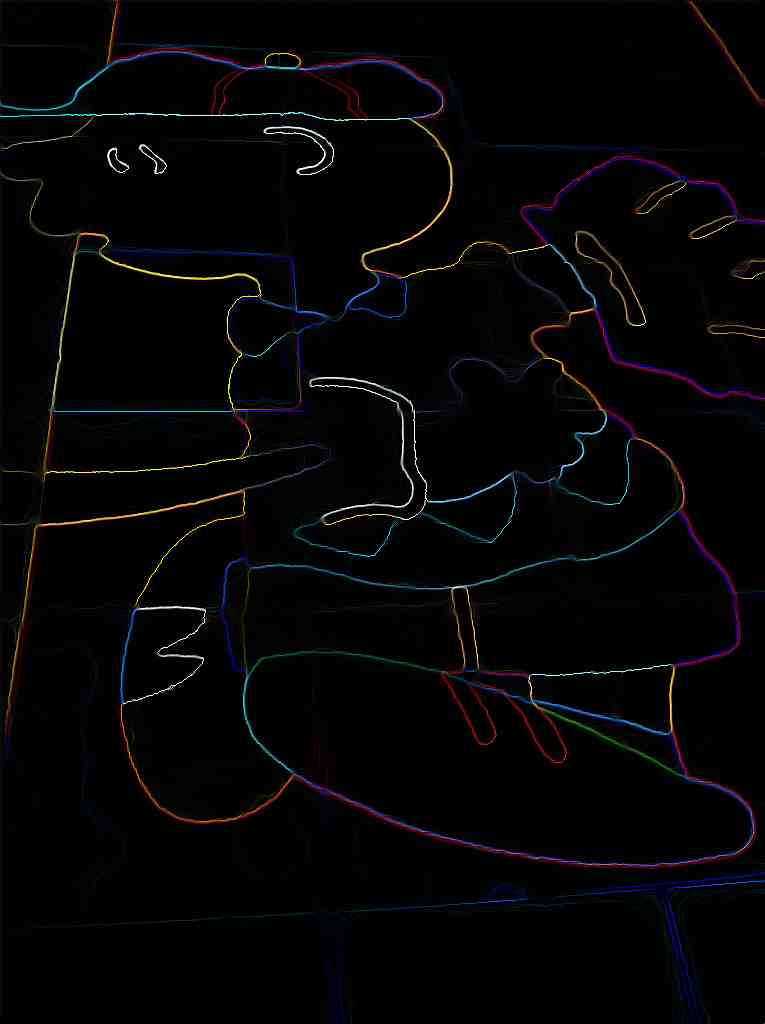}	
	\includegraphics[width=3cm, height=3cm]{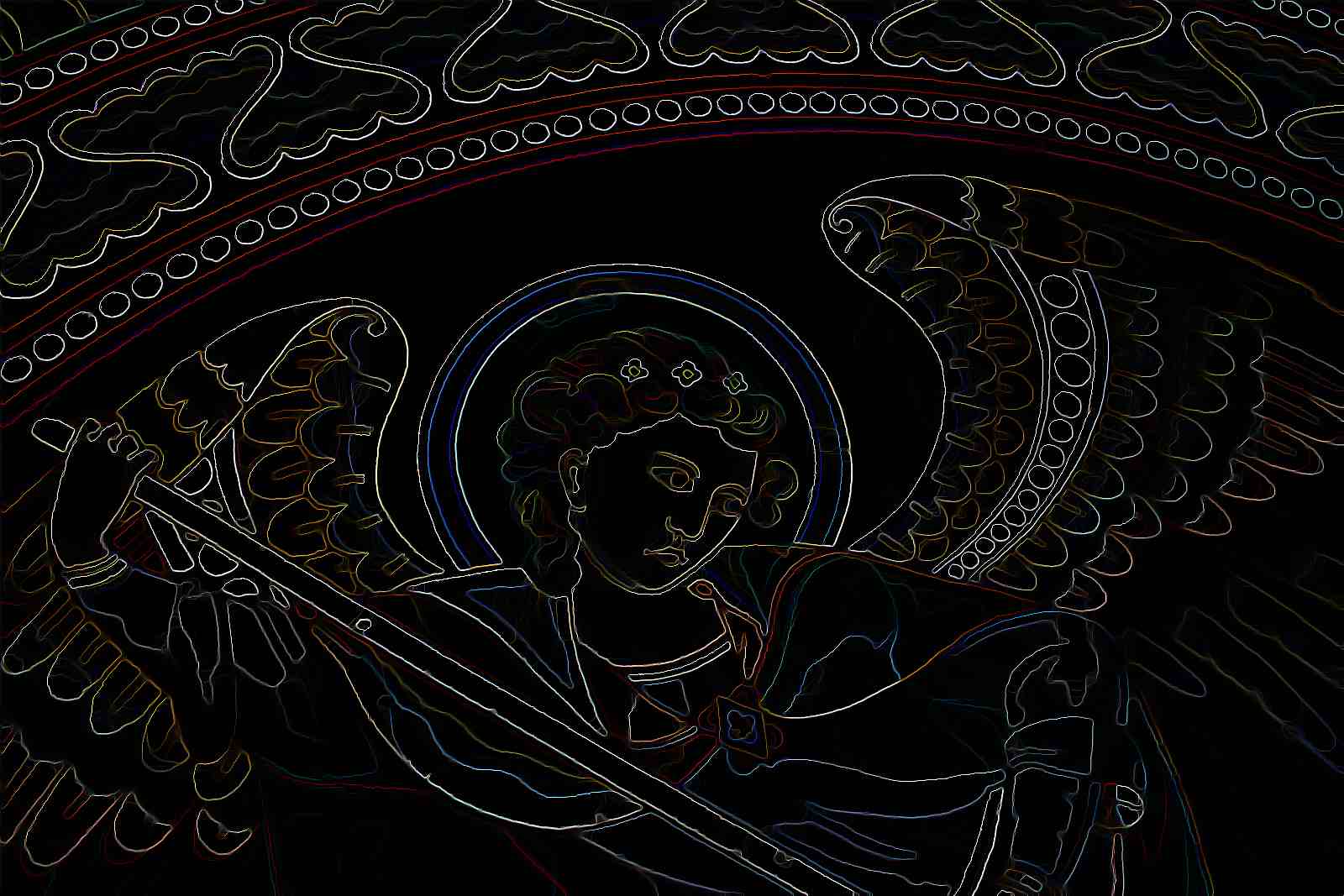}\\
	\includegraphics[width=3cm, height=3cm]{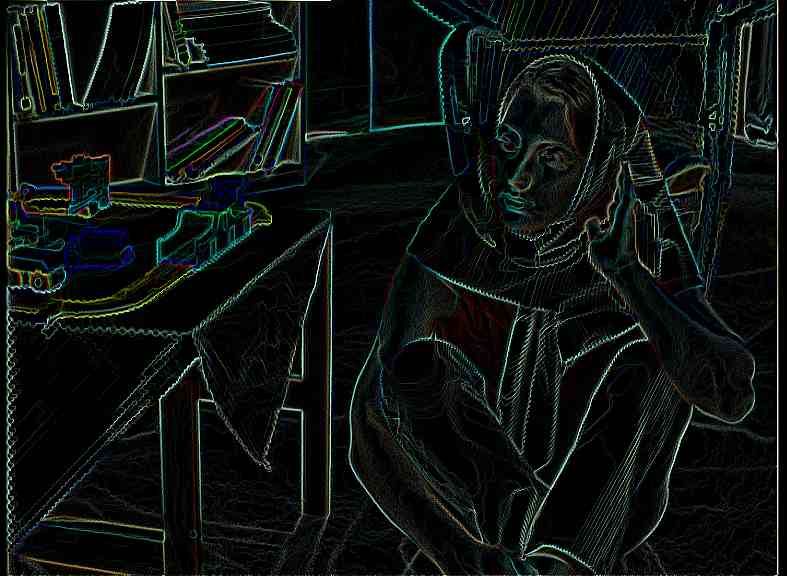}
	\includegraphics[width=3cm, height=3cm]{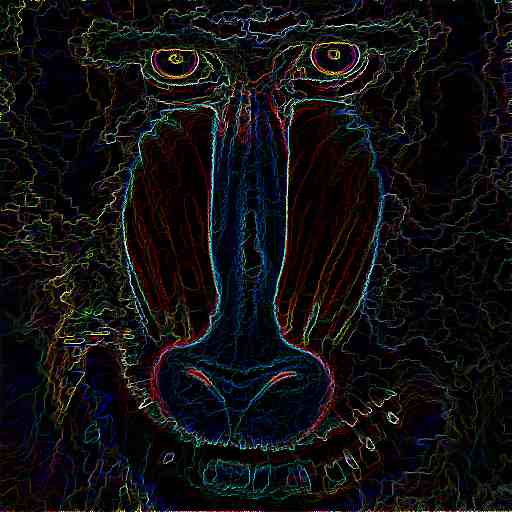}
	\includegraphics[width=3cm, height=3cm]{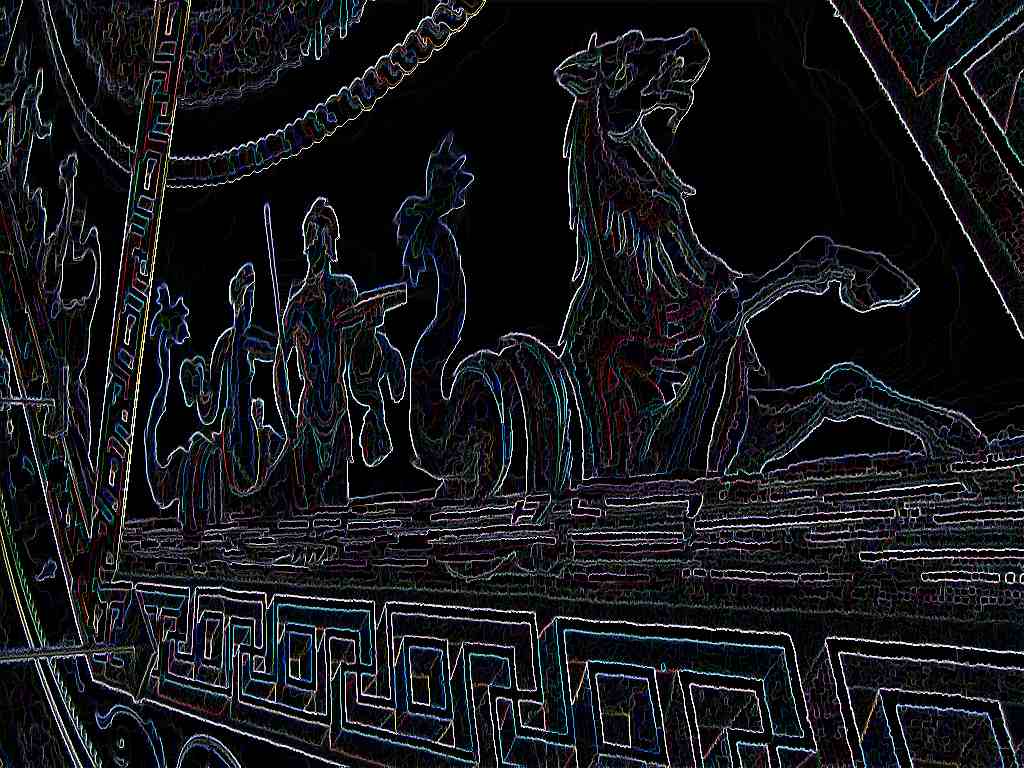}
	\includegraphics[width=3cm, height=3cm]{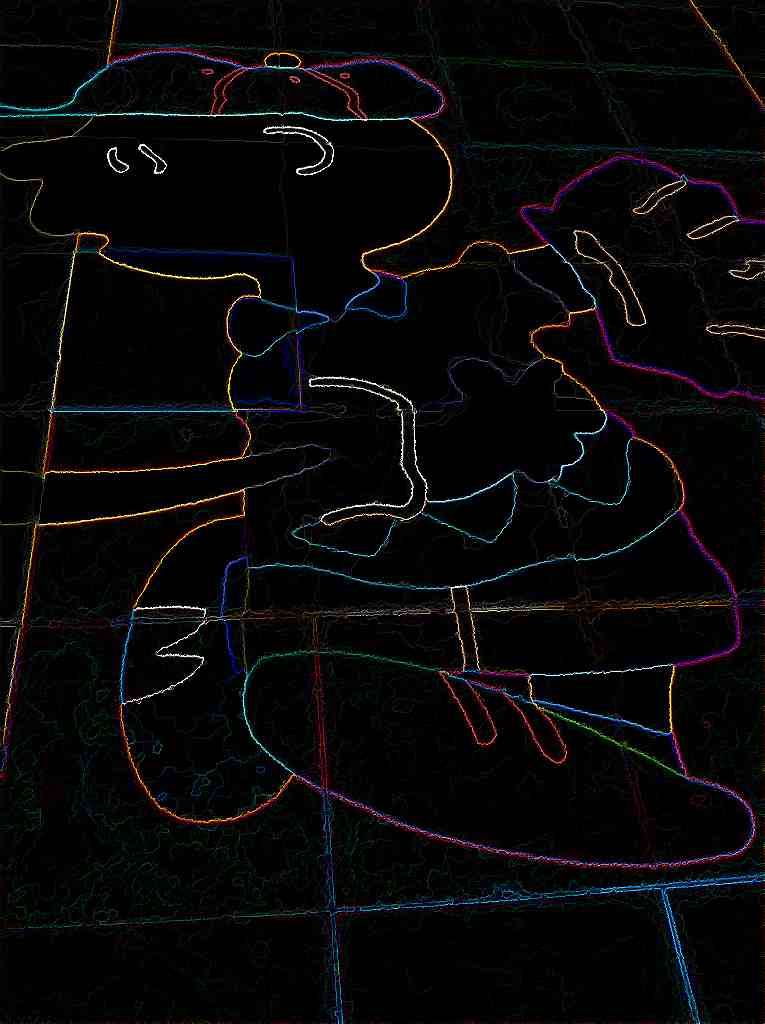}	
	\includegraphics[width=3cm, height=3cm]{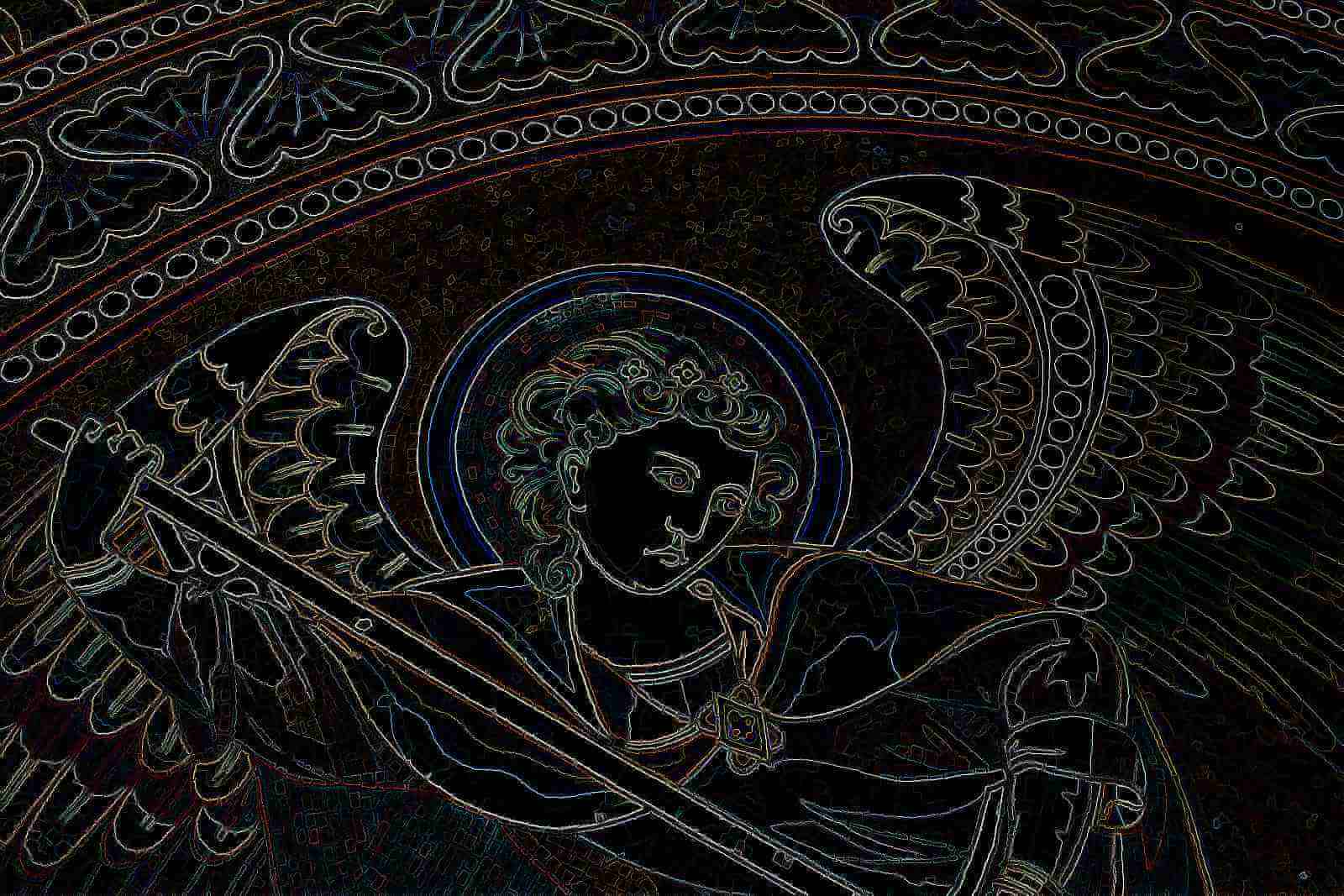}\\		

	\caption{(Color online) Corresponding edge functions $w$. Arrangement is as in Figure~\ref{fig:cartoons}. Best viewed electronically, zoomed in.}\label{fig:gfunctions}
\end{figure*}
%%%%%------- Figure 10 --------------------------------------------------------------------------------------

We further provide image decomposition for color images~\cite{BC08,DAVese10} by using vectorial TV version in Eqn.~\eqref{E:wTV} following~\cite{BC08}. We consider the following vectorial TV with non-homogenous diffusion equation: 
\begin{eqnarray*}\label{E:wTV_VTV}
\min\limits_{\mathbf{u}=(u_{1},u_{2},u_{3})}\Bigg\{\int_{\Omega}g(w_{i})\sqrt{\sum_{i=1}^{3}|\nabla u_{i}|}\,dx
+ \sum_{i=1}^{3}\int_{\Omega}\mu(x)\,|u_{i}-f_{i}|\,dx\Bigg\},
\end{eqnarray*}
\begin{eqnarray*}\label{E:constraint_VTV}
\frac{\partial w_{i}}{\partial t} = \lambda_{i}(x) div(\nabla w_{i}) + (1-\lambda_{i}(x))(|\nabla u_{i}|-w_{i}),
\end{eqnarray*}
were each scalar function $u_{i}:\Omega\rightarrow \mathbb{R}$, $1\leq i\leq 3$ represent one component of the RGB color system. Note that following our alternating iterative scheme given by equations~\eqref{E:wieqn},~\eqref{E:uisoln} and~\eqref{E:visoln} the new solutions $(u_{i},v_{i},w_{i})$ are given by:
\begin{equation}\label{E:uisoln1}
u_{i}=f_{i}-v_{i} - \theta div\, \mathbf{p}_{i},
\end{equation} 
where $\mathbf{p}_{i}=(p_{i_{1}},p_{i_{2}})$ satisfies $g(w_{i}) \nabla(\theta\, \mathrm{div}\, \mathbf{p}_{i} -(f_{i}- v_{i}))-
|\nabla(\theta \mathrm{div}\, \mathbf{p}_{i} - (f_{i}- v_{i})) |\mathbf{p}_{i}=0$, which is solved using a fixed point method: $\mathbf{p}^{0}_{i} = 0$ and
\begin{equation*}\label{E:psoln1}
\mathbf{p}^{n+1}_{i} = \frac{\mathbf{p}^{n}_{i}+\delta t \nabla(\mathrm{div}(\mathbf{p}^{n}_{i}) -
(f_{i}-v_{i})/\theta) }{1+\displaystyle\frac{\delta t}{g(w_{i})} \sqrt{\sum_{i=1}^{3}|\nabla(\mathrm{div}(\mathbf{p}^{n}_{i}) - (f_{i}-v_{i})/\theta)|^{2}}}.
\end{equation*}

\begin{equation}\label{E:visoln1}
v_{i}=  
\begin{cases} 
f_{i}-u_{i}-\theta\mu(x) & \text{if $f_{i}-u_{i}\geq \theta\mu(x)$},\\
f_{i}-u_{i}+\theta\mu(x) & \text{if $f_{i}-u_{i}\leq -\theta\mu(x)$},\\
0 & \text{if $|f_{i}-u_{i}|\leq \theta\mu(x)$}.
\end{cases}
\end{equation}

\begin{equation}\label{E:wieqn1}
w^{n+1}_{i} = w^{n}_{i} + \frac{\delta t}{(\delta x)^2}(\lambda(x)\tilde \Delta w^{n}_{i} + (1-\lambda(x))\left(\abs{\nabla u_{i}} - w^{n}_{i}\right)).
\end{equation}

%%%%%------- Figure 11 --------------------------------------------------------------------------------------
%%%%%Canny vs CTE w component
\begin{figure*}
\centering
        \subfigure[Input ($f$) and $u$ components using the proposed model]{
        \includegraphics[width=3.75cm]{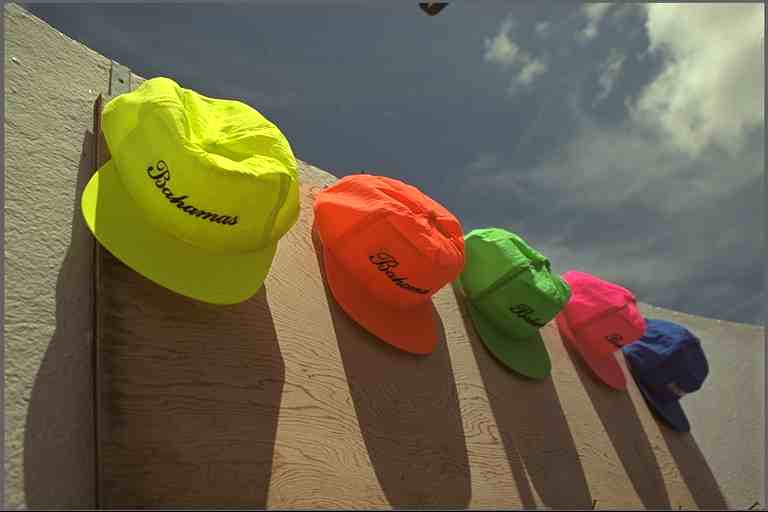}
	\includegraphics[width=3.75cm]{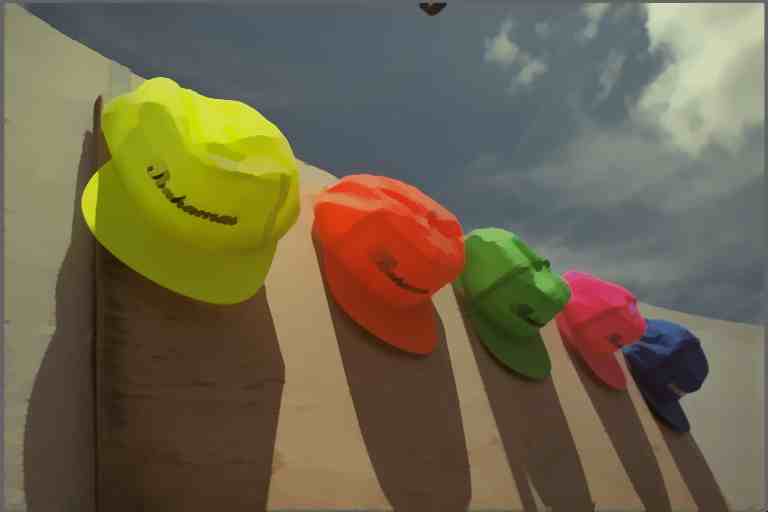}~~
	\includegraphics[width=3.75cm]{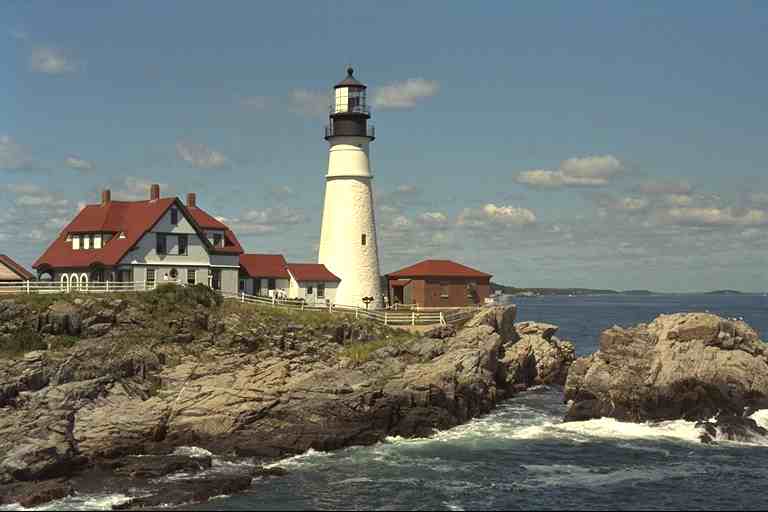}
	\includegraphics[width=3.75cm]{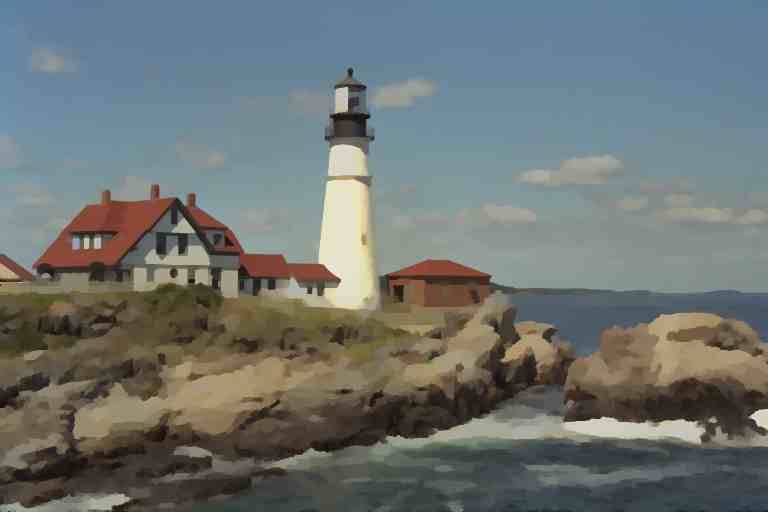}}

	\includegraphics[width=7.75cm]{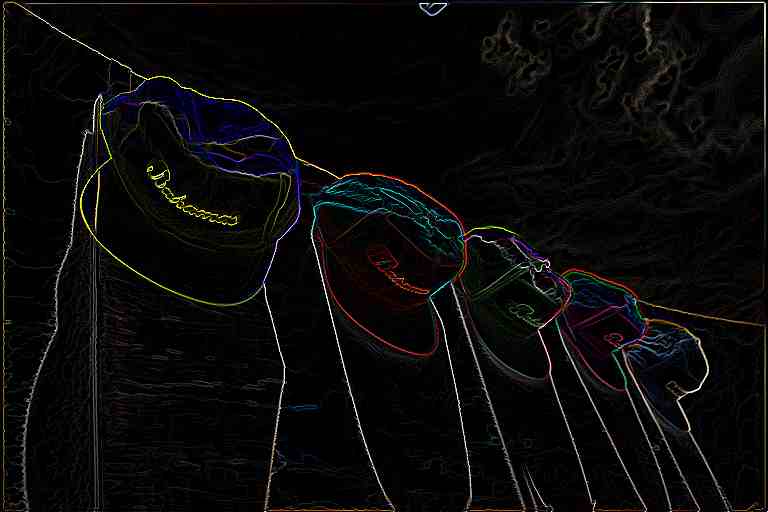}
	\includegraphics[width=7.75cm]{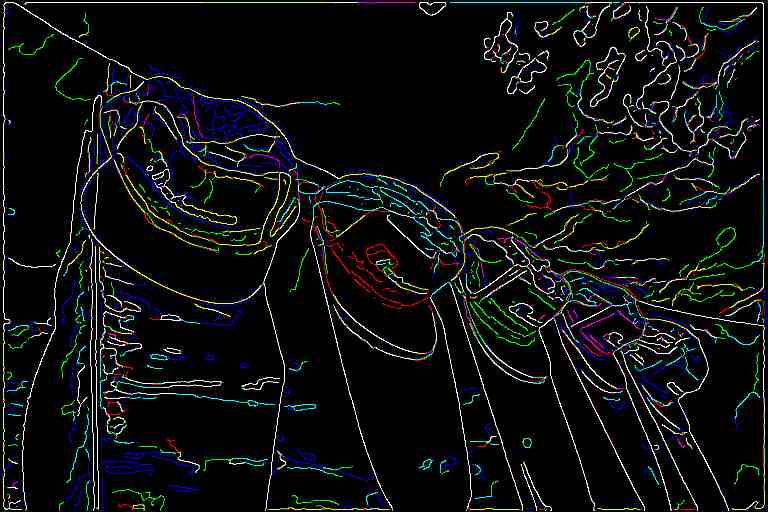}

	\subfigure[(Pseudo) Edges ($w$)]{\includegraphics[width=7.75cm]{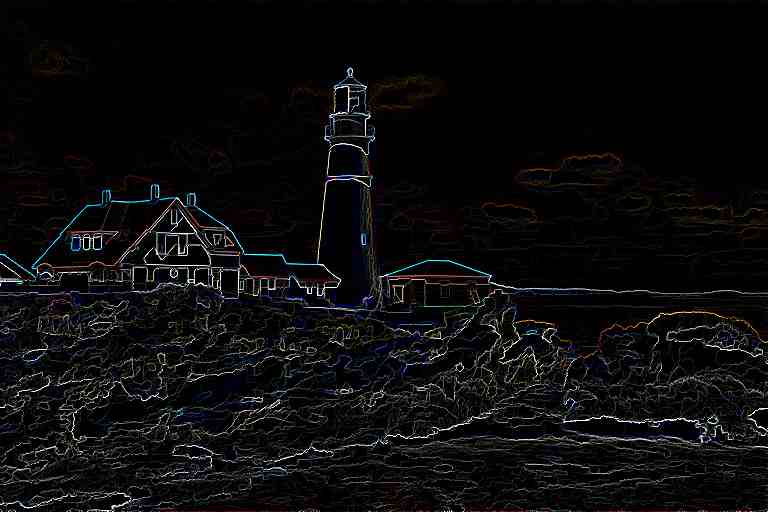}}
	\subfigure[Canny Edges]{
	\includegraphics[width=7.75cm]{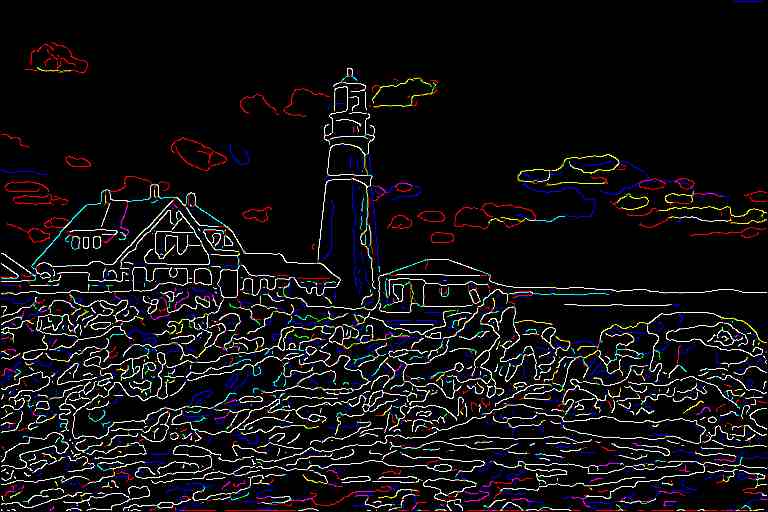}}
	\caption{(Color online) Pseudo edge maps \& Canny Edges. First Row: Input ($f$) and $u$ components using the proposed model with  ($\mu\gets$adaptive, $\lambda\gets$constant) and stopping parameter $\epsilon=10^{-4}$. Second and Third Row: (Pseudo) Edges ($w$) (First Column) and Canny Edges (Second Column).}\label{fig:Edges}
\end{figure*}
%%%%%%------- Figure 11 --------------------------------------------------------------------------------------

Figure~\ref{fig:CTE_Vs_Bresson} shows the cartoon components of our CTE scheme with constant and adaptive $\mu_1$ against the traditional TV based scheme of~\cite{BE07} for two standard RGB test images. As can be seen, our scheme obtains better edge preserving cartoon ($u$) components. The close-up shots indicate that our scheme also avoids blurring of edges, see for example $Barbara$ face. See also Figure~\ref{fig:CTE_Vs_Bresson_Synthetic} where the proposed scheme with adaptive $\mu_1$ provides better shape preservation as stopping parameters increase from $\epsilon=10^{-4}$ to $\epsilon=10^{-7}$. Next, Figures~\ref{fig:cartoons}-\ref{fig:gfunctions} shows decomposition for a variety of RGB images for two different iteration values in our proposed CTE scheme with constant $\mu$ against adaptive $\mu_1$ based results. As can be seen in Figure~\ref{fig:cartoons}, increasing the number of iterations removes more texture details and provides piecewise constant (smoother) cartoon images. Our adaptive $\mu_1$ based scheme results (last row) on the other hand keep most of the salient edges. This can be seen further in Figure~\ref{fig:textures} (last row) where the adaptive $\mu_1$ based scheme captures only small scale oscillations corresponding to texture components whereas the constant $\mu$ based results remove strong edges as well. Figure~\ref{fig:gfunctions} show the corresponding edge functions, and it can be seen that adaptive scheme has more information than in constant parameter case. Thus, we conclude that, using adaptive $\mu$ provides an image adaptive way of obtaining edge preserving cartoon components without sacrificing overall decomposition properties. 

Figure~\ref{fig:Edges} shows the cartoon component of two different color images as well as  the edge functions given by the proposed model, and Canny edge maps~\cite{CA86} computed in all the three channels\footnote{Computed using the MATLAB's default command {\tt edge(u,`canny')}.}. As observed the function $w$ captures edges based on the cartoon subregions {\it i.e.} large intensity gradients while the Canny edge maps captures large and small intensity gradients. 

\subsubsection{Comparison with other decomposition methods}\label{sssec:others}

%%%%%%USC-SIPI Table
\begin{sidewaystable}
\centering
\begin{tabular}{lllllllllll}
\hline
 Image/MSSIM & OSV & VC & AGO & BP & CP & DHC & BCM	& MCTE & ACTE & CCTE\\
\hline
\multirow{2}{*}{Girl1$\dagger$/0.2081}       & 27.5748                &  28.1667                  & 28.0247
                                 & 27.1771
                                 & 27.9446                & 28.3862                   &  28.4113                  
                                 & {\bf 28.8536}   & 28.3557                   & \underline{28.8266}\\
                                 
                                 &0.7664 	&0.7669		&0.7705   		&0.7531		&0.7604 	&0.7732	&0.7293	& {\bf 0.7971}	&0.7893		&\underline{0.7939}\\
                                 \hline
                                 
\multirow{2}{*}{Couple1$\dagger$/0.2029}  & 27.1529                & 28.3845                & 27.7122   
                                 & 27.4384 
                                 &  27.8213                  & 28.1447                  & 28.5528                 
   & \underline{28.5297}                   & 28.0973                 & {\bf 29.4769}\\
   
   &0.6918		& 0.7080 		& 0.7022		& 0.7063		& 0.7042		&0.7283		& 0.7045 		&\underline{0.7467}		&0.7439		& {\bf 0.7548}\\
			   \hline
   
\multirow{2}{*}{Girl2$\dagger$/0.1482}          & 29.6239                   & 30.6253                 & 29.6591
                                  & 30.6260 
                                  &  29.7908                  & 32.0758                 & 30.9228                 
                                  & {\bf 32.2619}    & 30.7951                 & \underline{32.1304}\\
                                  
                                  &0.8730		&0.8039	&0.8829  		&0.8779		&0.8751		&\underline{0.8950}		&0.7991		&{\bf 0.8964}		&0.8933		&0.8821\\
                                  \hline
                                  
\multirow{2}{*}{Girl3$\dagger$/0.2261}           & 29.4733                   & 29.3198                & 28.8185  
                                   & 28.2566
                                   & 28.7244                  & 30.6547                & 29.3895                
                 & {\bf 31.3260}          & 29.1061         & \underline{31.1257}\\
                 
                 &0.8470	&0.8029	&0.8335	&0.8092	&0.8145 	&0.8014	&0.7831	&\underline{0.8469} 		&{\bf 0.8595}	&	0.8379\\
                 \hline
                 
 \multirow{2}{*}{House1$\dagger$/0.2297}     & 27.1664                  & 26.5337                 & 27.7950 
                                   & 27.4868
                                   & 28.0119                   & 29.2988                 & \underline{29.4386}                  
                                   & 29.0679          & 28.4171                & {\bf 29.5162}\\
                                   
                                   &0.7858 	&0.7626	&0.7831 	&0.7950	&0.7795	&0.7993	&0.7603	&{\bf 0.8138} 	&\underline{0.8090} 	&0.8064\\
                                   \hline
                                                                      
\multirow{2}{*}{Tree$\dagger$/0.3959}            & 23.7801                  &  24.4972                & 24.0548  
                                   & 24.2811
                                   & 24.3028                  & 26.3365                  & 26.0382
                & \underline{26.9758}           & 24.3702                 & {\bf 27.0157}\\
                
                &0.7259	&0.7549	&0.7231	&0.7487	&0.7314	&0.7654	&\underline{0.7684}                  &{\bf 0.7702}	&0.7489 	&0.7666\\
                \hline
                
\multirow{2}{*}{Jelly1$\dagger$/0.1700}          & 29.7350                 & 29.8272                  & 29.4804  
                                   & 30.0963
                                   & 29.2544                  & 30.6860                  & 30.2663                  
                 & {\bf 31.4578}         & \underline{31.2529}            & 30.9478\\
                 
                 &0.9181	&0.8297	&0.9351	&0.9153	&0.9178	&0.9360	&0.8407 	&\underline{0.9369} 		&{\bf 0.9376}	&0.9145\\
                 \hline
                 
\multirow{2}{*}{Jelly2$\dagger$/0.2294}          & 28.8000                 & 28.9457                    & 27.5507    
                                    & 28.1674
                                    & 27.9968                 & 29.1260                   & 29.2134                    
                                    & {\bf 30.0485}  & 29.5462                    & \underline{29.9156}\\
                                    
                                    &0.9098	&0.8385	&0.9029	&0.8923	&0.9005	&0.8227 	&0.8443	&{\bf 0.9202}	&\underline{0.9182}	&0.9020\\
                                    \hline
                                    
\multirow{2}{*}{Splash$\ddagger$/0.4151}       & 30.6488               & 28.9810                     & 29.9935    
                                    & 31.4922 
                                    & 31.2080                & \underline{33.1049}                     & 31.0800                     
                 & {\bf 33.8236}        & 31.3856                     & 31.2644\\
                 
                 &0.8825 	&0.8312	&0.8891	&0.8818	&0.8740	&0.8669	&0.8732	&\underline{0.8975} 		&0.8935	&{\bf 0.9020}\\
                 \hline
                 
\multirow{2}{*}{Tiffany$\ddagger$/0.4482}         & 29.2555              & 28.0636                     & 28.6905 
                                     & 30.2667
                                     & 29.7630               & 31.2730                     & 29.6920                    
                              & {\bf 32.0779}       & 30.7028                        & \underline{31.3727}\\
                              
                              &0.8617	&0.8254	&0.8534	&0.8425	&0.8491	&0.8567	&0.8628 	&{\bf 0.8708}	&0.8612	&\underline{0.8685}\\
                              \hline
\multirow{2}{*}{Mandril$\ddagger$/0.7097}        & 20.4765           & 21.6728                    & 20.7883 
                                     & 21.9056
                                     & 20.9060              & 23.7888      & {\bf 23.9897}                     
           & \underline{23.7899}              & 20.8689                      & 22.3085\\
           
           &0.65255	&0.7651 	&0.6443	&0.6962 	&0.6530 	&0.8002	&{\bf 0.8247} 	&\underline{0.8117} 		&0.6520	&0.7713\\
           \hline

\hline
\end{tabular}

\caption{PSNR  ({d}B) and MSSIM comparison of various decomposition schemes with noise level $\sigma=30$ for standard test images from the USC-SIPI database  with size $\dagger = 256\times 256$ (Noisy $\text{PSNR} = 22.11$) and $\ddagger = 512\times 512$ (Noisy $\text{PSNR} = 22.09$). Each row indicates PSNR/MSSIM values for different test images. The diffusion with multiscale ($\mu\gets$multiscale, $\lambda\gets$constant), adaptive choice ($\mu\gets$adaptive, $\lambda\gets$constant) and constant CTE scheme ($\mu\gets$constant, $\lambda\gets$constant) are given as MCTE, ACTE and CCTE (last three columns) respectively (stopping parameter $\epsilon=10^{-4}$). Compared methods are OSV~\cite{OSVese03}, VC~\cite{VChan04}, AGO~\cite{AujolGilboa06}, BP~\cite{BP10}, CP~\cite{ChambollePock11}, DHC~\cite{DongHinterRincon11}, BCM~\cite{BuadesColl06}.
Best results are indicated in boldface and the second best is underlined.}\label{t:USC_SIPI}.
\end{sidewaystable}
%%%%%%%

%%%%%------- Figure 12 --------------------------------------------------------------------------------------
%%%%%USC-SIPI denoising fig
\begin{figure*}
\centering
\subfigure[\scriptsize{$Barbara$ image}]{
\includegraphics[width=3.cm,height=3.cm]{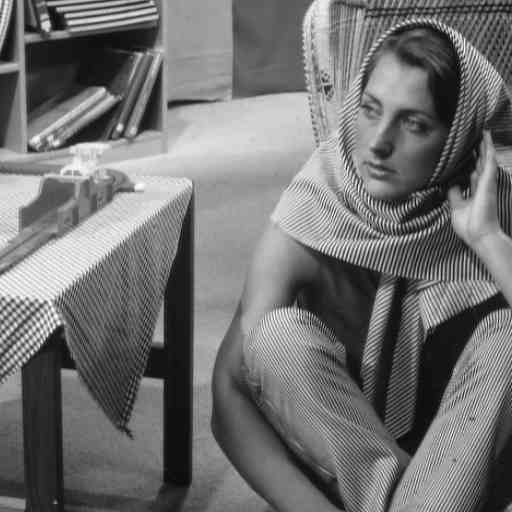}}
\subfigure[\scriptsize{Canny edges}]{
\includegraphics[width=3.cm,height=3.cm]{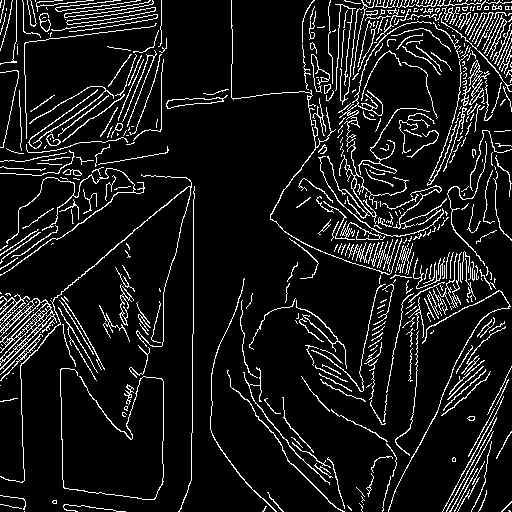}}
\subfigure[\scriptsize{Noisy $Barbara$}]{
\includegraphics[width=3.cm,height=3.cm]{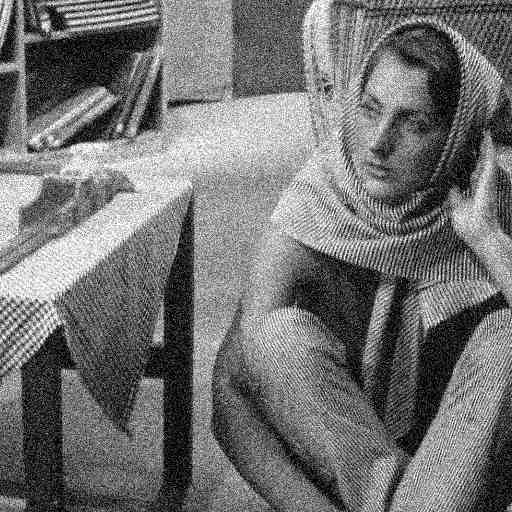}}
\subfigure[\scriptsize{Canny edges}]{
\includegraphics[width=3.cm,height=3.cm]{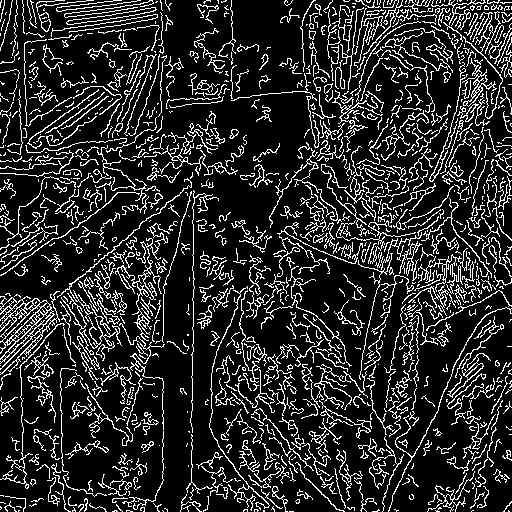}}
\subfigure[\scriptsize{Input noise}]{
\includegraphics[width=3.cm,height=3.cm]{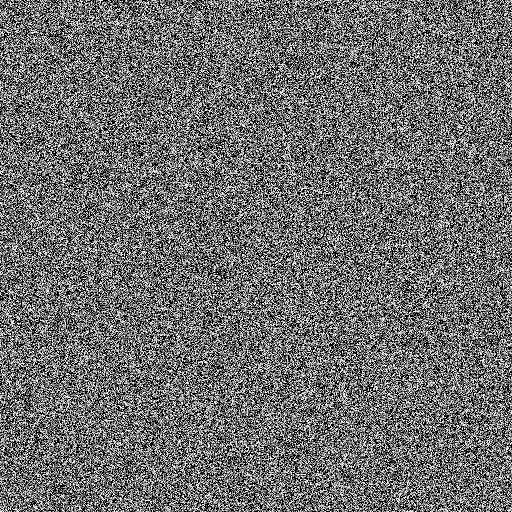}}

\[\begin{array}{ccc}
\includegraphics[width=2.5cm,height=2.75cm]{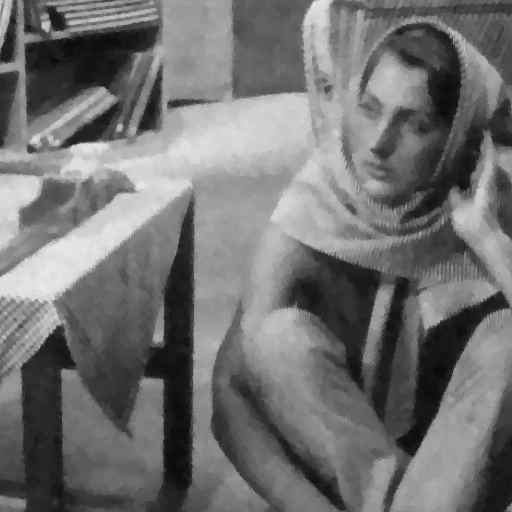}
\includegraphics[width=2.5cm,height=2.75cm]{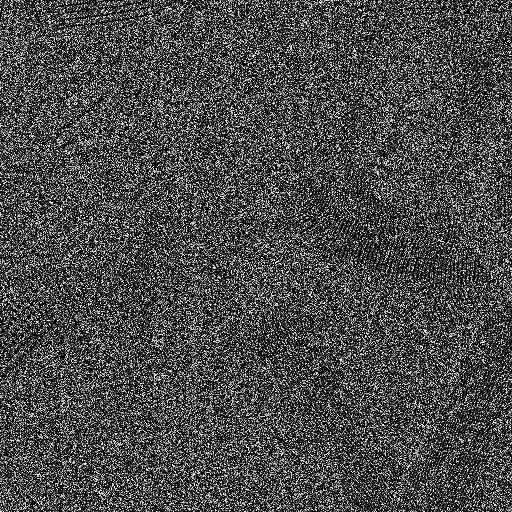}
&
\includegraphics[width=2.5cm,height=2.75cm]{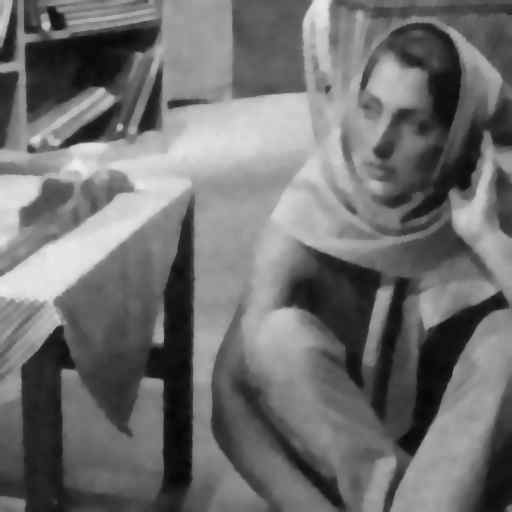}
\includegraphics[width=2.5cm,height=2.75cm]{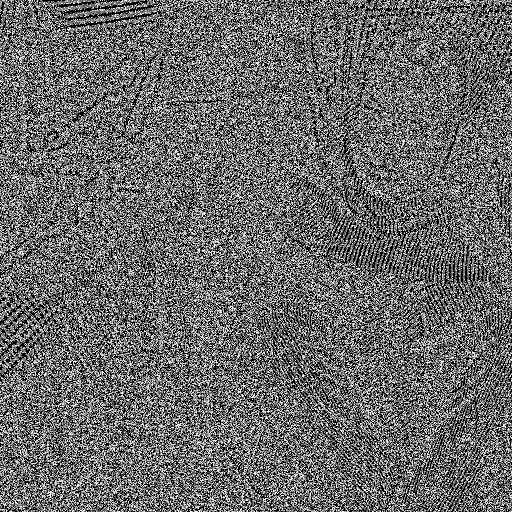}
&
\includegraphics[width=2.5cm,height=2.75cm]{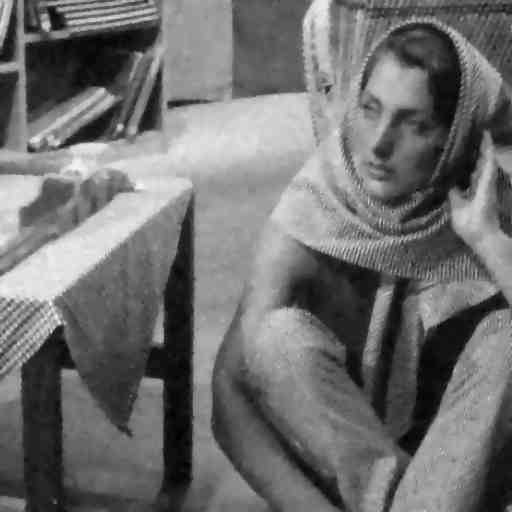}
\includegraphics[width=2.5cm,height=2.75cm]{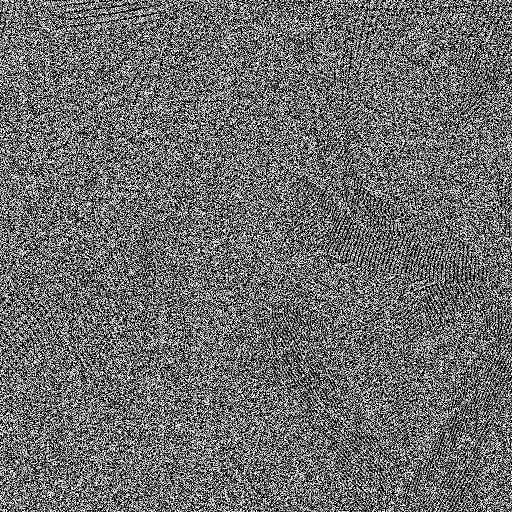}
\\
\mbox{\scriptsize{(b) $u$ \& $f-u$ for VC~\cite{VChan04}}}
&
\mbox{\scriptsize{(c) $u$ \& $f-u$ for OSV~\cite{OSVese03}}}
&
\mbox{\scriptsize{(d) $u$ \& $f-u$ for AGO~\cite{AujolGilboa06}}}
\end{array}\]

\includegraphics[width=3.cm,height=3.cm]{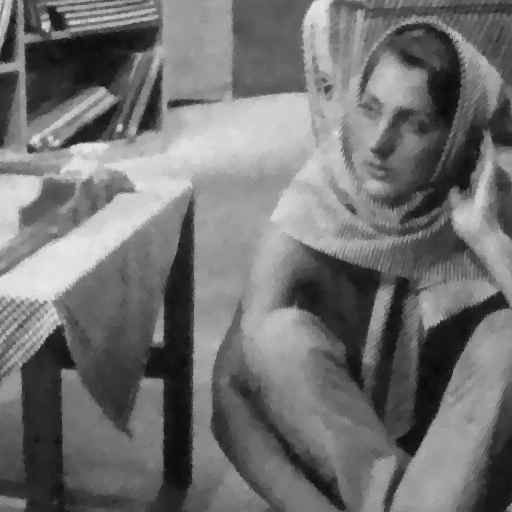}
\includegraphics[width=3.cm,height=3.cm]{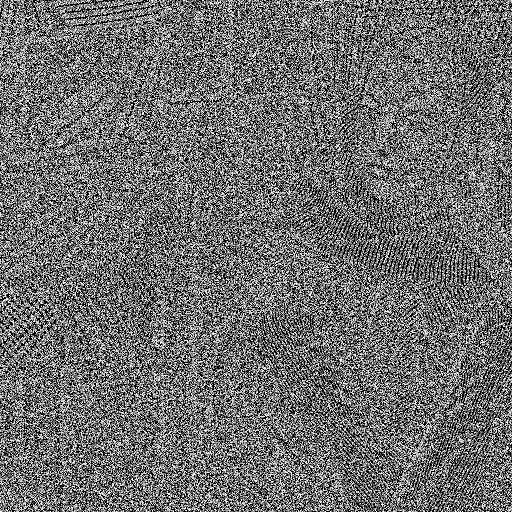}
\includegraphics[width=3.cm,height=3.cm]{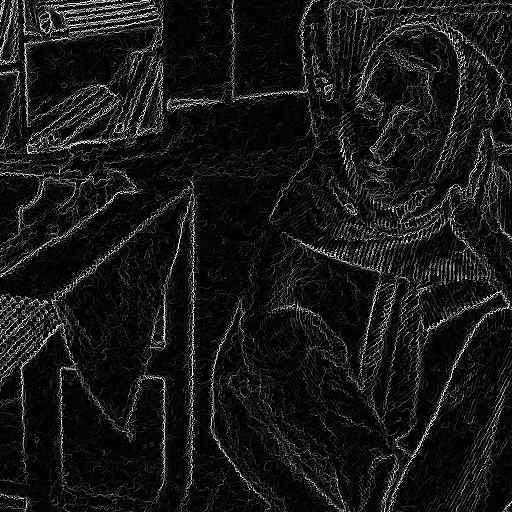}
\includegraphics[width=3.cm,height=3.cm]{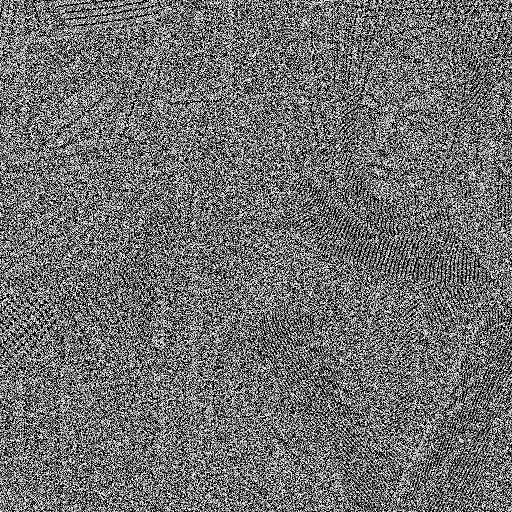}
\includegraphics[width=3.cm,height=3.cm]{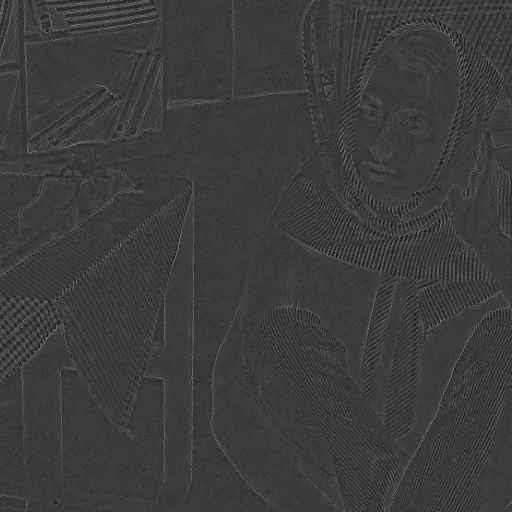}
\\
\includegraphics[width=3.cm,height=3.cm]{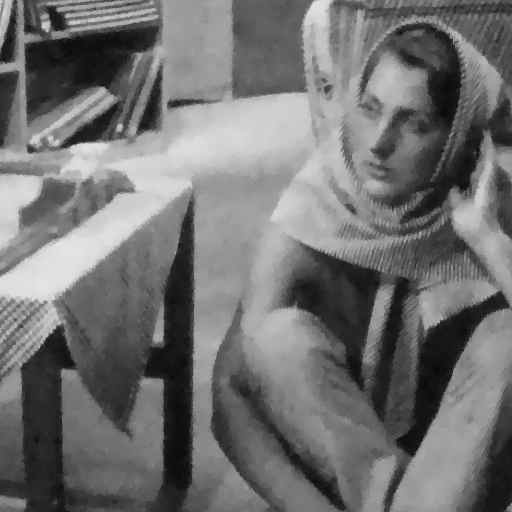}
\includegraphics[width=3.cm,height=3.cm]{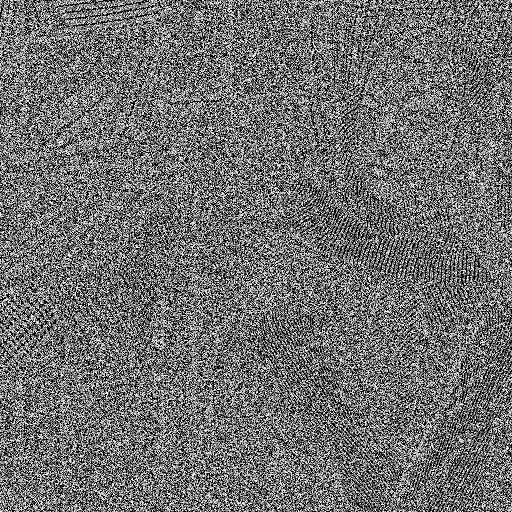}
\includegraphics[width=3.cm,height=3.cm]{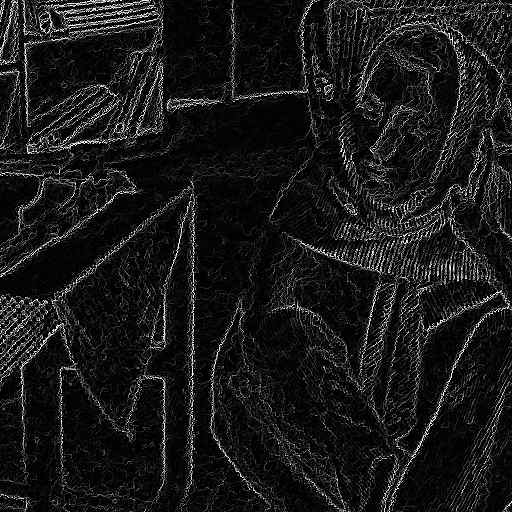}
\includegraphics[width=3.cm,height=3.cm]{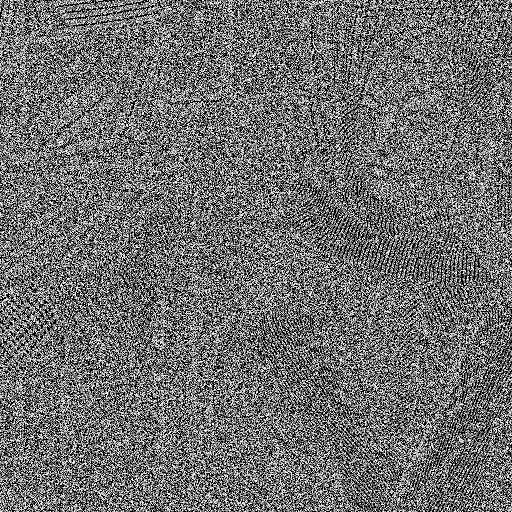}
\includegraphics[width=3.cm,height=3.cm]{barbara_CTEDeno30_lambdaconstant_diff_4}
\\
\setcounter{subfigure}{4}
\subfigure[\scriptsize{$u,v,w$, $f-u$, and $f-(u+v+w)$ components for the proposed scheme.}]{\includegraphics[width=3.cm,height=3.cm]{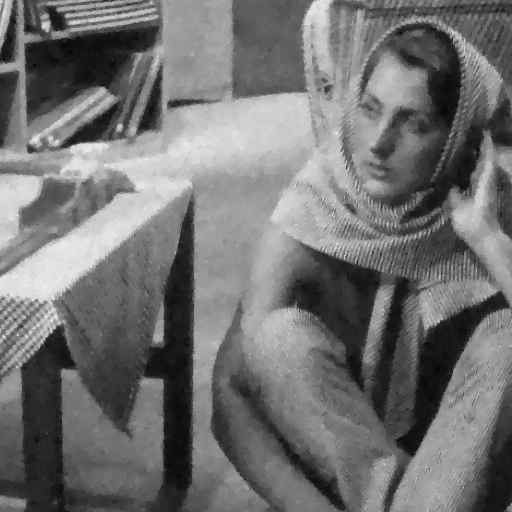}
\includegraphics[width=3.cm,height=3.cm]{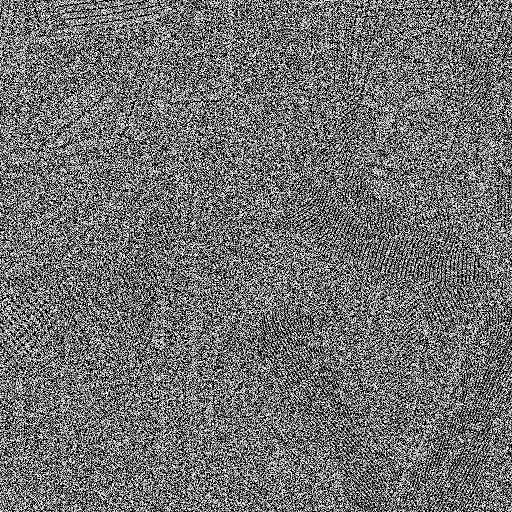}
\includegraphics[width=3.cm,height=3.cm]{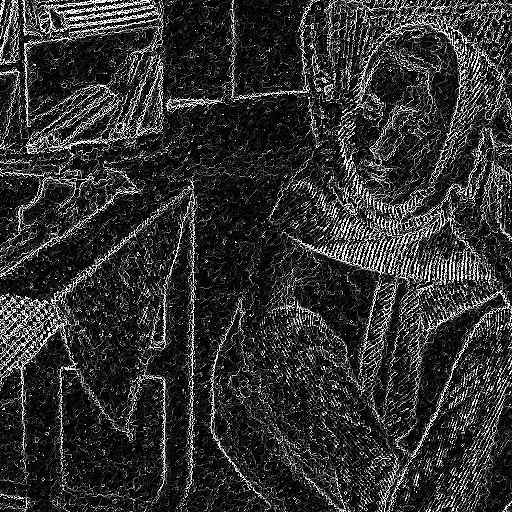}
\includegraphics[width=3.cm,height=3.cm]{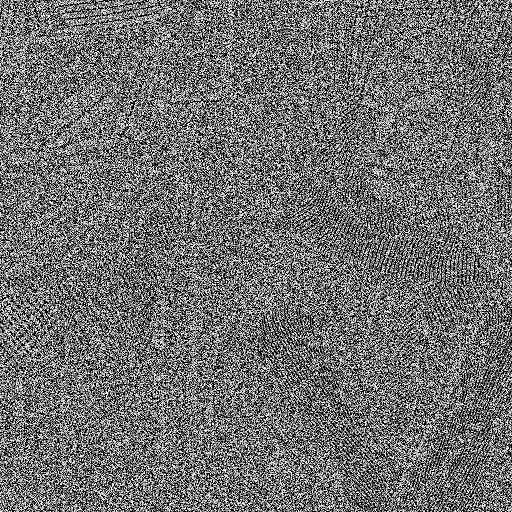}
\includegraphics[width=3.cm,height=3.cm]{barbara_CTEDeno30_lambdaconstant_diff_4}
}
\caption{Decomposition of noise inputs.
	(a) $House$, $F-16$ and $Barbara$ images corrupted by Gaussian noise images (standard deviation $\sigma=30$).
	(b)-(d) $u$ \& $f-u$ components provided by the VC~\cite{VChan04}, OSV~\cite{OSVese03}
and AGO~\cite{AujolGilboa06} schemes, respectively.
	(e) $u,v,w,f-u$ and $f-(u+v+w)$ components using the proposed CTE scheme with (CCTE - $\mu\gets$constant, $\lambda\gets$constant) for the Third row, with (ACTE - $\mu\gets$adaptive, $\lambda\gets$constant) for the Fourth row and (MCTE - $\mu\gets$multiscale, $\lambda\gets$) for the Fifth row. The corresponding PSNR  ({d}B) and MSSIM measures for each model are displayed in Table~\ref{t:USC_SIPI}.}\label{fig:USC_SIPI}. 
\end{figure*}
%%%%%------- Figure 12 --------------------------------------------------------------------------------------
Next we use the grayscale digitize images from USC-SIPI Image Database to compare the performance of the CTE model together with the adaptive and multi-scale case with the following schemes: TV-$H^{-1}$ decomposition model (OSV, using finite differences)~\cite{OSVese03}, TV-$G_{p}$ decomposition model (VC, finite differences channel-wise)~\cite{VChan04}, TV-$Garbor$ decomposition model (AGO, Chambolle's projection algorithm )~\cite{AujolGilboa06}, $BV^{2}-L^{2}$ model (BP, which uses an algorithm close to the one set by dual minimization of~\cite{Ch04} and ~\cite{VChan04})~\cite{BP10}, $TV-L^{1}$ model (CP, a first-order primal-dual algorithm)~\cite{ChambollePock11}, $L^{\tau}$-$TV$ scheme for color images (DHC, using a unified Moreau-Yosida based primal-dual approach)~\cite{DongHinterRincon11}, and the NL- Means algorithm (BCM, a improvement of the classical and state-of-the-art methods in image denosing)~\cite{BuadesColl06} which computes the local average color of the most resembling pixels using non-local means. 

Numerical results are given in Table~\ref{t:USC_SIPI} for sixteen standard natural grayscale images of size $256\times 256$ and $512\times 512$. As we can see the proposed scheme based on multi-scale lambda ($\mu\gets$multiscale, $\lambda\gets$constant) and the adaptive choice ($\mu\gets$multiscale, $\lambda\gets$constant) performs well for a variety of images. Even with the adaptive choice (ACTE, $\mu\gets$adaptive, $\lambda\gets$constant) outperforms previous variational models bases $H^{-1}$, $G_{p}$ and $Garbor$ norms and in some cases the $L^{\tau}$-$TV$ and the NL-Means schemes. For textured images such as $Mandrill$ and $Barbara$ we perform all variational models except the NL-means algorithm. We remark that the proposed CTE model does not aim to give state-of-the-art results for image denoising, and instead concentrates on demonstrating how our decomposition model can be harnessed for noise removal and edge detention. Figure~\ref{fig:USC_SIPI} shows decomposition for $Barbara$ into different components for some of the schemes in Table~\ref{t:USC_SIPI}. We note that for other schemes we are able to discompose the image in $u$ (smooth) and $f-u$ (random noise). In our case, we are able to obtain edge variable $w$ component. We also notice that for our model the random noise component $f-u$ is directly given by the $v$ component.

%%%%%--------------------------------------------------------------------------------
%%%%%--------------------------------------------------------------------------------

\subsection{Image denoising results}\label{ssec:denois}

\subsubsection{Error metrics}\label{sssec:metrics}
Currently there are no quantitative ways to evaluate different decomposition algorithms. In particular which smooth, texture and edge separation model are better is an open research question in image quality assessment. The proposed decomposition provides piecewise cartoon component which is obtained using a weighted TV regularization in an edge preserving way, see Figure~\ref{fig:cartoons} (last row). Hence, as a byproduct we obtain image denoising, with $u$ the denoised image and $v+w$ the `noise' part.  To compare the schemes quantitatively for the purpose of denoising, we utilize two commonly used error metrics in the image denoising literature, one is the classical peak signal to noise ratio (PSNR)~\cite{AK06}, and the other one is the mean structural similarity measure (MSSIM)~\cite{WB04}.
\begin{enumerate}
\item PSNR is given in decibels ($dB$). A difference of $0.5\,dB$ can be identified visually. Higher PSNR value indicates optimum denoising capability. 
\begin{eqnarray}\label{E:psnr}
\text{PSNR}(u) := 20*\log10{\left(\frac{u_{max}}{\sqrt{MSE}}\right)}\,dB,
\end{eqnarray} 
where $\text{MSE} = (mn)^{-1} \sum\sum (u-u_0)$, $m\times n$ denotes the image size, $u_{max}$ denotes the maximum value, for example in $8$-bit images $u_{max} = 255$.

\item MSSIM index is in the range $[0,1]$. The MSSIM value near one implies the optimal denoising capability of the scheme~\cite{WB04} and is mean value of
the SSIM metric. The SSIM is calculated between two windows $\omega_1$ and $\omega_2$ of common size $N\times N$, 
\begin{eqnarray*}\label{E:ssim}
\text{SSIM}(\omega_1, \omega_2) = \frac{(2\mu_{\omega_1}\mu_{\omega_2}+c_1)(2\sigma_{\omega_1\omega_2}+c_2)} {(\mu_{\omega_1}^2+\mu_{\omega_2}^2+c_1)(\sigma_{\omega_1}^2+\sigma_{\omega_2}^2+c_2)},
\end{eqnarray*}
where $\mu_{\omega_i}$ the average of $\omega_i$, $\sigma^2_{\omega_i}$ the variance of $\omega_i$, $\sigma_{\omega_1\omega_2}$ the covariance, $c_1,c_2$ stabilization parameters, see~\cite{WB04} for more details.
\end{enumerate}
\begin{remark} 
Note that the SSIM is a better error metric than PSNR as it provides a quantitative way of measuring the structural similarity of denoised image against the original noise-free image. To adapt PSNR and SSIM error metrics  to color images one can convert the color image into gray\footnote{For example, using MATLAB's command \texttt{rgb2gray}.} and then compute PSNR and MSSIM error metrics for the converted gray-scale image. In this paper, in order to compare with the scheme on~\cite{PVorotnikov12},  we compute the PSNR and MSSIM on each channel and use the average as a final value.
\end{remark} 

\subsubsection{Comparison with the previous model}\label{sssec:previous}

Table~\ref{table:psnr} compares proposed scheme with that of~\cite{PVorotnikov12} using the both PSNR (dB) and MSSIM error metrics average for Berkeley segmentation dataset (BSDS) images. We implemented both the schemes on the full Berkeley segmentation dataset of  $500$ noisy images for two different noise levels and obtained similar improvements. Figures~\ref{fig:CTE_Vs_Surya_Berkeley_1}-\ref{fig:CTE_Vs_Surya_Berkeley_2} show some example images corresponding to Table~\ref{table:psnr}. As can be seen from the zoomed in versions, the proposed CTE scheme provides cleaner cartoon components (denoised images, see Figures~\ref{fig:CTE_Vs_Surya_Berkeley_1}(b-e)-\ref{fig:CTE_Vs_Surya_Berkeley_2}(b,c)) in contrast to original coupled PDE model~\cite{PVorotnikov12} which either excessively blurs out details (Figure~\ref{fig:CTE_Vs_Surya_Berkeley_1}(f,g)) or keeps noisy regions (Figure~\ref{fig:CTE_Vs_Surya_Berkeley_2}(d)) in final results.
%%%%%------- Figure 12 --------------------------------------------------------------------------------------
\begin{figure*}
\centering
\subfigure[ \scriptsize{Input noise ($\sigma=20$), $(\mbox{PSNR},\mbox{MSSIM})=(24.1759,0.5130)$}]{\includegraphics[width=4.2cm,height=3.5cm]{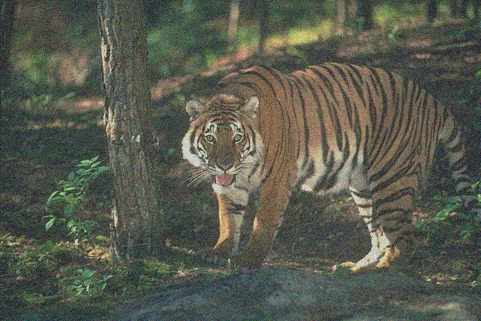}\includegraphics[width=3.5cm,height=3.5cm]{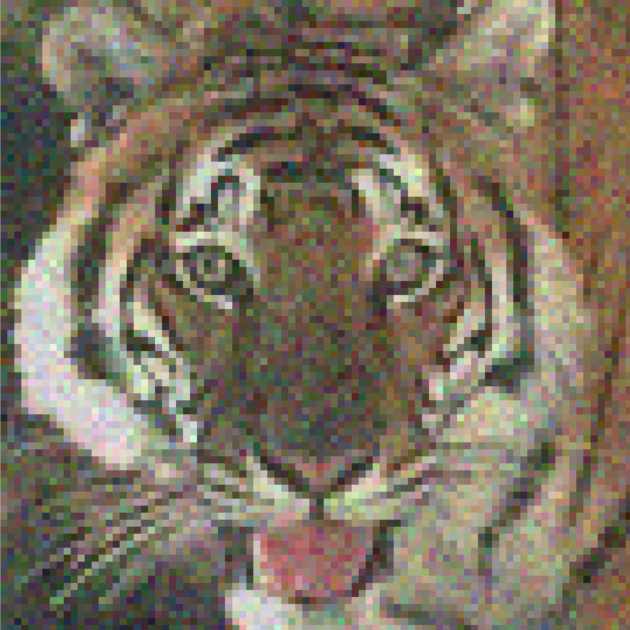}}\\
	\[\begin{array}{cc}
\includegraphics[width=4.2cm,height=3.5cm]{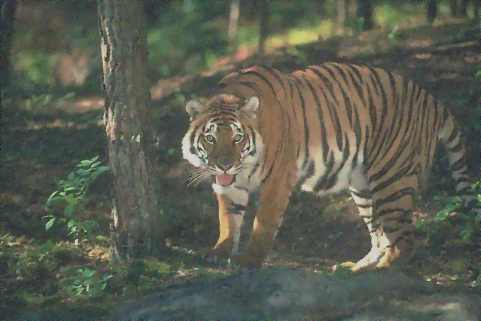}
\includegraphics[width=3.5cm,height=3.5cm]{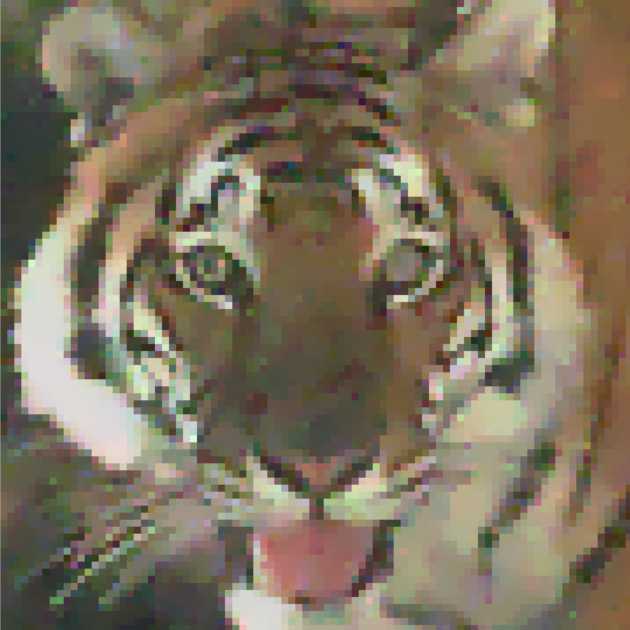}&
\includegraphics[width=4.2cm,height=3.5cm]{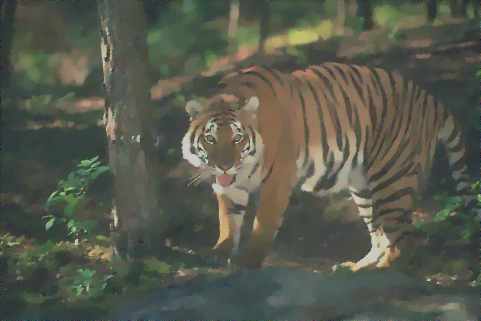}
\includegraphics[width=3.5cm,height=3.5cm]{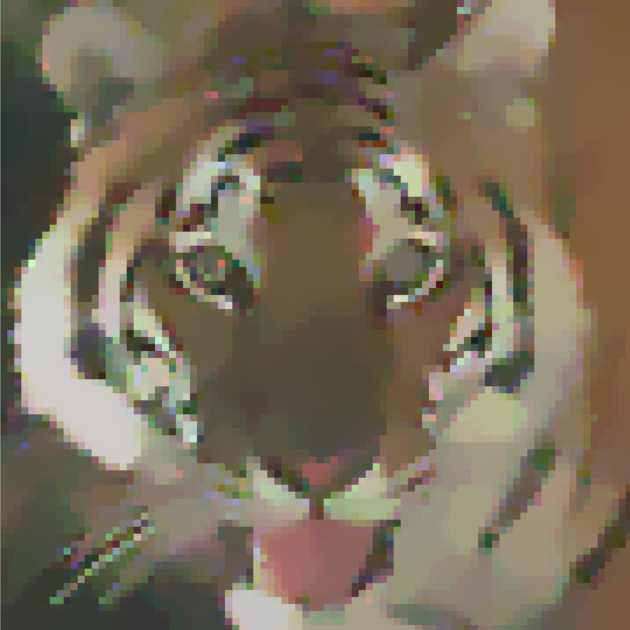}\\
\mbox{\scriptsize{(b) Proposed ($\mu\gets$adaptive, $\lambda\gets$constant ) }} &
\mbox{\scriptsize{(c) Proposed ($\mu\gets$adaptive, $\lambda\gets$constant) }} \\
\mbox{\scriptsize{$(\mbox{PSNR},\mbox{MSSIM})=(28.6484,0.7675)$}} &
\mbox{\scriptsize{$(\mbox{PSNR},\mbox{MSSIM})=(27.5146,0.6993)$}}\\
 
\includegraphics[width=4.2cm,height=3.5cm]{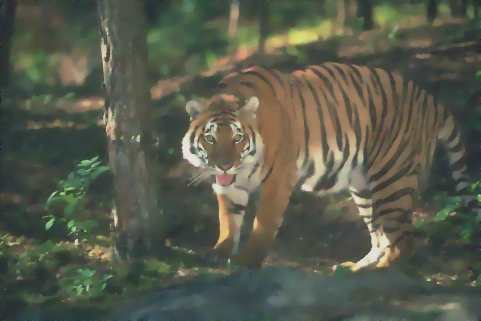}
\includegraphics[width=3.5cm,height=3.5cm]{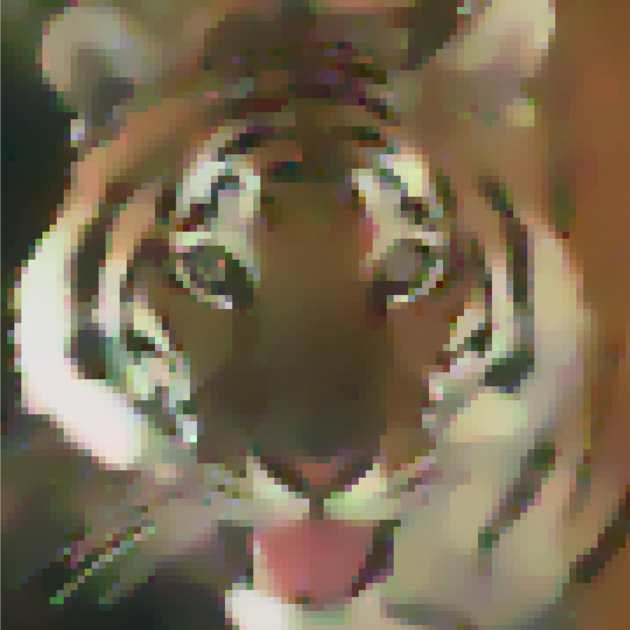}&
\includegraphics[width=4.2cm,height=3.5cm]{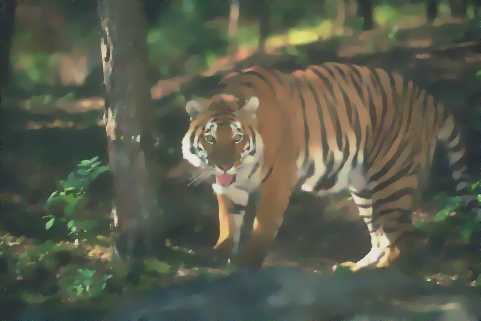}
\includegraphics[width=3.5cm,height=3.5cm]{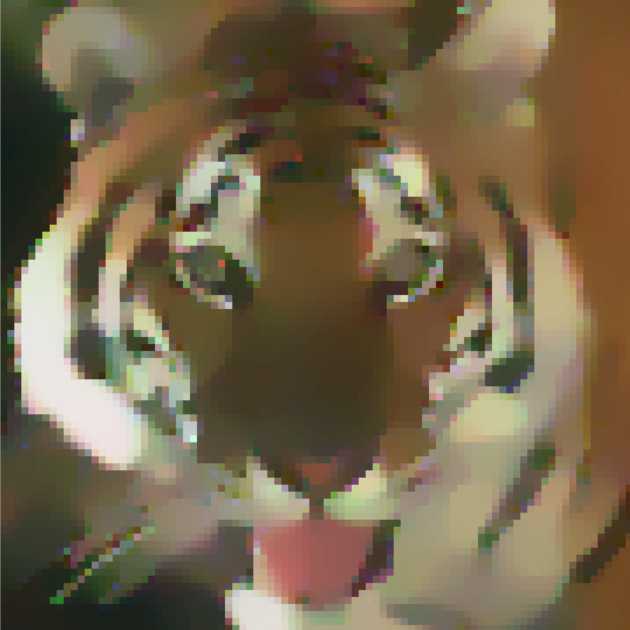}\\
\mbox{\scriptsize{(d) Proposed ($\mu\gets$constant, $\lambda\gets$adaptive ) }} &
\mbox{\scriptsize{(e) Proposed ($\mu\gets$constant, $\lambda\gets$adaptive) }} \\
\mbox{\scriptsize{$(\mbox{PSNR},\mbox{MSSIM})=(27.6574,0.7285)$}} &
\mbox{\scriptsize{$(\mbox{PSNR},\mbox{MSSIM})=(26.8005,0.6849)$}}\\

\includegraphics[width=4.2cm,height=3.5cm]{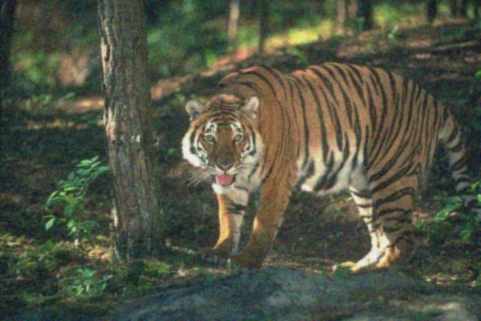}
\includegraphics[width=3.5cm,height=3.5cm]{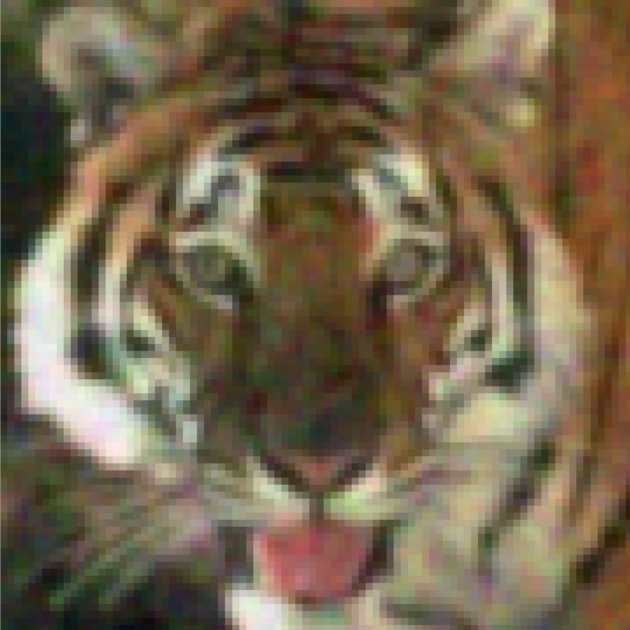}&
\includegraphics[width=4.2cm,height=3.5cm]{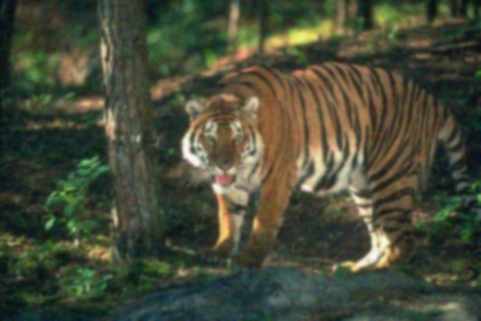}
\includegraphics[width=3.5cm,height=3.5cm]{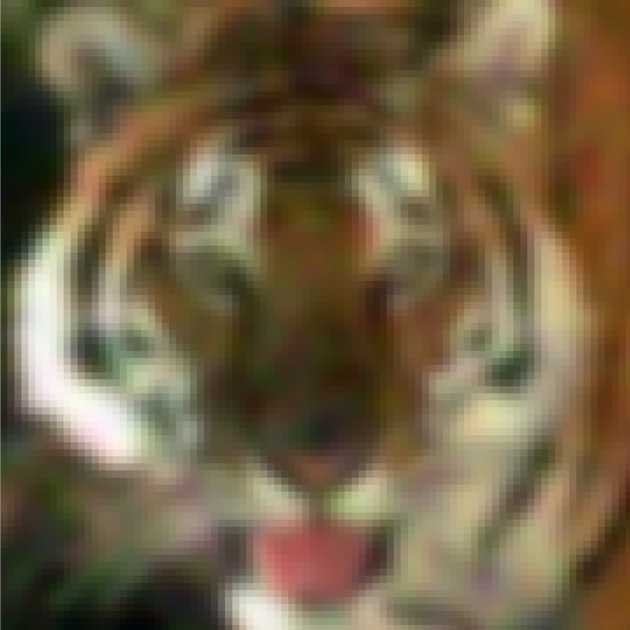}\\
\mbox{\scriptsize{(f) ~\cite{PVorotnikov12} (Adaptive)}} &
\mbox{\scriptsize{(g) ~\cite{PVorotnikov12}} (Adaptive)} \\
\mbox{\scriptsize{$(\mbox{PSNR},\mbox{MSSIM})=(26.1172,0.7540)$}} &
\mbox{\scriptsize{$(\mbox{PSNR},\mbox{MSSIM})=(24.8883,0.7057)$}}\\
 \end{array}\]
	\caption{(Color online) Better edge preserving image restoration results were obtained using our scheme in comparison with the original coupled PDE model~\cite{PVorotnikov12}.
The stopping parameters were chosen according to maximum MSSIM values, see Table~\ref{table:psnr} for the corresponding values.}\label{fig:CTE_Vs_Surya_Berkeley_1}
\end{figure*}
%%%%%------- Figure 13 --------------------------------------------------------------------------------------
%%%%%------- Figure 14 --------------------------------------------------------------------------------------
\begin{figure*}[t]
\centering
\[\begin{array}{cc}
\includegraphics[width=3.8cm,height=4.cm]{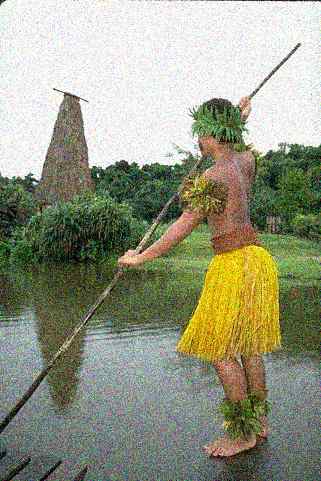}
\includegraphics[width=3.8cm,height=4.cm]{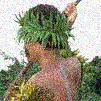}
&
\includegraphics[width=3.8cm,height=4.cm]{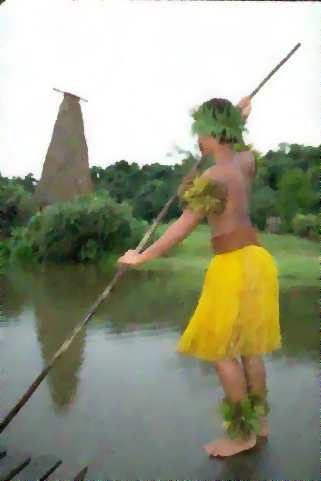}
\includegraphics[width=3.8cm,height=4.cm]{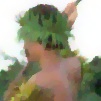}
\\
\mbox{\scriptsize{(a) Input noise ($\sigma=30$), $(\mbox{PSNR},\mbox{MSSIM})=(21.888,0.2445)$}} &
\mbox{\scriptsize{(b) Proposed ($\mu\gets$adaptive, $\lambda\gets$constant) }} \\
&
\mbox{\scriptsize{$(\mbox{PSNR},\mbox{MSSIM})=(28.3610,0.7718)$}}
\\ 
\includegraphics[width=3.8cm,height=4.cm]{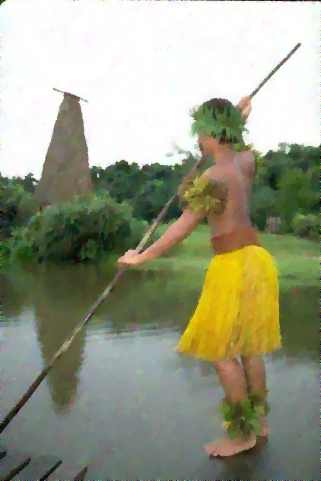}
\includegraphics[width=3.8cm,height=4.cm]{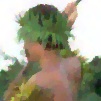}
&
\includegraphics[width=3.8cm,height=4.cm]{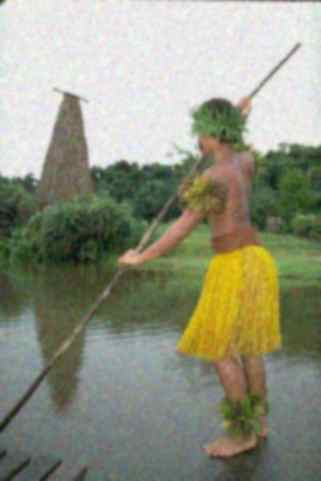}
\includegraphics[width=3.8cm,height=4.cm]{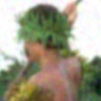}\\
\mbox{\scriptsize{(c) Proposed ($\mu\gets$constant, $\lambda\gets$adaptive) }} &
\mbox{\scriptsize{(d)~\cite{PVorotnikov12} ($\mu\gets$constant, $\lambda\gets$adaptive)}} \\
\mbox{\scriptsize{$(\mbox{PSNR},\mbox{MSSIM})=(29.2489,0.7636)$}} &
\mbox{\scriptsize{$(\mbox{PSNR},\mbox{MSSIM})=(26.4529,0.6849)$}}\\
\end{array}\]
\caption{(Color online) Better edge preserving image restoration results were obtained using our scheme in comparison with the original coupled PDE model~\cite{PVorotnikov12}.
The stopping parameters were chosen according to maximum MSSIM values, see Table~\ref{table:psnr} for the corresponding values.}\label{fig:CTE_Vs_Surya_Berkeley_2}
\end{figure*}
%%%%%------- Figure 14 --------------------------------------------------------------------------------------
%%%%%------- Table 1 --------------------------------------------------------------------------------------
\begin{table*}
\centering
\scriptsize{
\begin{tabular}{ccccccccc}
\hline
\multirow{2}{*}{Method} & $\mu\gets$constant &$\mu\gets$adaptive & Figure & Noise&  Stopping& Convergence &  \multirow{2}{*}{PSNR (dB)/MSSIM}\\
&$\lambda\gets$ adaptive&$\lambda\gets$ constant&Example&Level&Parameter&Time(s)&\\
\hline
\multirow{3}{*}{CTE (Proposed)} &&\multirow{3}{*}{$\surd$}&~\ref{fig:CTE_Vs_Surya_Berkeley_1} (b)&$\sigma=20$ & $\epsilon=10^{-4}$ & 30.47 & 28.7938/0.7723\\
                                               &&&~\ref{fig:CTE_Vs_Surya_Berkeley_1} (c)&$\sigma=20$ & $\epsilon=10^{-6}$ & 38.71& 27.3510/0.7147\\
                                                &&&~\ref{fig:CTE_Vs_Surya_Berkeley_2} (b)&$\sigma=30$ & $\epsilon=10^{-6}$ & 38.61& 26.5238/0.6774\\

\hline
\multirow{3}{*}{CTE (Proposed)} &\multirow{3}{*}{$\surd$}&&~\ref{fig:CTE_Vs_Surya_Berkeley_1} (d)&$\sigma=20$& $\epsilon=10^{-4}$ & 30.96 & 28.2464/0.7237\\
                                               &&&~\ref{fig:CTE_Vs_Surya_Berkeley_1} (e)&$\sigma=20$ & $\epsilon=10^{-6}$ & 32.20& 26.1983/0.6796\\
                                                &&&~\ref{fig:CTE_Vs_Surya_Berkeley_2} (c)&$\sigma=30$ & $\epsilon=10^{-6}$ & 32.33& 26.7451/0.6792\\
\hline
\multirow{3}{*}{\cite{PVorotnikov12}} &\multirow{3}{*}{$\surd$}&&~
\ref{fig:CTE_Vs_Surya_Berkeley_1} (f)&$\sigma=20$ & $\epsilon=10^{-4}$ & 5.07  & 27.3646/0.7178\\
                                                 &&&~  \ref{fig:CTE_Vs_Surya_Berkeley_1} (g)&$\sigma=20$ & $\epsilon=10^{-6}$ &  40.57 & 25.4476/0.6970\\                                                   &&&~\ref{fig:CTE_Vs_Surya_Berkeley_2} (d)&$\sigma=30$ & $\epsilon=10^{-6}$ & 40.30  & 25.0559/0.6543\\
\hline
\end{tabular}
}
\caption{Image denoising error metrics (average) comparison using original coupled PDE scheme~\cite{PVorotnikov12} for different noise levels and parameters on the Berkeley segmentation dataset (BSDS) $500$. Some examples corresponding to the  entries are shown above in Figure~\ref{fig:CTE_Vs_Surya_Berkeley_1} and Figure~\ref{fig:CTE_Vs_Surya_Berkeley_2}.}\label{table:psnr}
\end{table*}

%%%%%------- Table 1 --------------------------------------------------------------------------------------

\subsubsection{Comparison of adaptive fidelity parameters}\label{sssec:fidelity}

%%%%%------- Figure 15 --------------------------------------------------------------------------------------
\begin{figure*}
\centering
\[\begin{array}{cc}
\begin{minipage}[l]{5.0cm}
\includegraphics[width=3.4cm,height=3cm]{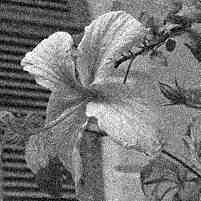}
\end{minipage}
&
\begin{minipage}[l]{5cm}
\includegraphics[width=6.5cm,height=5cm]{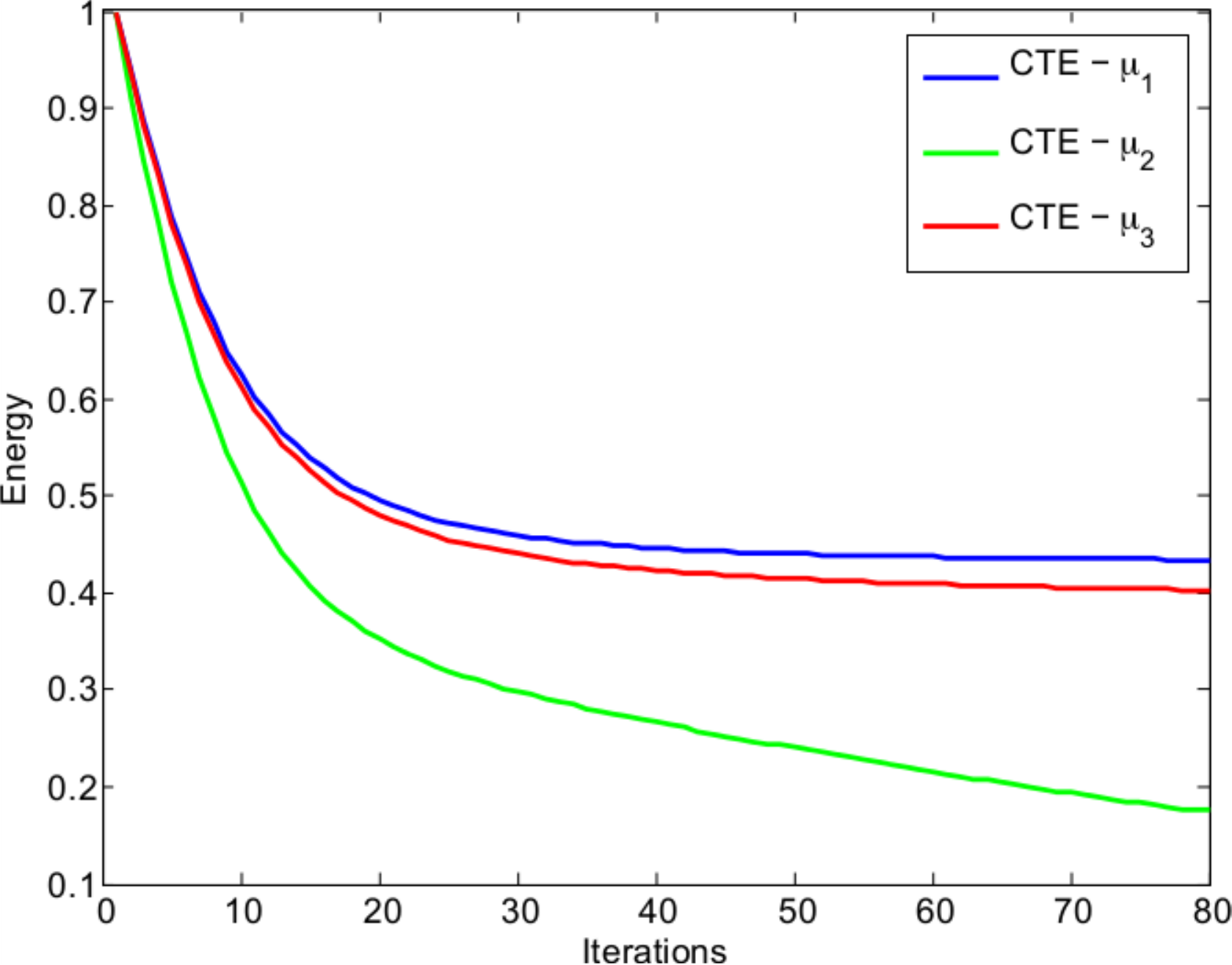}
\end{minipage}
\\
\hspace{-2cm}\mbox{\footnotesize{(a) Input noise ($\sigma=20$)}}
&
\hspace{2cm} \mbox{\footnotesize{(b) Energy vs Iteration for different $\mu$ choices}}
\end{array}\]
\[\begin{array}{c}
\mbox{\scriptsize{(b) $u,v,w,f-u$ components for our proposed scheme using $\mu_{1}$ with \mbox{$(\mbox{PSNR},\mbox{MSSIM})=(27.6623,0.8143)$}}}
\\
\includegraphics[width=3.9cm,height=3.5cm]{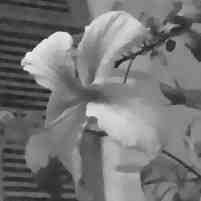}
\includegraphics[width=3.9cm,height=3.5cm]{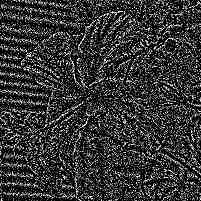}
\includegraphics[width=3.9cm,height=3.5cm]{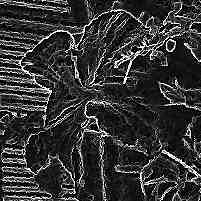}
\includegraphics[width=3.9cm,height=3.5cm]{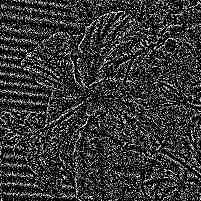}
\\
\mbox{\scriptsize{(c) $u,v,w,f-u$ components for our proposed scheme using  $\mu_{2}$ with \mbox{$(\mbox{PSNR},\mbox{MSSIM})=(24.6084,0.6823)$}}}
\\
\includegraphics[width=3.9cm,height=3.5cm]{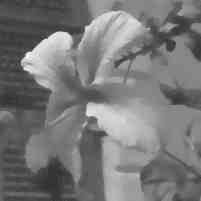}
\includegraphics[width=3.9cm,height=3.5cm]{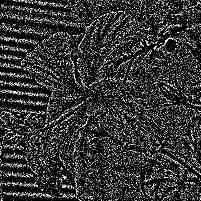}
\includegraphics[width=3.9cm,height=3.5cm]{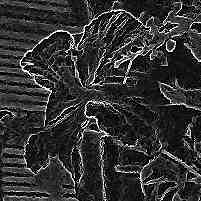}
\includegraphics[width=3.9cm,height=3.5cm]{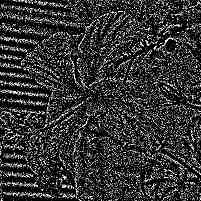}
\\
\mbox{\scriptsize{(d) $u,v,w,f-u$ components for our proposed scheme using $\mu_{3}$ with \mbox{$(\mbox{PSNR},\mbox{MSSIM})=(25.6414,0.7503)$}}}
\\
\includegraphics[width=3.9cm,height=3.5cm]{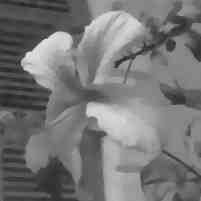}
\includegraphics[width=3.9cm,height=3.5cm]{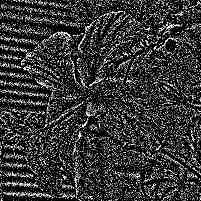}
\includegraphics[width=3.9cm,height=3.5cm]{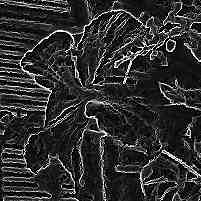}
\includegraphics[width=3.9cm,height=3.5cm]{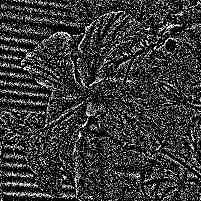}\\
\end{array}\]
\caption{Comparison of different $\mu$ functions computed using a real image corrupted with Gaussian noise level $\sigma=20$ and stopping parameter $\epsilon=10^{-4}$.
	(a) Original image.
	(b) Energy versus iteration for different adaptive $\mu$ functions based energy minimization scheme~(\ref{E:wTV}-\ref{E:constraint}).
	(c) $\mu_1$ based on local histograms  Eqn.~\eqref{E:CTElambda}.
	(d) $\mu_2$ based on the work of~\cite{PVorotnikov12} Eqn.~\eqref{E:VBSlambda}.
	(e) $\mu_3$ base on the work of~\cite{LMVese10} Eqn.~\eqref{E:IPOLlambda}.}\label{fig:mu_comp}
\end{figure*}
%%%%%------- Figure 15 --------------------------------------------------------------------------------------

Figure~\ref{fig:mu_comp} gives a comparison for the implementation of our CTE model ($\mu\gets$adaptive, $\lambda\gets$constant) by using adaptive fidelity parameters $\mu_{1}$ (proposed), $\mu_{2}$ (based on the work of~\cite{PVorotnikov12}) and $\mu_{3}$ (studied in~\cite{LMVese10}) for a close-up $flower$ image corrupted with Gaussian noise level $\sigma=20$. As shown different decomposition levels based on smooth + random noise + edges components are given according to the adaptive parameter $\mu$. In terms of PSNR and MSSIM error metrics the $\mu_{1}$ choice improves better the denoising result. Note the similar convergence property of the energy value between our $\mu_{1}$ choice and the $\mu_{3}$ adaptive parameter.

\subsubsection{Brain MRI image decomposition}\label{sssec:brain}

%%%%%------- Figure 16 --------------------------------------------------------------------------------------
%%%%% Brain
\begin{figure*}
\centering
\includegraphics[width=3.2cm,height=3.2cm]{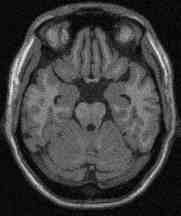}
\includegraphics[width=3.2cm,height=3.2cm]{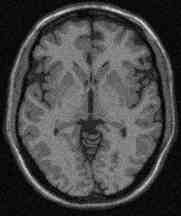}
\includegraphics[width=3.2cm,height=3.2cm]{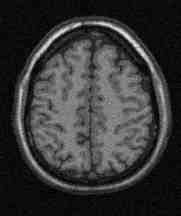}

\subfigure[Noisy brain MRI images]
{\includegraphics[width=4.cm,height=4.cm]{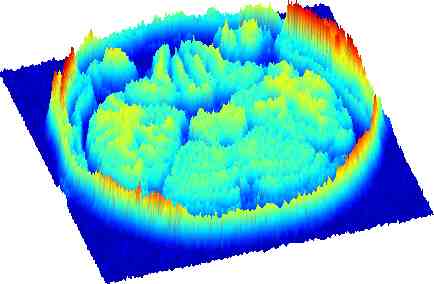}
\includegraphics[width=4.cm,height=4.cm]{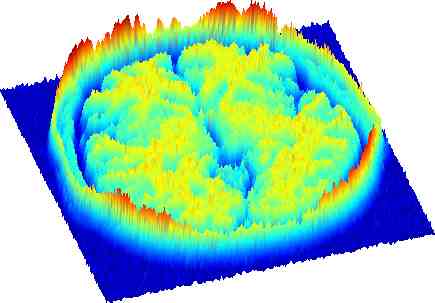}
\includegraphics[width=4.cm,height=4.cm]{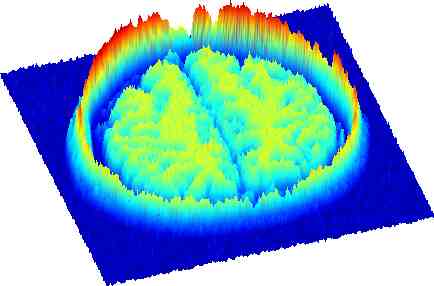}}\\

\includegraphics[width=3.2cm,height=3.2cm]{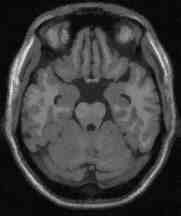}
\includegraphics[width=3.2cm,height=3.2cm]{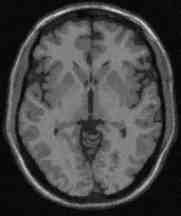}
\includegraphics[width=3.2cm,height=3.2cm]{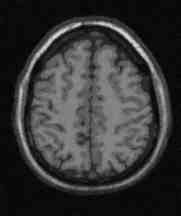}

\subfigure[Cartoon components ($u$)]{
\includegraphics[width=4.3cm,height=4.3cm]{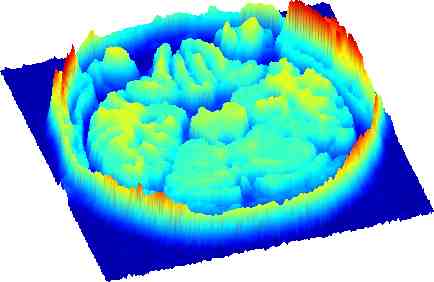}
\includegraphics[width=4.3cm,height=4.3cm]{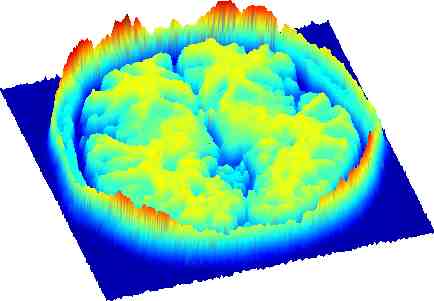}
\includegraphics[width=4.3cm,height=4.3cm]{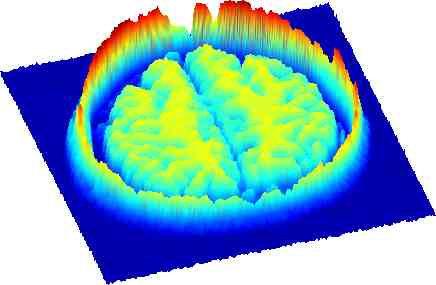}}

\subfigure[Edges components ($w$)]{
\includegraphics[width=3.2cm,height=3.2cm]{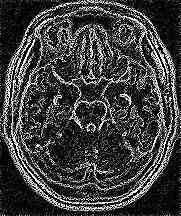}
\includegraphics[width=3.2cm,height=3.2cm]{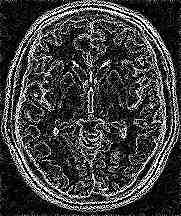}
\includegraphics[width=3.2cm,height=3.2cm]{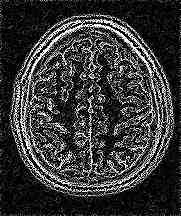}}
\caption{(Color online) $u$ \& $w$ components using our proposed model with  ($\mu\gets$adaptive, $\lambda\gets$constant) and stopping parameter $\epsilon=10^{-4}$ for different Brain MR images (slice: $50$, $70$ and $120$ respectively).} 	
\label{fig:brain}
\end{figure*}
%%%%%------- Figure 16 --------------------------------------------------------------------------------------

Figure~\ref{fig:brain} shows input Brain MR images from the Simulated Brain Database and its corresponding $(u,v,w)$ functions for slices $50$, $70$ and $120$ together with their corresponding inputs. Lack of sharp edges in the denoised Brain MRI images (Third-Fourth rows) can be attributed to the spatial relaxation based for the coupling PDE scheme. Spatial smoothing based regularization introduces a slight blur on each edge map $w$ (Fifth row). Although, we gradually reduce the smoothing results by varying the stopping parameter $\epsilon$ the strong discontinuities are preserved well and noise is removed effectively within regions.
%%%%%--------------------------------------------------------------------------------

\subsection{Multi-scale decomposition}\label{ssec:mult}
%%%%%------- Figure 13 --------------------------------------------------------------------------------------
\begin{figure*}
%\centering
\[\begin{array}{c}
\mbox{\scriptsize{Multiscale cartoon and texture extraction using our proposed CTE method}}
\end{array}\]
\[\begin{array}{ccccc}
\includegraphics[width=2.75cm,height=2.75cm]{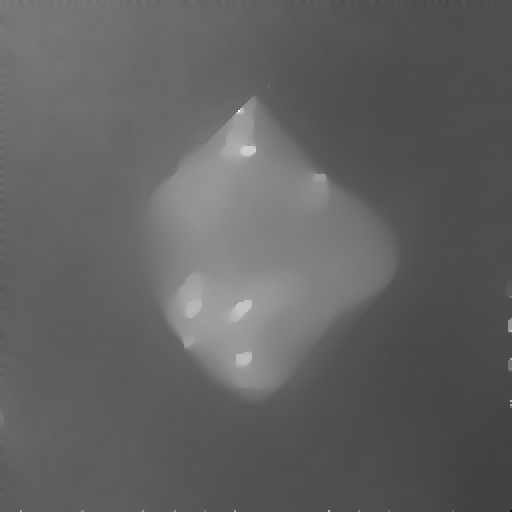}&  
\includegraphics[width=2.75cm,height=2.75cm]{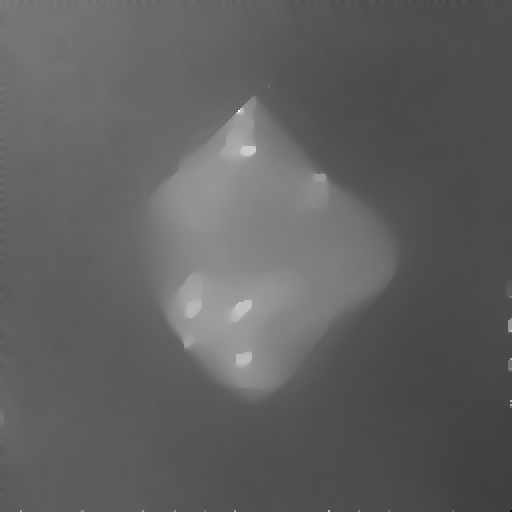}& 
\includegraphics[width=2.75cm,height=2.75cm]{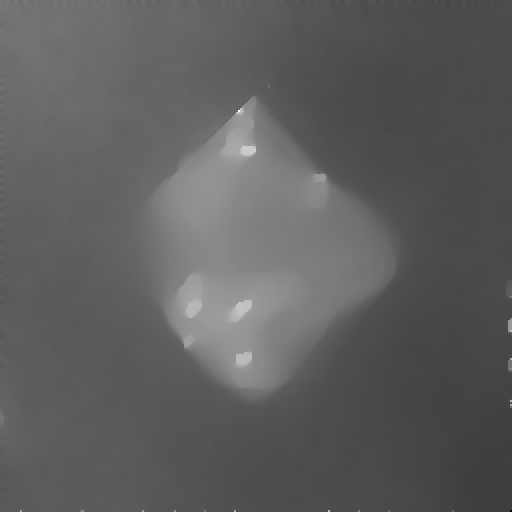}&
\includegraphics[width=2.75cm,height=2.75cm]{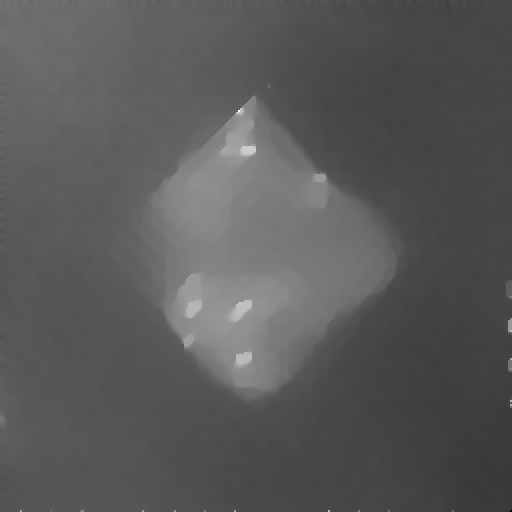}&
\includegraphics[width=2.75cm,height=2.75cm]{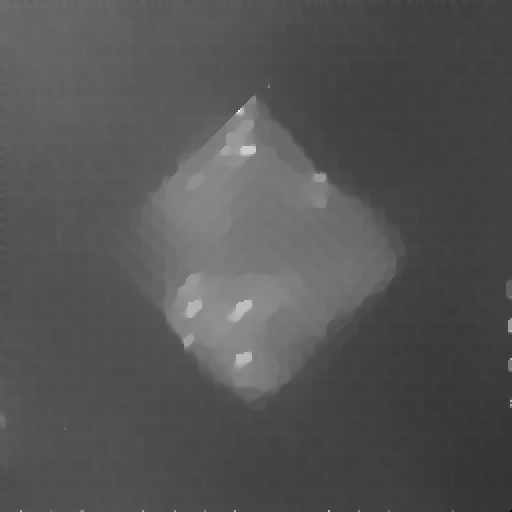}\\
\mbox{\scriptsize{(a) $u_{0}$}} &
\mbox{\scriptsize{(b) $\sum_{i=0}^{1}u_{i}$}} &
\mbox{\scriptsize{(c) $\sum_{i=0}^{2}u_{i}$}} &
\mbox{\scriptsize{(d) $\sum_{i=0}^{3}u_{i}$}} &
\mbox{\scriptsize{(f) $\sum_{i=0}^{4}u_{i}$}} \\
\includegraphics[width=2.75cm,height=2.75cm]{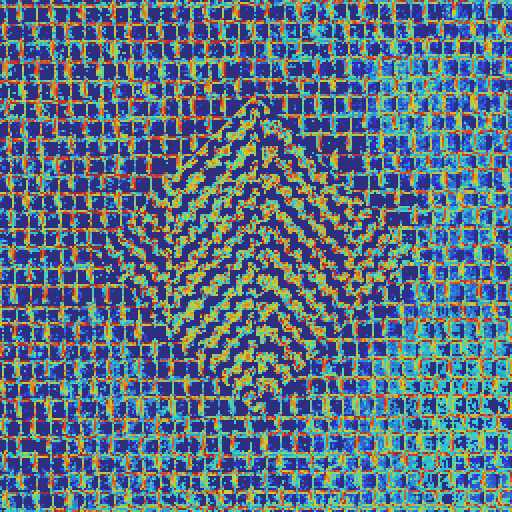}&  
\includegraphics[width=2.75cm,height=2.75cm]{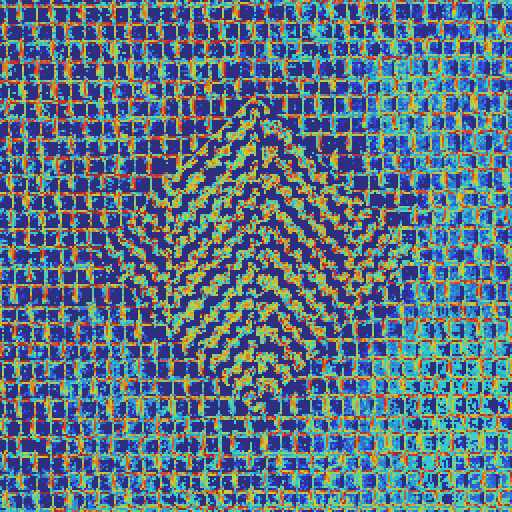}& 
\includegraphics[width=2.75cm,height=2.75cm]{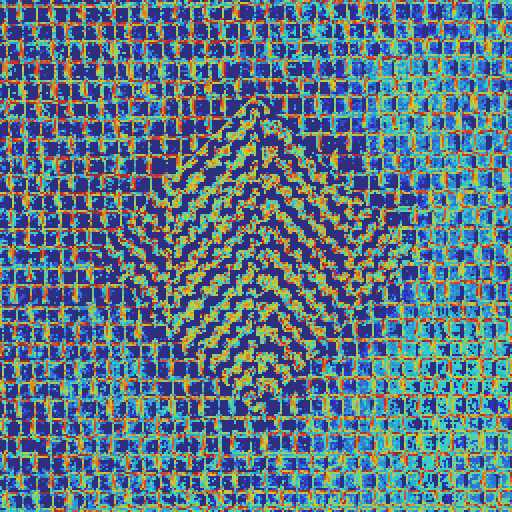}&
\includegraphics[width=2.75cm,height=2.75cm]{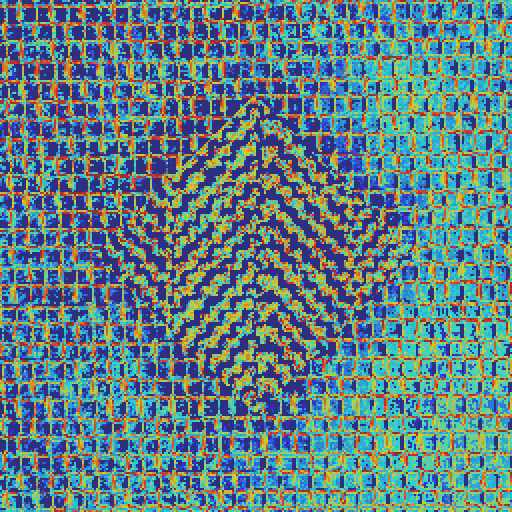}&
\includegraphics[width=2.75cm,height=2.75cm]{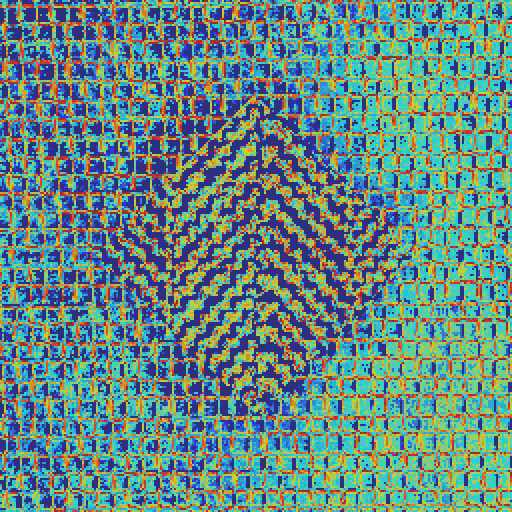}\\
\mbox{\scriptsize{(a) $v_{0}$}} &
\mbox{\scriptsize{(b) $\sum_{i=0}^{1}v_{i}$}} &
\mbox{\scriptsize{(c) $\sum_{i=0}^{2}v_{i}$}} &
\mbox{\scriptsize{(d) $\sum_{i=0}^{3}v_{i}$}} &
\mbox{\scriptsize{(f) $\sum_{i=0}^{4}v_{i}$}} \\
\end{array}\]
\[\begin{array}{c}
\mbox{\scriptsize{Multiscale cartoon and texture extraction using the hierarchical decomposition of Tang and He~\cite{THe13}}}
\end{array}\]
\[\begin{array}{ccccc}
\includegraphics[width=2.75cm,height=2.75cm]{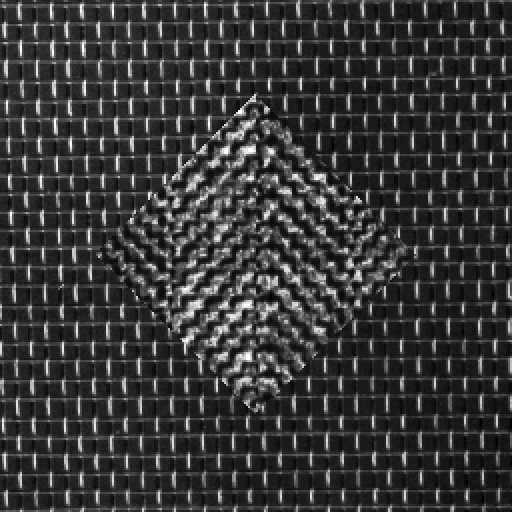}&  
\includegraphics[width=2.75cm,height=2.75cm]{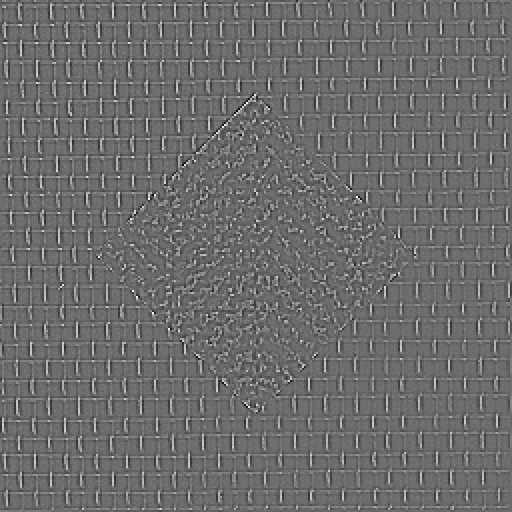}& 
\includegraphics[width=2.75cm,height=2.75cm]{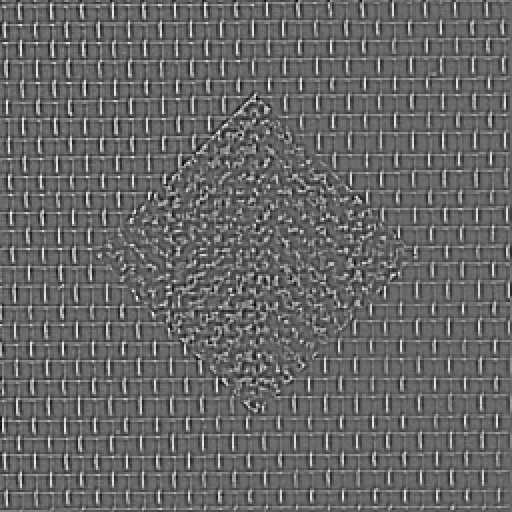}&
\includegraphics[width=2.75cm,height=2.75cm]{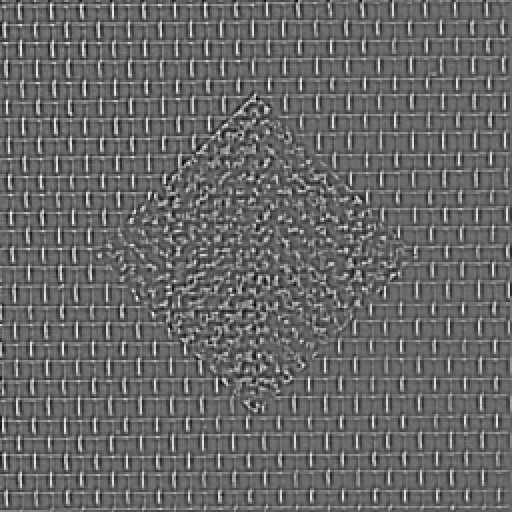}&
\includegraphics[width=2.75cm,height=2.75cm]{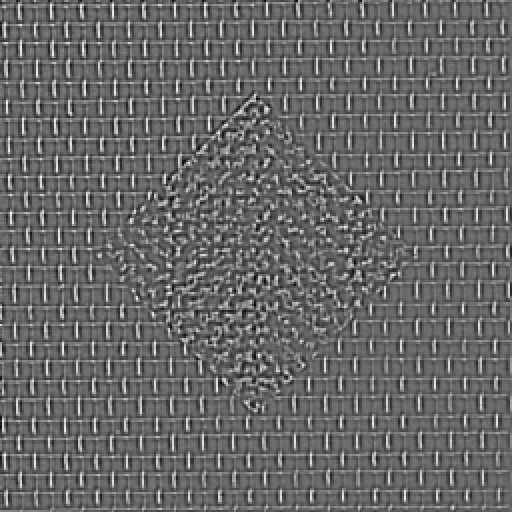}\\
\mbox{\scriptsize{(a) $u_{0}$}} &
\mbox{\scriptsize{(b) $\sum_{i=0}^{1}u_{i}$}} &
\mbox{\scriptsize{(c) $\sum_{i=0}^{2}u_{i}$}} &
\mbox{\scriptsize{(d) $\sum_{i=0}^{3}u_{i}$}} &
\mbox{\scriptsize{(f) $\sum_{i=0}^{4}u_{i}$}} \\
\includegraphics[width=2.75cm,height=2.75cm]{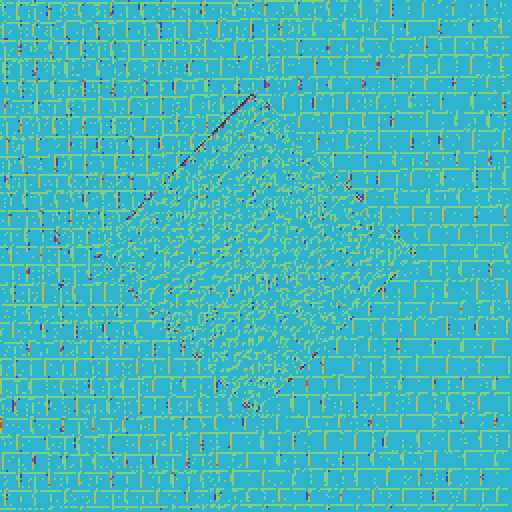}&  
\includegraphics[width=2.75cm,height=2.75cm]{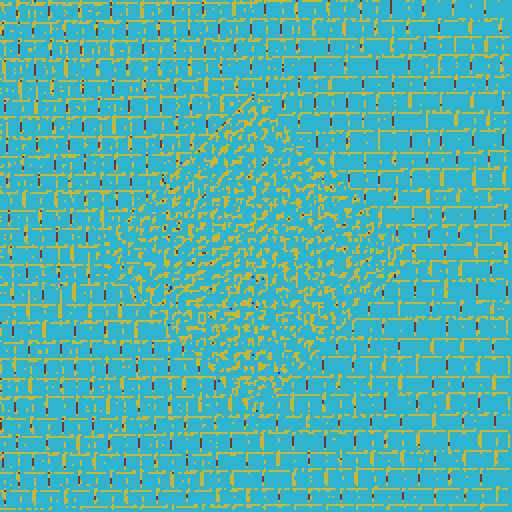}& 
\includegraphics[width=2.75cm,height=2.75cm]{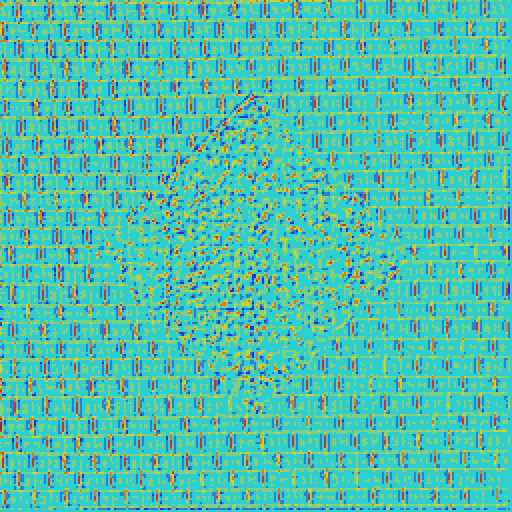}&
\includegraphics[width=2.75cm,height=2.75cm]{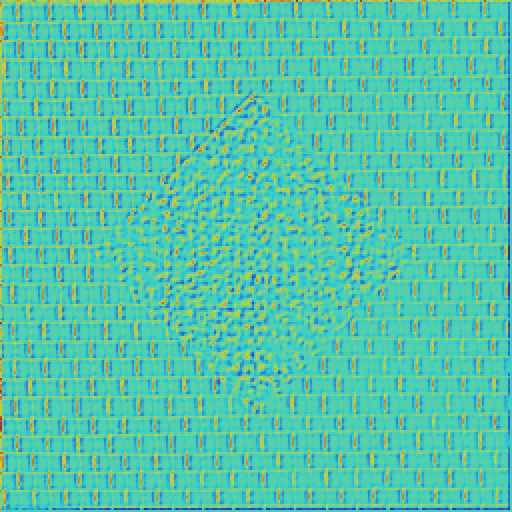}&
\includegraphics[width=2.75cm,height=2.75cm]{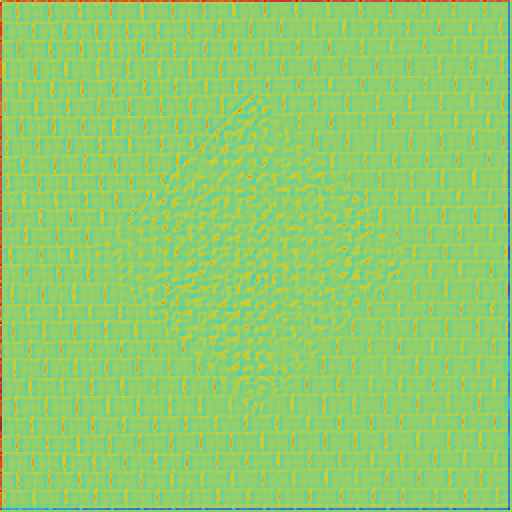}\\
\mbox{\scriptsize{(a) $v_{0}$}} &
\mbox{\scriptsize{(b) $\sum_{i=0}^{1}v_{i}$}} &
\mbox{\scriptsize{(c) $\sum_{i=0}^{2}v_{i}$}} &
\mbox{\scriptsize{(d) $\sum_{i=0}^{3}v_{i}$}} &
\mbox{\scriptsize{(f) $\sum_{i=0}^{4}v_{i}$}} \\
\end{array}\]
\caption{(Color online) Multiscale cartoon and texture decomposition of a synthetic image for $5$ steps by implementing our proposed approach (top two rows) (stooping parameter $\epsilon=10^{-4}$) and the hierarchical method from~\cite{THe13} (bottom two rows).}	
\label{fig:multiscale}
\end{figure*}
%%%%%------- Figure 13 --------------------------------------------------------------------------------------
Following~\cite{THe13} we can make the weighted TV model with multi-scale parameter $\lambda$. Note that this is slightly different from the original multi scale usage in~\cite{TadmorNezzar04}, here we use it in the constraint Eqn.~\eqref{E:constraint}. Let us briefly recall the model proposed in~\cite{THe13} where the texture component $v$ is modeled using $G_p$ norm. That is, the minimization is carried out for both $u,v$,
\begin{eqnarray*}
\inf_{u,v}\Bigg\{E(f,\lambda;u,v)
=|u|_{BV(\Omega)}+\mu\|f-u-v\|^{2}_{L^{2}(\Omega)}+\lambda\|v\|_{G_{p}(\Omega)}\Bigg\},
\end{eqnarray*}
with $G_{p}$ consisting of all distributions which can be written as, 
\begin{equation*}
v=\partial_{x}g_{1}+\partial_{y}g_{2}=div(\vec{g}),\quad \mbox{$\vec{g}\in L^{p}(\Omega,\mathbb{R}^{2})$}.
\end{equation*}
The $G_p$ norm is defined as,
\begin{equation*}
\|v\|_{G_{p}(\Omega)}=\inf\left\{\|\vec{g}\|_{L^{p}(\Omega)}\,|\, v=div(\vec{g}),\,\vec{g}\in L^{p}(\Omega,\mathbb{R}^{2})\right\}.
\end{equation*}
We utilize the same modeling for the texture component $v$ in our splitting step of the proposed weighted TV model (see Eqn.~\eqref{E:bresuv}),
\begin{eqnarray}\label{E:wTVmulti}
\min\limits_{u,v}\Bigg\{\int_{\Omega}|\nabla u|\,dx  
+\mu\|f-u-v\|^{2}_{L^{2}(\Omega)}+\lambda\|v\|_{G_{p}(\Omega)}\Bigg\}
\end{eqnarray}
Finally, we compare multiscale version~\eqref{E:wTVmulti} with the multi-scale TV decomposition results of~\cite{THe13}. 
Figure~\ref{fig:multiscale} shows the comparison result on a synthetic image for $5$ steps and our scheme retains the cartoon component clearly than the Tang and He~\cite{THe13}. Moreover, the texture components show a progressive capture of small scale oscillations.

%%%%----------------------------------------------------------------------------------
%%%%%--------------------------------------------------------------------------------
\section{Conclusion}\label{sec:disc}

We have presented a new image decomposition model coupling a variational and PDE via a weighted total variation regularization algorithm (CTE model). Our main contribution is twofold: 
\begin{itemize}
\item[1)] The proposed decomposition model gets an image decomposition into its cartoon, texture and edge components with fixed, adaptive and multicale parameters for the $L^{1}$-fitting term by utilizing a local histogram along with a diffusion equation. Extensive experiments using  a fast dual minimization based splitting implementation indicates that the proposed scheme is useful for edge preserving image decomposition on real and noisy images. Comparative results indicate the proposed scheme is useful in denoising natural images as well in multi scale image decomposition. 
\item[2)] We fashioned a new well posed scheme to transform the non-linear problem to a set of subproblems that are much easier to solve quickly via an optimization technique and a linear diffusion PDE solution.
\end{itemize}

It is well known that there is no unique decomposition of an image into three scales: cartoon, texture and edges. At close range, texture and edges may be just a set of well-structured objects that can be kept in the cartoon part or the textural part according to a scale (equivalently iteration) parameter.  We proposed an adaptive choice computed on local windows based histogram information as well as a multi-scale adaptation weight parameter for the $L^{1}$-fitting term, allowing us to get satisfactory results.  There are other adaptive parameter choices for constructing a general decomposition model and we believe the proposed method is general in the sense that other regularizers (instead of TV) can be utilized in computing the cartoon component. We also remark that our denoising results are proof-of-concept for the proposed CTE model and we do not claim it outperforms state-of-the-art methods such as the nonlocal means~\cite{BuadesColl06} or BM3D~\cite{DEgiazarian07} which are specifically designed for optimal denoising results. 

%%%%------------------------------------------------------------------------------------
%%%---------------------------------------------------------------------------------------
\bibliographystyle{plain}
\bibliography{texdecomp}  
%%%%--------------------------------------------------------------------------------
\end{document}